\documentclass[openany,12pt]{book}  
\usepackage{amsmath}
\usepackage{amsthm}
\usepackage{amssymb}
\usepackage{geometry}
\usepackage[T1]{fontenc}
\usepackage[utf8]{inputenc}
\usepackage{newtxtext,newtxmath}
\usepackage{algorithm}
\usepackage{caption} 
\usepackage{algpseudocode}
\usepackage{graphicx} 
\usepackage{url}
\usepackage{tabularx}
\usepackage{booktabs}
\usepackage{tikz}
\usetikzlibrary{arrows.meta}

\usepackage[authoryear,round]{natbib}
\usepackage{hyperref}
\hypersetup{
  unicode,
  psdextra,
  colorlinks=true,
  citecolor=blue,
  linkcolor=blue,
  urlcolor=blue
}

\newtheorem{assumption}{Assumption}[section]
\newtheorem{definition}{Definition}[section]
\newtheorem{proposition}{Proposition}[section]
\newtheorem{theorem}{Theorem}[section]
\newtheorem{corollary}{Corollary}[section]

\newtheorem{example}{Example}[section]

\newcommand{\indep}{\mathop{\hspace{0.5mm} \perp\!\!\!\!\!\!\perp\hspace{0.5mm} }}
\usepackage{longtable}

\usepackage{makeidx}
\makeindex

\DeclareMathOperator*{\argmin}{arg\,min}
\DeclareMathOperator*{\argmax}{arg\,max}
\title{\textbf{Statistical Inference via Generative Models:\\[4mm]Flow Matching and Causal Inference}\vspace{100mm}}

\author{Shinto Eguchi}

\date{}

\begin{document}

\maketitle

\chapter*{Preface}
\addcontentsline{toc}{chapter}{Preface}

In recent years, generative AI has advanced with extraordinary speed, producing striking results in image synthesis, language generation, and related domains.
From the perspective of statistics, however, these developments still often appear under the sign of opacity.
The predictive performance may be impressive, yet the underlying mechanism is frequently difficult to interpret, analyze, or trust.
It is therefore unsurprising that many statisticians remain cautious about using such methods for statistical inference, model diagnosis, causal analysis, or experimental design.
This caution is not a symptom of conservatism.
It arises from a legitimate intellectual demand.
Statistics is concerned not merely with the reproduction of plausible observations, but with the disciplined clarification of what can be identified under explicit assumptions, and with the assessment of how accurately such quantities can be inferred.
The aim of this book is to reinterpret generative AI in the language of statistics, using \textbf{flow matching}\index{flow matching} as a concrete and timely focal point.

Its central message is straightforward.
As long as generative models are regarded simply as devices for producing visually or empirically convincing data, statisticians will remain outside observers.
Once they are understood instead as methods for the \textbf{nonparametric learning} of high-dimensional probability distributions\index{nonparametric learning}, an entirely different picture emerges.
If conditional distributions can be learned, then missing-data imputation becomes a problem of principled sampling.
If counterfactual distributions can be generated, then intervention effects become estimable objects.
If the temporal evolution of a distribution can be represented, then dynamic structure itself becomes amenable to analysis.
In this sense, generative modeling offers not merely a new class of algorithms, but a new computational language through which central statistical questions may be formulated and addressed in high dimensions.

Although flow matching is typically implemented with neural networks, this book does not treat neural networks as the main subject.
They serve only as flexible approximators of a velocity field, or more generally a vector field, $v_t(x)$.
What is fundamental is the representation of distributional change as a conservation law coupled with a dynamical system, expressed through the \textbf{continuity equation}\index{continuity equation}:
\begin{equation}
\partial_t \rho_t(x) + \nabla \cdot \bigl(\rho_t(x)\, v_t(x)\bigr) = 0 .
\end{equation}
From a historical point of view, one may recall Charles Stein's characterization of probability distributions through duality with differential operators, embodied in the \textbf{Stein identity}\index{Stein identity}.
That perspective remains deeply relevant to modern generative modeling, albeit in transformed form.
Score matching describes a distribution through the static geometry of the score field $\nabla \log \rho(x)$.
Flow matching extends this viewpoint by replacing the static target with a general \emph{time-dependent} vector field $v_t(x)$ and by learning not only the distribution itself but also the path along which it is transported.

This extension turns the phrase ``transport of high-dimensional distributions'' from an abstract metaphor into a concrete computational object.
The broader purpose of the present volume is to place generative models firmly within the domain of statistical inference.
In particular, we study how generative models can be used to estimate nuisance components while preserving inferential validity through orthogonalization and cross-fitting, in the framework of \textbf{double/debiased machine learning}\index{double machine learning} (DDML).
For statistics, the mere ability to generate samples is never sufficient.
What is required is \textbf{generation that supports inference}.
To attain this, one must design procedures in which the first-order effect of generative error is removed, or at least sharply attenuated, by principled statistical construction.

The exposition begins with two foundational themes---score matching and optimal transport---and develops flow matching from a minimal set of ideas: the continuity equation and vector fields.
It then moves toward structured statistical problems, including censoring and missingness in survival analysis, and further toward the estimation of counterfactual distributions in causal inference.
Generative models are, I believe, most valuable not where statistical structure is absent, but precisely where such structure is rich and explicit.
If this book succeeds in helping readers see generative AI not as a black box external to statistics, but as a new methodology for statistical inference itself, it will have achieved its purpose.
I also hope that it may serve as a point of departure for further research at the intersection of statistics, machine learning, and dynamical modeling.

\vspace{20mm}
\hspace{80mm}March 2026,

\medskip
\hspace{80mm}Shinto Eguchi

\hspace{80mm}The Institute of Statistical Mathematics

\hspace{80mm}Email: \texttt{eguchi@ism.ac.jp}

\tableofcontents

\chapter{Generative Models and Statistics: Motivation and Overview}
\index{generative model}
\index{flow matching}
\label{chap:intro}

A generative model is a technology that learns a high-dimensional distribution behind data and enables sampling from it.
The purpose of this chapter is to reinterpret generative models in the vocabulary of statistics and to clarify why \emph{flow matching} occupies a natural position for statistical inference.
After reading this chapter, you should be able to explain the following points:
\begin{itemize}
\item A generative model can be re-defined as an estimator of a distribution (a probability measure).
\item In high dimensions, sampling can be a more important computational principle than explicit density evaluation.
\item When distributional deformation is treated as a continuous-time motion (an ODE), the continuity equation appears naturally.
\end{itemize}

\section{What is a Generative Model?}
\label{sec:why_generative}

For example, an image with $28\times 28$ pixels can be represented as a high-dimensional vector $X\in\mathbb{R}^{784}$.
In such settings, explicitly estimating a density $p(x)$ and evaluating $\log p(x)$ often becomes a computational bottleneck.
A generative model introduces a reference distribution $Z\sim\pi(z)$ (e.g., standard Gaussian) and a mapping (generator) $G_\theta$ such that
\begin{align}\label{transform}
X = G_\theta(Z).
\end{align}
Instead of directly manipulating the density formula, it learns a \emph{sampleable representation} of the distribution.
In statistical terms, \eqref{transform} is a transformation model in a broad sense\index{transformation model},
and the resulting distribution can be viewed as the pushforward\index{pushforward} $P_\theta=(G_\theta)_\#\pi$
(see \eqref{oshidashi} below).

{Sampleability greatly expands statistical operations.}
Once we can generate i.i.d.\ samples from $X\sim P_\theta$, many computations become feasible via Monte Carlo.
That is, for any integrable function $\phi$,
\[
\mathbb{E}_{P_\theta}[\phi(X)]
\]
can be approximated by the empirical average of generated samples.
The density of the transformed distribution $P_\theta$ is often complicated, and likelihood evaluation can be expensive.
Moreover, if conditional generation is available, missing-data imputation and counterfactual generation can be described within the same framework.
For covariates $X$ and a response $Y$, using a latent variable $\varepsilon\sim\pi$,
\[
Y = G_\theta(X,\varepsilon),
\]
we obtain a sampleable representation of the conditional distribution of $Y\,|\,X$.
Whereas classical regression mainly targets the mean structure $\mathbb{E}[Y\,|\,X]$, a generative model can learn the entire conditional distribution,
including variance, multimodality, and tail behavior.

\paragraph{From distribution estimation to functional estimation}
The ultimate goal of statistics is often not the distribution itself, but estimation of a functional $\psi(P)$
(e.g., means, quantiles, causal effects, policy values).
A generative model provides an approximate distribution $\widehat P_\theta$, from which we can compute $\psi(\widehat P_\theta)$ by Monte Carlo.
A key caveat is that the estimation error of $\widehat P_\theta$ can be amplified through the sensitivity of $\psi$,
as quantified by the influence function\index{influence function}.
This issue can be organized in the language of semiparametric inference;
in Chapter~6 we describe a principled route to reconcile generation and inference via orthogonalization and cross-fitting.

\paragraph{A note on terminology}
In a measure-theoretic treatment, a probability distribution (measure) $P$ and a density $p$ should be distinguished.
Following common practice in statistics and machine learning, however, we sometimes refer to a density $p$ as a ``distribution''
when no confusion arises from the context.

\section{Generative Models as Distribution Calibration}
\label{sec:paradigm_shift}

\paragraph{Model misspecification as a distributional distortion}
A central goal of statistical modeling is to summarize a phenomenon by interpretable parameters and to enable estimation and testing.
In real data, however, a parametric model often captures only the ``skeleton'' of a phenomenon,
while much of the shape and distortion of the distribution is governed by infinite-dimensional degrees of freedom.
It is therefore natural to view misspecification not merely as a parameter mismatch, but as an infinite-dimensional deviation---a distortion of the distribution itself.

\paragraph{Base model plus residual, and calibration}
In this book we adopt a semiparametric viewpoint: we decompose the data-generating distribution into
(i) a parametric base model that we wish to preserve and interpret (main effects, causal effects, etc.), and
(ii) a nonparametric component (nuisance)\index{nuisance} representing deviations from the base model.
The key is not simply to ``absorb'' nuisance flexibly, but to design the learning procedure so that it does not destroy target inference.

This viewpoint can be expressed succinctly as follows.
Let $p_\theta$ be the base model, and represent the observed distribution by an additional transformation $T_\eta$:
\[
X = T_\eta(X^{(0)}),\qquad X^{(0)}\sim p_\theta.
\]
That is, the observed variable $X$ is modeled as an unknown transformation of a base latent variable $X^{(0)}$.
Here $\theta$ plays the role of an interpretable parameter, while $\eta$ accounts for misspecification and complexity.
Throughout the book, the term \emph{calibration} means not a black-box correction,
but an explicit modeling of distributional distortion as an additional transformation that can be learned from data.

\begin{quote}
\small
\noindent\textbf{Design principles compatible with inference (preview).}
Learning a nuisance component with high flexibility can invalidate inference if one relies on naive plug-in estimation.
A powerful remedy in semiparametric inference is to construct estimating equations that are first-order insensitive
to nuisance perturbations (orthogonality)\index{orthogonality}, and to further reduce overfitting bias via cross-fitting\index{cross-fitting}.
This principle is systematized as double/debiased machine learning (DDML)\index{double machine learning} \citep{chernozhukov2018ddml}.
In Chapters~5--7, we explain how to use generative models as nuisance estimators while still guaranteeing statistical inference.
\end{quote}

We focus on the flow matching as a main tool.
There are many approaches to generative modeling, including likelihood-based methods (normalizing flows),
diffusion/score-based methods, and GAN-type methods
\citep{kingma2014auto,rezende2015variational,dinh2017realnvp,sohl2015deep,ho2020ddpm,song2021scorebased,goodfellow2014gan}.
We focus on flow matching for three reasons.
First, it describes distributional deformation through the continuity equation, naturally connecting to score estimation and optimal transport.
Second, its training objective can be implemented as sample-based regression (often a squared loss), which is stable and easy to explain.
Third, the velocity field need not be a gradient field; it can represent general distribution transformations including dependence and irreversibility.

The basic framework of flow matching (FM) was proposed by \citet{lipman2022flowmatching}.
For conditional flow matching (CFM) and its generalizations, see \citet{tong2023cfm}.
A closely related idea, rectified flow, is represented by \citet{liu2022rectified}.
As background, continuous normalizing flows (CNF) are discussed in \citet{chen2018neuralode,grathwohl2019ffjord},
and a unified formulation of diffusion/score SDEs is provided by \citet{ho2020ddpm,song2020scoresde}.

\section{Why Do Differential Equations Appear?}
\label{sec:why_ode}

Because they can learn the deformation process instead of constructing a map directly.
The goal is to obtain a mapping $T$ that transports a reference distribution to a target (data) distribution.
In high-dimensional spaces, it is difficult to directly construct a good nonlinear map $T$ in one shot.
We therefore relax the problem from ``finding a map'' to ``finding a process of movement.''
Consider a continuous path of densities $\{\rho_t\}_{t\in[0,1]}$ such that $t=0$ corresponds to the reference distribution and $t=1$ to the data distribution.
We describe particle motion by the ODE
\[
\frac{d}{dt}X_t = v_t(X_t),
\]
where $v_t$ is a time-dependent vector field.
The core idea is that estimating $v_t$---a local rule of motion---is often easier than directly constructing the global map $T$,
because it can be treated as function approximation (regression).

\paragraph{The continuity equation:} It is generated from the consistency between particle motion and density evolution.
When particles move according to a velocity field\index{velocity field} $v_t$, the population density must satisfy mass conservation,
leading to the continuity equation\index{continuity equation}
\[
\partial_t\rho_t(x)+\nabla\cdot\{\rho_t(x)v_t(x)\}=0.
\]
Although the divergence operator $\nabla\cdot$ may look heavy at first sight, in one dimension it reduces to
$\partial_t\rho+\partial_x(\rho v)=0$,
which simply states a conservation law: the change in density is determined by inflow and outflow of flux $\rho v$.
In Chapter~3 we provide an intuitive explanation and confirm that this equation follows directly from Gauss's divergence theorem.

\section{Structure and Notation}
\label{sec:notation_roadmap}

\paragraph{Roadmap of the book}
We briefly outline the flow of the book.
This chapter provided a high-level view of the interface between statistics and generative modeling and introduced our standpoint and vocabulary.
Chapter~2 organizes score matching and clarifies that the score function is a differential object that characterizes distributions.
We then show how Stein identities generate expectation equalities (estimating equations) for unknown distributions.

Chapter~3 focuses on flow matching, including how to learn a velocity field and how to implement distribution transport.
Chapter~4 interprets neural vector-field approximation as nonparametric regression and discusses statistical properties from the viewpoints of approximation,
generalization, stability, and asymptotic normality.
Chapter~5 connects the framework to linear regression, copulas, survival analysis, and missing-data imputation.
Chapter~6 treats causal inference, in particular, generation of counterfactual distributions.
Chapter~7 addresses goodness-of-fit testing and model diagnostics via kernel methods and score matching.

\paragraph{Reproducibility and code}
The code for numerical experiments in this book is publicly available at the GitHub repository
\[
\text{\url{https://github.com/shinto-eguchi/Statistical-flow-matching-en}}
\]

\bigskip

We summarize the mathematical notation used throughout the book. See the symbol list at the end of the book for additional details.

\bigskip

\paragraph{Probability and measure notation}
Let $P$ denote a probability measure and $X$ a random variable.
When a density exists, we write $p(x)$ and denote $X\sim P$ or $X\sim p$.
Expectation is denoted by $\mathbb{E}$, and expectation under $P$ by $\mathbb{E}_P$.
Given samples $X_1,\dots,X_n$, the empirical measure is written as
\[
\widehat P_n=\frac1n\sum_{i=1}^n \delta_{X_i}.
\]
\index{empirical measure}

\paragraph{Vector fields and differential operators}
We write a vector field as $v:\mathbb{R}^d\to\mathbb{R}^d$.
The gradient is denoted by $\nabla$, and the divergence by $\nabla\cdot$.
The score function\index{score function} is defined as
\[
s_p(x)=\nabla \log p(x).
\]
A vector field is called a gradient field if there exists a scalar potential $\varphi$ such that $v=\nabla \varphi$.

\paragraph{Pushforward and distribution transformation}
\index{pushforward}
For a mapping $T:\mathbb{R}^d\to\mathbb{R}^d$, the pushforward of $P$ is denoted by $T_\# P$; that is,
\begin{align}\label{oshidashi}
X\sim P \quad \Longrightarrow \quad T(X)\sim T_\# P.
\end{align}
A generative model represents $P_\theta$ as $P_\theta=(G_\theta)_\#\pi$ using a reference distribution $\pi$ and a mapping $G_\theta$.

\paragraph{Summary}
We re-defined generative models in the vocabulary of statistics and described a viewpoint that treats high-dimensional distribution estimation
as learning a sampleable transformation.
We also confirmed that the continuity equation appears naturally when distributional deformation is modeled as a continuous-time motion.
From the next chapter onward, we develop the mathematical tools---score functions, Stein identities, optimal transport, and vector fields---and bridge
theory and applications centered on flow matching.

\chapter{Score matching}

In high-dimensional problems, it is often more natural---both computationally and theoretically---to focus on the \emph{(spatial) score function} that describes the local shape of a distribution, rather than to manipulate the density $p(x)$ itself.
\index{score function}
\index{Stein identity}
\index{Fisher divergence}
\index{score matching}
We proceed in four steps:
First, we recall why likelihood-based training becomes difficult in high dimensions and how the score function
\[
s_p(x)=\nabla \log p(x)
\]
avoids the normalizing constant.
Second, we derive the score matching objective and explain its practical variants, including denoising score matching,
viewed as regression of a vector field.
Third, we introduce the Stein identity as the key device that turns distributional correctness into expectation equalities,
and we clarify the relationship between score matching and the Fisher divergence.
Finally, we highlight a change of viewpoint that will be central in the next chapter:
we move from \emph{gradient fields} (scores) to \emph{general vector fields} (velocities), which leads naturally to flow matching.
\index{denoising score matching}

\section{Gradient, Divergence, and Laplacian}
In this book, we view learning high-dimensional distributions through the lens of regression of vector fields and their stability.
As a result, notation from multivariate calculus appears frequently.
The goal of this section is not to provide fully rigorous mathematical definitions, but to set up the minimal toolkit needed to follow subsequent algebraic manipulations.

Let $x=(x_1,\dots,x_d)^\top\in\mathbb{R}^d$.
Let $f:\mathbb{R}^d\to\mathbb{R}$ be a scalar field and $v:\mathbb{R}^d\to\mathbb{R}^d$ be a vector field.
We write the partial derivative as $\partial_i f(x)=\partial f(x)/\partial x_i$.

\paragraph{Gradient}
The gradient is a vector that represents the direction of steepest increase of a scalar field $f$ and its magnitude:
\[
\nabla f(x)
=
\left(
\partial_1 f(x),\dots,\partial_d f(x)
\right)^\top .
\]
As a first-order approximation,
\[
f(x+h)\approx f(x)+\nabla f(x)^\top h
\]
holds.
Thus, $\nabla f(x)$ indicates the direction that increases $f$ most rapidly locally.

\paragraph{Jacobian and Hessian}
For a vector field $v(x)=(v_1(x),\dots,v_d(x))^\top$, the Jacobian is
\[
\nabla v(x)
=
\left(\partial_j v_i(x)\right)_{i,j=1}^d,
\]
a matrix collecting the first-order rates of change of each component.
For a scalar field $f$, the Hessian is
\[
\nabla^2 f(x)
=
\left(\partial_i\partial_j f(x)\right)_{i,j=1}^d,
\]
which encodes second-order curvature information.
As a second-order approximation,
\[
f(x+h)\approx f(x)+\nabla f(x)^\top h+\frac{1}{2}h^\top\{\nabla^2 f(x)\}h
\]
holds.

\paragraph{Divergence}
The divergence measures whether a vector field locally ``expands'' (sources) or ``contracts'' (sinks):
\[
\nabla\cdot v(x)
=
\sum_{i=1}^d \partial_i v_i(x).
\]
In particular, when $v=\nabla f$,
\[
\nabla\cdot(\nabla f)=\Delta f
\]
so divergence and the Laplacian are directly linked.
In this book, divergences of the score function $s_\theta(x)$ and of a velocity field $v_\theta(t,x)$ appear repeatedly.
It is enough to interpret $\nabla\cdot v$ as a \emph{local rate of volume change}.

\paragraph{Laplacian}
The Laplacian is an operator that captures the strength of diffusive smoothing of a scalar field:
\[
\Delta f(x)
=
\sum_{i=1}^d \partial_i^2 f(x).
\]
Using the Hessian,
\[
\Delta f(x)=\mathrm{tr}\{\nabla^2 f(x)\}
\]
holds.
When $\Delta$ appears later in discussions of score matching or diffusion processes,
it suffices to view it as ``the sum of second derivatives'' or ``the total curvature.''

\paragraph{Trace}
For a square matrix $A\in\mathbb{R}^{d\times d}$, the trace is the sum of diagonal entries:
\[
\mathrm{tr}(A)=\sum_{i=1}^d A_{ii}.
\]
The trace is stable under coordinate changes, and in particular the identity $\mathrm{tr}(\nabla^2 f)=\Delta f$ is important.
We also frequently use, whenever dimensions are compatible,
\[
\mathrm{tr}(AB)=\mathrm{tr}(BA).
\]

\paragraph{Norms and inner products}
The inner product and Euclidean norm are
\[
\langle a,b\rangle=a^\top b,\qquad \|a\|=(a^\top a)^{1/2}.
\]
For matrices, we sometimes use the Frobenius norm
\[
\|A\|_{\mathrm{F}}=\{\mathrm{tr}(A^\top A)\}^{1/2}.
\]
Many bounds in this book reduce to measuring ``squared errors of vector fields'' or ``the size of a Jacobian''
in terms of these norms.

\paragraph{Perspective on differential operators}
In later chapters, differential objects play a more central role than distributions or densities themselves, for example,
\[
\text{score function }\;\; s(x)=\nabla \log p(x),
\qquad
\text{velocity field }\;\; v(t,x).
\]
\index{score function}
\index{velocity field}
With this viewpoint, it is often helpful to read $\nabla$ as ``local directional information,''
$\nabla\cdot$ as ``local volume change,'' and $\Delta$ as ``curvature (diffusion).''
From this point on, we will not re-define these operators rigorously each time;
instead, we will use the necessary calculus rules guided by the above intuition.

\section{Score functions}
\index{score function}

Many models take the form
\[
p(x)=\frac{\tilde p(x)}{Z},\qquad
Z=\int_{\mathbb{R}^d}\tilde p(x)\,dx,
\]
where evaluating $Z$ becomes an intractable integration problem in high dimensions.
In maximum likelihood estimation, the term
$\log p(x)=\log\tilde p(x)-\log Z$
appears explicitly and can become a major bottleneck.
In statistical physics, $Z$ is called the \emph{partition function}, the total mass obtained by summing the unnormalized weight $\tilde p(x)$ over all states.
For example, if
\[
\tilde p(x)=\exp\{-\beta E(x)\},
\]
then $-\beta^{-1}\log Z$ corresponds to the (Helmholtz) free energy, and $Z$ summarizes macroscopic properties of the system.

\paragraph{Change of viewpoint:}  We work with the ``gradient'' rather than the density.
The \emph{score function}
\[
s_p(x)=\nabla \log p(x)
\]
does not involve $Z$.
Indeed,
\[
\nabla \log p(x)=\nabla \log\tilde p(x)-\nabla \log Z=\nabla \log\tilde p(x),
\]
because $Z$ is a constant with respect to $x$.
Thus, once the unnormalized density $\tilde p$ is specified, we can work with local shape information without evaluating the normalizing constant.
For instance, for $N(\mu,\Sigma)$,
\[
s_p(x)=-\Sigma^{-1}(x-\mu),
\]
so the vector points toward the mean direction.
One may interpret the score as indicating ``which direction increases the density locally.''
In this sense, the score is an arrow pointing toward high-density regions.

\paragraph{Two notions of ``score''}
\begin{itemize}
\item \textbf{Fisher score:} $\nabla_\theta\log p(x;\theta)$ (parameter gradient of the log-likelihood).
\item \textbf{Stein score:} $\nabla_x\log p(x;\theta)$ (spatial gradient in the data domain).
\end{itemize}
The Fisher score is fundamental for estimating a statistical parameter $\theta$ and describes local sensitivity of the likelihood.
In information geometry, $\theta$ is viewed as a coordinate on the statistical model
${\mathcal M}=\{p(\cdot;\theta):\theta\in\Theta\}$,
and interpreting the Fisher score as a tangent vector leads to intrinsic structures such as the Fisher information metric and natural gradients \citep{amari2000methods}.

In contrast, the Stein score is a gradient with respect to the data variable $x$ and directly describes the local geometry of the distribution.
In this book, when we simply say ``score,'' we refer to this Stein score unless otherwise stated.
In generative modeling, we continuously move $x$ and deform/transport distributions, so $\nabla_x\log p(x;\theta)$ naturally connects to velocity fields and the continuity equation, and plays a central role.

\paragraph{The score determines the distribution (up to a constant)}
On a connected domain, integrating $\nabla_x\log p$ along paths recovers $\log p$ up to an additive constant.
However, not every vector field can be written as $\nabla\log p$; an integrability (consistency) condition is required.
This point reappears in Section~\ref{subsec:grad-to-vector} when we move from gradient fields to general vector fields.

\section{Score Matching}
\label{sec:score-matching}
\index{score matching}

We derive the score matching principle of \citet{hyvarinen2005score} as a method to estimate the score
$s_0(x)=\nabla_x\log p_0(x)$
of an unknown true density $p_0$ from observations
\[
X_1,\dots,X_n\sim p_0.
\]

\paragraph{Fisher divergence}
\index{Fisher divergence}
For a model $p(x;\theta)$, define the score function $s_\theta(x)=\nabla_x\log p(x;\theta)$ and consider
\begin{align}
\mathcal{L}(\theta)
&=
\tfrac12\,\mathbb{E}_{p_0}\!\left[\|s_\theta(X)-s_0(X)\|^2\right].
\label{eq:SM_FisherDiv}
\end{align}
This criterion involves the unknown $s_0$ and is not directly computable.

\paragraph{Eliminate the true score by integration by parts}
To transform the cross term, assume boundary terms vanish (e.g., $p_0(x)s_\theta(x)$ decays sufficiently fast at infinity, specifically $\lim_{\lVert x\rVert\to\infty} p_0(x)s_\theta(x) = 0$).
Since $s_0=\nabla_x\log p_0=\nabla_x p_0/p_0$,
\[
\mathbb{E}_{p_0}\!\left[s_\theta(X)^\top s_0(X)\right]
=
\int s_\theta(x)^\top \nabla_x p_0(x)\,dx
=
-\mathbb{E}_{p_0}\!\left[\nabla_x\cdot s_\theta(X)\right],
\]
where $\nabla\cdot$ denotes divergence.
Dropping constants, minimizing \eqref{eq:SM_FisherDiv} is therefore equivalent to minimizing
\begin{equation}
J(\theta)
=
\mathbb{E}_{p_0}\!\left[
\tfrac12\,\|s_\theta(X)\|^2+\nabla_x\cdot s_\theta(X)
\right].
\label{eq:SM_population}
\end{equation}
The essential point is that the right-hand side does not involve the unknown form of $p_0$ and can be estimated from data.

\paragraph{Empirical objective}
Define
\begin{equation}
\widehat{J}_n(\theta)
=
\frac1n\sum_{i=1}^n
\left\{
\tfrac12\|s_\theta(X_i)\|^2+\nabla_x\cdot s_\theta(X_i)
\right\}.
\label{eq:SM_empirical}
\end{equation}
Minimizing \eqref{eq:SM_empirical} yields $\widehat\theta_n$.
This is a standard $M$-estimator and is consistent under usual conditions.
Under model misspecification, the limit $\theta^\ast$ can be interpreted as the Fisher-divergence projection \citep{white1982maximum}.

\paragraph{Handling divergence (second derivatives)}
The empirical criterion \eqref{eq:SM_empirical} includes
\[
\nabla_x\cdot s_\theta(x)=\mathrm{tr}\{\nabla_x^2\log p(x;\theta)\},
\]
a trace of the Hessian.
In high dimensions, practical options include
(i) stochastic trace estimation of Hutchinson type, and
(ii) denoising score matching, which avoids explicit second derivatives by adding noise \citep{vincent2011connection}
(see Appendix~\ref{app:trace-dsm}).

\paragraph{Example: energy-based model}
If $p_\theta(x)\propto \exp\{-U_\theta(x)\}$, then $s_\theta(x)=-\nabla_x U_\theta(x)$ and
\[
J(\theta)
=
\mathbb{E}_{p_0}\!\left[
\tfrac12\|\nabla U_\theta(X)\|^2-\Delta U_\theta(X)
\right],
\]
where $\Delta$ is the Laplacian.
The normalizing constant does not appear.

\begin{example}[Quartic potential model]
As an example of an unnormalized model with an analytically inconvenient normalizing constant, consider
\[
p_\theta(x)=\frac{1}{Z(\theta)}f_\theta(x),\qquad
f_\theta(x)=\exp(\theta_1 x+\theta_2 x^2+\theta_3 x^4).
\]
Here $\log Z(\theta)$ is an integral depending on $\theta$, so numerical integration can dominate maximum likelihood computation.

\begin{figure}[t]
  \centering
  \includegraphics[width=0.5\linewidth]{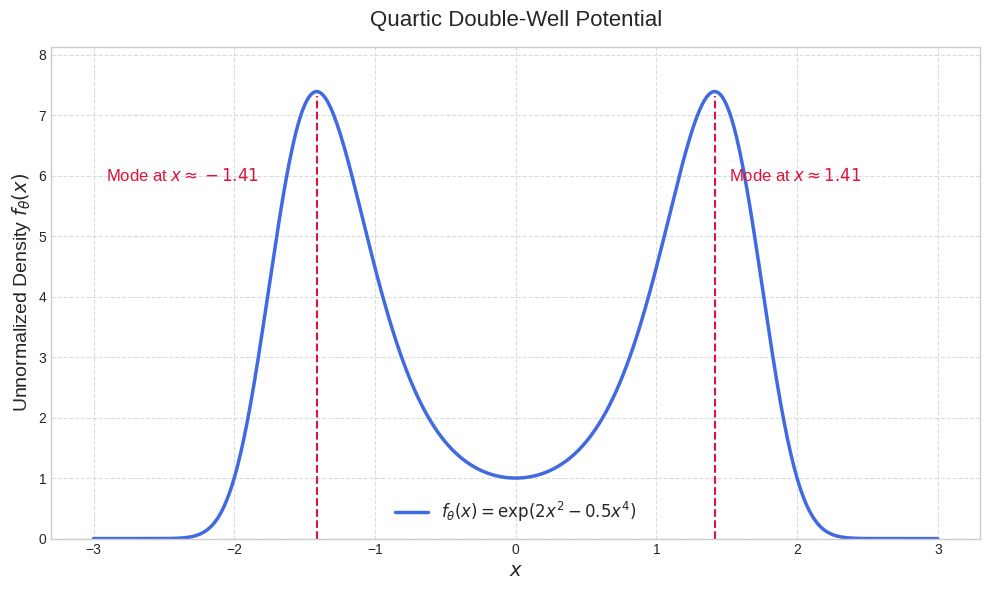}
  \caption{Quartic potential model $p_\theta(x)$ with $\theta=(0,2,-0.5)$.}
  \label{fig:quartic}
\end{figure}

In score matching, the normalizing constant cancels because we use spatial derivatives.
Indeed,
\[
s_\theta(x)=\nabla_x\log p_\theta(x)=\nabla_x\log f_\theta(x)
=\theta_1+2\theta_2 x+4\theta_3 x^3,
\]
and in one dimension the divergence term is explicit:
\[
\nabla_x\cdot s_\theta(x)=\frac{d}{dx}s_\theta(x)=2\theta_2+12\theta_3 x^2.
\]
Hence the empirical score matching objective becomes
\[
\widehat J_n(\theta)=\frac1n\sum_{i=1}^n\Bigl\{\frac12
\bigl(\theta_1+2\theta_2 x_i+4\theta_3 x_i^3\bigr)^2
+2\theta_2+12\theta_3 x_i^2
\Bigr\}.
\]
This is a quadratic form in $\theta$, so minimization reduces to solving linear equations and is computationally light.
When $\theta_3<0$ and $\theta_2>0$, the density can become bimodal, a regime in which likelihood-based fitting tends to be numerically fragile (see Figure~\ref{fig:quartic}).
\end{example}

\begin{example}[Gaussian graphical model (GGM)]
\label{ex:ggm-sm}
Consider the $d$-dimensional Gaussian model
\[
p_K(x)
=
(2\pi)^{-d/2}(\det K)^{1/2}\exp\!\left(-\tfrac12 x^\top K x\right),
\qquad K\succ 0,
\]
where $K=\Sigma^{-1}$ is the precision matrix (inverse covariance).
Assume the data are centered so that $\mu=0$.
In a GGM, $K_{ij}=0$ $(i\ne j)$ corresponds to $X_i\indep X_j\mid X_{-\{i,j\}}$, so the zero pattern of $K$ encodes the edge set of an undirected graph.

The score is $s_K(x)=-Kx$, and the divergence is $\nabla\!\cdot s_K(x)=-\mathrm{tr}(K)$.
Therefore the score matching objective becomes
\[
\widehat J(K)=\tfrac12\,\mathrm{tr}(K^2S)-\mathrm{tr}(K),
\qquad
S=\frac1n\sum_{i=1}^n (x_i-\bar x)(x_i-\bar x)^\top.
\]
The key point is that $\log\det K$ does not appear.
If $n>d$ and $S$ is nonsingular, then without regularization $\widehat K=S^{-1}$, which coincides with the MLE;
in high dimensions, regularization is essential.

Compared to the lightweight computation of regularized score matching in GGMs, regularized likelihood-based methods such as the graphical lasso are known to be dominated by the cost of optimizing an objective containing $\log\det$ in high dimensions.
Quantitative comparisons of RMSE and wall-clock time depend on the regularization parameter and stopping criteria.
We defer details to Appendix~\ref{app:ggm} and emphasize here only the conceptual point:
avoiding the normalizing constant yields a different computational principle.
\end{example}

\section{Stein's Lemma}
\index{Stein!lemma}
\index{Fisher information}
\label{subsec:stein-fisher}

This section makes the Stein identity underlying score matching explicit and clarifies its equivalence to Fisher divergence.

Let $X\sim N(0,I_d)$.
For a smooth function $f:\mathbb{R}^d\to\mathbb{R}^d$ such that boundary terms vanish, we have
\begin{equation}
\mathbb{E}\bigl[X^\top f(X)\bigr]
=
\mathbb{E}\bigl[\nabla_x\cdot f(X)\bigr].
\label{eq:stein-normal}
\end{equation}
This is the prototype of the integration-by-parts argument used in this chapter \citep{stein1981estimation}.

\index{Stein!identity}
For any smooth density $p$, the following identity holds:
\begin{equation}
\mathbb{E}_p\!\left[s_p(X)^\top f(X)+\nabla_x\cdot f(X)\right]=0.
\label{eq:stein-identity}
\end{equation}
The operator
\[
(\mathcal{T}_p f)(x)=s_p(x)^\top f(x)+\nabla_x\cdot f(x)
\]
is called the \emph{Stein operator}.

\paragraph{Score matching and Fisher divergence}
\index{score matching}
\index{Fisher divergence}
Setting $p=p_0$ and $f=s_\theta$ in \eqref{eq:stein-identity} yields
\[
\mathbb{E}_{p_0}\!\left[\nabla_x\cdot s_\theta(X)\right]
=
-\mathbb{E}_{p_0}\!\left[s_0(X)^\top s_\theta(X)\right],
\]
and therefore
\begin{equation}
J(\theta)+\tfrac12\mathbb{E}_{p_0}\!\left[\|s_0(X)\|^2\right]
=
\tfrac12\mathbb{E}_{p_0}\!\left[\|s_\theta(X)-s_0(X)\|^2\right].
\label{eq:fisher-equivalence}
\end{equation}
Thus score matching is equivalent to minimizing Fisher divergence.
Moreover, $\mathcal{I}(p_0)=\mathbb{E}_{p_0}[\|s_0(X)\|^2]$ corresponds to the Fisher information for a location parameter.
Equation \eqref{eq:fisher-equivalence} shows that a natural discrepancy between distributions arises as a squared error of scores.
For robust estimation via density-weighted Stein operators and weighted Fisher divergence, see \citet{eguchi2025robust}.

\paragraph{Connection to KSD and SVGD}
Since the Stein identity can be viewed as a characterization of $p$, one defines a Stein discrepancy\index{Stein!discrepancy}
\[
\mathcal{S}(p\|q;\mathcal{F})
=
\sup_{f\in\mathcal{F}}
\Bigl(\mathbb{E}_{X\sim p}\bigl[(\mathcal{T}_q f)(X)\bigr]\Bigr)^2,
\]
where $\mathcal{F}$ is a chosen function class.
If $\mathcal{F}$ is sufficiently rich, then
\[
\mathcal{S}(p\|q;\mathcal{F})
=0\quad
\Longleftrightarrow\quad p=q.
\]
This forms a basis for goodness-of-fit testing and particle methods.
In particular, if $\mathcal{F}$ is taken as an RKHS, the kernel Stein discrepancy (KSD) admits a closed form.
Stein variational gradient descent (SVGD) builds on this, and the resulting estimator takes the form of a $U$-statistic.
We defer concrete formulas and estimation details to Section~\ref{U-stat}.

\paragraph{Revisiting Stein estimators}
Stein's lemma also simplifies risk calculations\index{Stein!lemma}.
As a classical example, consider estimating $\mu$ under $X\sim N(\mu,I_d)$ \citep{james1961estimation}.
Write an estimator as
\[
\delta(X)=X+g(X),
\]
where $g:\mathbb{R}^d\to\mathbb{R}^d$ is smooth.
The squared error risk is
\begin{align}\nonumber
R(\mu,\delta)&=\mathbb{E}_\mu\!\left[\|\delta(X)-\mu\|^2\right]\\[3mm] \nonumber
&=
\mathbb{E}_\mu\!\left[\|X-\mu\|^2\right]
+2\mathbb{E}_\mu\!\left[(X-\mu)^\top g(X)\right]
+\mathbb{E}_\mu\!\left[\|g(X)\|^2\right].
\end{align}
Let $Z=X-\mu\sim N(0,I_d)$.
By \eqref{eq:stein-normal},
\[
\mathbb{E}_\mu\!\left[(X-\mu)^\top g(X)\right]
=
\mathbb{E}\bigl[Z^\top g(Z+\mu)\bigr]
=
\mathbb{E}\bigl[\nabla_x\cdot g(X)\bigr].
\]
Hence
\begin{equation}
R(\mu,\delta)
=
d
+2\,\mathbb{E}_\mu\!\left[\nabla_x\cdot g(X)\right]
+\mathbb{E}_\mu\!\left[\|g(X)\|^2\right].
\label{eq:stein-risk}
\end{equation}
This expression does not involve $\mu$ explicitly; it evaluates risk solely through the divergence and squared norm of $g$.
The James--Stein estimator uses the shrinkage term $g(x)=-(d-2)x/\|x\|^2$, and \eqref{eq:stein-risk} implies that its risk is smaller than that of the MLE $X$ when $d\ge 3$:
\[
R(\mu, \delta^{\mathrm{JS}})
=
d
-
(d-2)^2 \,\mathbb{E}\left[ \frac{1}{\|X\|^2} \right].
\]
For $d\ge3$, $\mathbb{E}[1/\|X\|^2]$ exists and is strictly positive.
As seen in Figure~\ref{fig:james-stein-field}, $g(X)$ forms a strong attractive field toward the origin, geometrically corresponding to a large negative divergence; this provides an intuitive source of risk reduction.
The technique of ``killing boundary terms and pushing unknown quantities into expectations'' is structurally the same as in score matching.

\begin{figure}[t]
  \centering
  \includegraphics[width=0.5\linewidth]{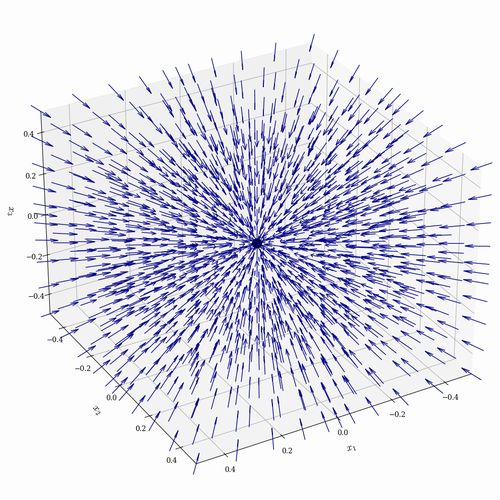}
  \caption{Vector field induced by a James--Stein-type shrinkage term.}
  \label{fig:james-stein-field}
\end{figure}

\paragraph{Stein identities on manifolds}
Let $(\mathcal{M},g)$ be an oriented Riemannian manifold without boundary.
Then, Gauss' divergence theorem yields\index{Stein!identity}
\[
\mathbb{E}_{P}\Bigl[\Delta f(X)+\langle \nabla\log p(X),\nabla f(X)\rangle\Bigr]=0,
\]
where $dP=p\,d\mathrm{vol}$ and $\Delta$ is the Laplace--Beltrami operator.
On compact boundaryless manifolds such as the sphere $\mathbb{S}^{d-1}$, boundary terms vanish automatically.
We do not pursue the manifold theory here and instead point to classical references \citep{arnold_1989_mmc,marsden_ratiu_1999_mech_sym}.

\section{From Gradient Fields to Vector Fields}
\label{subsec:grad-to-vector}

So far, the main actor has been the score function $s(x)=\nabla_x\log p(x)$, a gradient field.
However, the goal of generation is not only to ``understand the shape of a distribution,'' but also to transport samples from a reference distribution $\pi$ to a target distribution $\rho$.
To this end, introduce a virtual time $t\in[0,1]$ and consider intermediate distributions $\{\rho_t\}$:
\[
\rho_0=\pi,\qquad \rho_1=\rho.
\]
Here $t$ is not physical time but rather a parameter indicating the progress of deformation.

\paragraph{Two routes: SDE and ODE}
There are two broad ways to move samples $X_t$.
\begin{itemize}
\item \textbf{Stochastic updates (SDE):} approach the target while mixing with noise (e.g., Langevin-type dynamics).
\item \textbf{Deterministic updates (ODE):} flow along a vector field $v_t$ (a transport flow).
\end{itemize}
This book focuses mainly on the latter, which leads to flow matching.

\paragraph{Arrows that move particles}
If particles move according to the ODE
\[
\frac{dX_t}{dt}=v_t(X_t),
\]
then the distribution $\rho_t$ follows the continuity equation\index{continuity equation}
\[
\partial_t\rho_t(x)+\nabla_x\cdot\{\rho_t(x)\,v_t(x)\}=0.
\]
Its meaning is a conservation law: density increases by inflow and decreases by outflow.
Note also that the continuity equation does not uniquely determine $v_t$ in general:
if $w_t$ satisfies $\nabla_x\cdot(\rho_t(x)\,w_t(x))=0$, then $v_t+w_t$ yields the same $\rho_t$.
This corresponds to a ``gauge freedom'' and is consistent with the divergence-free component in the Helmholtz decomposition.
We revisit the physical intuition and a proof from the viewpoint of stochastic processes in Chapter~3, Section~\ref{subsec:liouville-continuity}.

\paragraph{When is a vector field a gradient field?}
For a $C^1$ vector field $v$ on an open set $\Omega\subset\mathbb{R}^d$, a necessary condition for $v=\nabla\varphi$ is the local symmetry
\[
\frac{\partial v_i}{\partial x_j}=\frac{\partial v_j}{\partial x_i}
\qquad(i,j=1,\dots,d).
\]
If $\Omega$ is simply connected, this condition is also sufficient globally.
Thus scores (gradient fields) satisfy a stronger integrability constraint than general vector fields.

The distinction becomes clearer through the Helmholtz decomposition.
In $d=3$, any smooth vector field $v$ can be decomposed into the gradient of a scalar potential (curl-free part) and the curl of a vector potential (divergence-free part):
\begin{equation}
v = \underbrace{-\nabla \phi}_{\mathrm{curl}\text{-}\mathrm{free}} + \underbrace{\nabla \times A}_{\mathrm{divergence}\text{-}\mathrm{free}}.
\end{equation}
In general dimensions, this is understood as the Hodge decomposition (via differential forms).
By definition, the score function $s(x)=\nabla \log p(x)$ contains only the gradient component and has no rotational (curl) component.
The James--Stein shrinkage term $g$ is also curl-free (see Figure~\ref{fig:james-stein-field}).
In contrast, the velocity field $v_t(x)$ learned in flow matching can include rotational components.
This allows transport paths that ``swirl'' or ``twist,'' providing geometric flexibility in representing distributional transport.\index{velocity field}\index{flow matching}

\paragraph{Division of roles: scores and vector fields}
\begin{itemize}
\item The score $s_t(x)=\nabla_x\log\rho_t(x)$ describes the local shape (local gradient) of the intermediate distribution.
\item The vector field $v_t(x)$ specifies the law of motion that transports particles.
\end{itemize}
Diffusion models primarily learn scores, while flow matching primarily learns vector fields.
This is the key generalization from gradient fields to vector fields and serves as the starting point of the next chapter.

\subsubsection*{Guidance for readers}
This chapter develops the foundations of score estimation.
Chapters~3 and~4 provide a detailed account of flow matching and include discussions that may feel unfamiliar to readers trained in classical statistical inference.
For statisticians, it is not necessary to read the book strictly in order.
To quickly grasp how generative models enter statistical inference, we recommend reading Chapters~5 and~6 first,
where you will see concrete applications such as missing-data imputation, inverse problems, counterfactual distribution estimation, and policy evaluation.
Returning to Chapters~3 and~4 afterward often makes the discussions of vector fields and differential equations much more transparent,
by re-positioning them as computational principles for distribution estimation.

\chapter{Flow Matching: Mathematics and Computation}
\index{flow matching}
\index{ODE}
\index{SDE}
\index{optimal transport}
\index{vector field}

This chapter provides a unified view of distribution transformation as transport driven by a time-dependent vector field.
We begin by deriving the evolution equation of the density from particle dynamics (ODE/SDE), namely the continuity equation and the Fokker--Planck equation.
Next, we review optimal transport (OT) at the minimum level needed to use it as a geometric reference axis between distributions.
Then we introduce conditional flow matching (CFM), where a designed probability path turns vector-field learning into an $L^2$ regression problem.
Finally, we explain how to generate samples by numerically integrating the learned vector field, and how scores and velocity fields describe the same density evolution in different coordinates.
Throughout this chapter, time $t\in[0,1]$ is a \emph{virtual time} used to transport distributions, not a physical time.

\section{Stochastic processes and the continuity equation}
\index{SDE}
\index{Liouville equation}
\label{subsec:liouville-continuity}

When a random variable moves in continuous time, how does its \emph{population distribution} evolve?
For readers trained in statistics, differential equations may look intimidating at first,
but the essence is a simple conservation law:
\emph{when a cloud of particles flows, how does the density move?}
We start with a physical picture (fluid flow) and then formalize it with minimal notation.

\paragraph{Physical picture: a distribution as a ``cloud of particles''}
Think of $\rho_t(x)$ as the ``thickness'' of a cloud of particles at time $t$.
If many particles are near $x$, then $\rho_t(x)$ is large; if few are near $x$, it is small.
Suppose that at each location $x$ we have an arrow that may depend on time,
\[
v_t(x)\in\mathbb{R}^d,
\]
which tells a particle at $x$ where and how fast to move. This is the \emph{velocity field}.
In the language of fluid mechanics, $v_t$ is the flow velocity and $\rho_t$ is the density.

The key intuition is:
\begin{quote}
Particles neither appear nor disappear; they are only transported.
\end{quote}
If particles flow out of a region, the density there decreases; if particles flow in, the density increases.
Writing this ``in--out bookkeeping'' as an equation yields the \emph{continuity equation}.

\paragraph{Deterministic motion (ODE) and the continuity equation}
Consider noise-free deterministic dynamics. A particle follows the ODE
\begin{equation}
\frac{dX_t}{dt}=v_t(X_t),\qquad t\in[0,1].
\label{eq:ode-flow}
\end{equation}
Given an initial distribution $X_0\sim\rho_0$, the ODE \eqref{eq:ode-flow} determines $X_t$,
and we denote its density by $\rho_t$.

Then $\rho_t$ satisfies the PDE
\begin{equation}
\partial_t \rho_t(x)+\nabla_x\cdot\{\rho_t(x)\,v_t(x)\}=0.
\label{eq:continuity}
\end{equation}
This is the continuity equation\index{continuity equation}, also called the Liouville equation in many texts.
Although \eqref{eq:continuity} may look complicated, it simply states
\emph{time change of density equals minus the divergence of the flux}.
Indeed,
\[
\nabla_x\cdot\{\rho_t(x)\,v_t(x)\}
=
\sum_{j=1}^d \partial_{x_j}\bigl(\rho_t(x)\,v_{t,j}(x)\bigr),
\]
and the product $\rho_t v_t$ is the \emph{flux}: how much mass is transported per unit time in each direction.
The divergence $\nabla\cdot(\rho v)$ measures, near $x$, ``outflow minus inflow.''
If outflow dominates, the divergence is positive and $\partial_t\rho<0$ (density decreases).
If inflow dominates, the divergence is negative and density increases.
This local bookkeeping, repeated at every point, ensures global mass conservation:
no particle suddenly emerges or vanishes; each particle simply travels continuously along the flux $\rho_t v_t$.

\paragraph{Gauss' divergence theorem}
Let $\Omega\subset\mathbb{R}^d$ be a domain and let $D\subset\Omega$ be any subdomain with sufficiently smooth boundary $\partial D$.
Since $\rho_t(x)$ is mass per unit volume, the total mass in $D$ is
\[
M_D(t)=\int_D \rho_t(x)\,dx.
\]
If the particle flow is given by $v_t$, then the mass leaving $D$ per unit time through the boundary is
\[
\int_{\partial D} \rho_t(x)\,v_t(x)\cdot n(x)\,dS(x),
\]
where $n(x)$ is the outward unit normal and $dS(x)$ is the surface area element.
Thus the conservation law ``mass decrease inside equals outflow through the boundary'' is
\begin{equation}
\frac{d}{dt}\int_D \rho_t(x)\,dx
=
-\int_{\partial D} \rho_t(x)\,v_t(x)\cdot n(x)\,dS(x).
\label{eq:continuity_integral}
\end{equation}
By Gauss' divergence theorem,
\[
\int_{\partial D} \rho_t v_t\cdot n\,dS
=
\int_D \nabla_x\cdot(\rho_t v_t)\,dx,
\]
so \eqref{eq:continuity_integral} becomes
\begin{equation}
\frac{d}{dt}\int_D \rho_t(x)\,dx
=
-\int_D \nabla_x\cdot(\rho_t(x)\,v_t(x))\,dx.
\label{eq:continuity_weak}
\end{equation}
If $\rho_t$ is smooth enough so that we can interchange $t$-differentiation and integration, we obtain
\begin{equation}
\int_D \bigl\{\partial_t\rho_t(x)+\nabla_x\cdot(\rho_t(x)\,v_t(x))\bigr\}\,dx=0.
\label{eq:continuity_localization}
\end{equation}
Since $D$ is arbitrary, localization implies (almost everywhere)
\begin{equation}
\partial_t\rho_t(x)+\nabla_x\cdot(\rho_t(x)\,v_t(x))=0.
\label{eq:continuity1}
\end{equation}
Hence the continuity equation is simply the localization of mass conservation on an infinitesimal region.

\paragraph{Density update along trajectories}
Let $\Phi_t:x_0\mapsto x_t$ be the flow map of the ODE \eqref{eq:ode-flow}, i.e., $X_t=\Phi_t(X_0)$.\index{ODE}
Then the distribution is the pushforward\index{pushforward} 
\[
\rho_t=(\Phi_t)_{\#}\rho_0,
\]
and \eqref{eq:continuity1} expresses mass conservation.

Along a trajectory, the chain rule gives
\begin{equation}
\frac{d}{dt}\log\rho_t(X_t)
=
\partial_t\log\rho_t(X_t)
+
v_t(X_t)^\top \nabla_x\log\rho_t(X_t).
\label{eq:material-derivative-logrho}
\end{equation}
Here $\frac{d}{dt}$ denotes the total derivative along the path $t\mapsto X_t$.
For any smooth $f(t,x)$,
\[
\frac{d}{dt} f(t,X_t)
=
\partial_t f(t,X_t) + v_t(X_t)^\top \nabla_x f(t,X_t),
\]
where the right-hand side is the \emph{material derivative} (time derivative observed while riding on the particle).
If $\Phi_t$ is locally a diffeomorphism, combining \eqref{eq:material-derivative-logrho} with \eqref{eq:continuity1} yields the standard formula
\begin{equation}
\frac{d}{dt}\log \rho_t\!\left(X_t\right)
=
-\nabla_x\cdot v_t\!\left(X_t\right).
\label{eq:logdens_along_flow}
\end{equation}
Therefore, evaluating density along trajectories reduces to integrating divergence:
\[
\log \rho_1(X_1)
=
\log \rho_0(X_0)
-\int_0^1 \nabla\cdot v_t(X_t)\,dt.
\]
This viewpoint is identical to likelihood evaluation in continuous normalizing flows, and in our context it highlights the unifying picture
``distribution transformation $=$ learning a vector field.''

\paragraph{Stochastic dynamics (SDE):} It shows what changes when noise is present. \index{SDE}
Consider a continuous-time process with noise, such as Langevin dynamics:
\begin{equation}
dX_t = v_t(X_t)\,dt + \sqrt{2}\,dW_t,
\label{eq:sde}
\end{equation}
where $W_t$ is a $d$-dimensional Brownian motion (see, e.g., \citep{yanase2015probability}).
If $\rho_t$ denotes the density of $X_t$, then the density is not only transported by the flow but also \emph{diffused} (smoothed) by the noise.
As a result, $\rho_t$ satisfies the Fokker--Planck equation\index{Fokker--Planck equation}
\begin{equation}
\partial_t\rho_t(x)+\nabla_x\cdot\{\rho_t(x)\,v_t(x)\}
=
\Delta \rho_t(x),
\label{eq:fokker-planck}
\end{equation}
where
$
\Delta=\sum_{j=1}^d \partial_{x_j}^2
$
is the Laplacian.\index{ODE}
The left-hand side is the same transport (advection) term as in the ODE case, while the right-hand side $\Delta\rho_t$ represents diffusion (smoothing).
The coefficient $\sqrt{2}$ in \eqref{eq:sde} is chosen so that the generator becomes $v\cdot\nabla+\Delta$ and hence the diffusion term is exactly $\Delta\rho_t$.

\paragraph{Statistical viewpoint: ODE versus SDE}
Both ODEs and SDEs move distributions, but their qualitative behavior differs:
\begin{itemize}
\item ODE \eqref{eq:ode-flow}: particles flow \emph{deterministically}. The same initial point leads to the same path.
\item SDE \eqref{eq:sde}: particles flow \emph{stochastically}. Even with the same initial point, paths branch due to randomness.
\end{itemize}
In statistical terms, ODE-based methods learn a \emph{transport map} that pushes forward a base distribution,
whereas SDE-based methods realize a target distribution via Markov dynamics and diffusion.

\paragraph{Outlook: learning distribution transport}
Our goal is to learn a vector field $v_t$ that transports $\rho_0=\pi$ to $\rho_1=\rho$.
As preparation, we emphasized:
\begin{itemize}
\item given $v_t$, the distribution evolves according to \eqref{eq:continuity1} (ODE) or \eqref{eq:fokker-planck} (SDE);
\item thus the \emph{time evolution of a distribution} can be described by a \emph{local flow} (a velocity field).
\end{itemize}
In the following sections, we will show that flow matching formulates ``learn the velocity field'' as a regression problem.

\paragraph{The linear ODE/SDE case}
While our discussion targets general velocity fields $v_\theta(t,x)$, many aspects become transparent in the linear case.
For the linear ODE
\[
\dot x_t = A x_t,
\]
the flow map is given by the matrix exponential,
\[
x_t = e^{tA}x_0,\qquad T_t(x)=e^{tA}x,
\]
where $e^A=\sum_{k=0}^\infty A^k/k!$.
The Lipschitz constant is
\[
\mathrm{Lip}(T_t)=\|e^{tA}\|_{\mathrm{op}},
\]
and using the symmetric part $S=(A+A^\top)/2$ we have
\[
\|e^{tA}\|_{\mathrm{op}}\le \exp\{t\,\lambda_{\max}(S)\}.
\]
Hence $\lambda_{\max}(S)<0$ implies exponential contraction of trajectories, whereas $\lambda_{\max}(S)>0$ implies exponential error amplification,
providing an intuitive notion of ODE stability.
Moreover, since $\nabla\cdot(Ax)=\mathrm{tr}(A)$, the density $\rho_t$ satisfies the continuity equation
\[
\partial_t \rho_t(x)+\nabla_x\cdot\{\rho_t(x)\,A x\}=0,
\]
and the pushforward formula yields
\[
\rho_t(x)=\rho_0(e^{-tA}x)\,|\det(e^{-tA})|
=\rho_0(e^{-tA}x)\,\exp\{-t\,\mathrm{tr}(A)\}.
\]
In particular, if $\rho_0=\mathcal{N}(\mu_0,\Sigma_0)$, then Gaussianity is preserved and
\[
\rho_t=\mathcal{N}(\mu_t,\Sigma_t),\qquad
\mu_t=e^{tA}\mu_0,\qquad
\Sigma_t=e^{tA}\Sigma_0 e^{tA^\top}.
\]

For the linear SDE (Ornstein--Uhlenbeck type)
\[
dX_t = A X_t\,dt + B\,dW_t,\qquad D:=BB^\top,
\]
the density $\rho_t$ satisfies the Fokker--Planck equation
\[
\partial_t \rho_t(x)
=
-\nabla_x\cdot\{\rho_t(x)\,A x\}
+\frac12\sum_{i,j} D_{ij}\,\partial_{x_i}\partial_{x_j}\rho_t(x),
\]
(and for isotropic diffusion $B=gI$, the diffusion term is $(1/2)g^2\Delta\rho_t$).
The transition distribution is available in closed form,
\[
X_t\,|\,X_0=x_0 \sim \mathcal{N}\!\bigl(e^{tA}x_0,\ Q_t\bigr),\qquad
Q_t=\int_0^t e^{sA} D e^{sA^\top}\,ds,
\]
so if $\rho_0$ is Gaussian then $\rho_t$ is also Gaussian with mean and covariance governed by
\[
\dot\mu_t=A\mu_t,\qquad
\dot\Sigma_t=A\Sigma_t+\Sigma_tA^\top + D.
\]
Thus, in the linear case, Lipschitz bounds of the flow, stability (error amplification), and PDE evolution (continuity/Fokker--Planck) all fit together in a closed-form way.\index{SDE}

\section{Optimal transport}
\index{optimal transport}
\index{Wasserstein distance}
\label{subsec:ot-monge-kantorovich}

Optimal transport originated as the geometry of resource allocation,
and has evolved into a unified language that simultaneously provides
a distance between probability distributions and a notion of deformation/transport between them \citep{Villani2009}.
In statistics, Wasserstein distances naturally enter two-sample testing, density estimation, Bayesian computation, and robust inference,
because they capture shape differences beyond moments and include support mismatch \citep{panaretos2019statistical,goldfeld2024statistical}.
In machine learning, OT appears as a training objective or regularizer in generative models (WGAN, diffusion, flows),
and its dynamical formulation connects to stochastic processes, PDEs, and gradient flows (JKO).
Recent developments include entropic OT (Sinkhorn) for high-dimensional computation,
partial/unbalanced OT, and extensions to structured spaces (groups, graphs, manifolds) \citep{cuturi2013sinkhorn,arjovsky2017wgan,peyre2019computational,Sato2023}.

In this section we define, mathematically, the ``least wasteful'' way to move probability mass from one distribution to another.
Although OT is now ubiquitous in statistics and machine learning,
it can be surprisingly hard to keep track of ``what is optimized'' and ``what are the variables.''
We build the formulation carefully from a physical metaphor (moving piles of sand).

\paragraph{Picture: a probability distribution as a mass distribution}
View $\pi$ and $\rho$ as two mass distributions over $\mathbb{R}^d$.
(We use the same symbols for measures and densities; when needed we write $\pi(x)$ and $\rho(x)$.)
Think of $\pi$ as the initial pile of sand and $\rho$ as the desired final pile.
Since mass is conserved, the total masses agree (both equal to $1$ for probability measures).
The problem is to decide \emph{which grains go where}.

\paragraph{Cost design: pay according to transport distance}
To measure inefficiency, specify a cost for moving unit mass from $x$ to $y$:
\[
c(x,y)\ge 0.
\]
A standard choice is
\[
c(x,y)=\|x-y\|^p \quad (p\ge 1),
\]
and the squared distance ($p=2$) plays a central role in analysis and geometry.

\paragraph{The Monge problem: a transport map (deterministic assignment)}
The most direct viewpoint is to seek a \emph{transport map} $T:\mathbb{R}^d\to\mathbb{R}^d$
that sends $x$ to $T(x)$.
The constraint ``pushing forward $\pi$ by $T$ yields $\rho$'' is written
\[
T_{\#}\pi=\rho,
\]
meaning that if $X\sim\pi$ then $T(X)\sim\rho$.\index{pushforward}
Under this constraint, Monge's problem is
\begin{equation}
\inf_{T:\,T_{\#}\pi=\rho}\ \int_{\mathbb{R}^d} c\bigl(x,T(x)\bigr)\,d\pi(x).
\label{eq:monge}
\end{equation}
This formulation is elegant but analytically inconvenient, and $T$ may fail to exist.
If $\pi$ is discrete and $\rho$ is continuous, or if mass needs to split (e.g., one point in $\pi$ must be sent to two points in $\rho$),
a deterministic map is too restrictive.

\paragraph{Concrete pushforward formula}
While $T_{\#}\pi=\rho$ is an abstract constraint, it becomes explicit if $T$ is smooth and invertible.
Assume $\pi$ and $\rho$ have densities $\pi(x)$ and $\rho(y)$, and $T$ is a $C^1$ diffeomorphism.
Then for $Y=T(X)$ with $X\sim\pi$, the change-of-variables formula gives
\begin{equation}
\rho(y)=\pi\!\left(T^{-1}(y)\right)\,
\left|\det \nabla T^{-1}(y)\right|.
\label{eq:pushforward-inverse}
\end{equation}
Equivalently, with $y=T(x)$,
\begin{equation}
\rho\!\left(T(x)\right)\,
\left|\det \nabla T(x)\right|
=
\pi(x).
\label{eq:pushforward-forward}
\end{equation}
Here $\nabla T(x)$ is the Jacobian matrix and $|\det\nabla T(x)|$ quantifies how volumes expand or contract under $T$.
Intuitively, a small volume element $dx$ near $x$ is mapped to $dy\approx |\det\nabla T(x)|\,dx$,
so imposing mass conservation $\pi(x)\,dx=\rho(y)\,dy$ yields \eqref{eq:pushforward-forward}.
This formula is fundamental for understanding distribution transformation by flows, including the ODE-based maps that appear later.

\paragraph{The Kantorovich problem}
Kantorovich (1940s) relaxed the deterministic assignment and represented transport as a \emph{plan}.
Let $\gamma$ be a probability measure on $\mathbb{R}^d\times\mathbb{R}^d$,
interpreted as ``how much mass is transported from $x$ to $y$.''
Let
\[
\gamma\in\Gamma(\pi,\rho)
\]
denote that $\gamma$ has marginals $\pi$ and $\rho$, i.e.,
\[
\int \gamma(x,y)\,dy=\pi(x),\qquad  \int \gamma(x,y)\,dx=\rho(y).
\]
Then the Kantorovich problem is
\begin{equation}
\inf_{\gamma\in\Gamma(\pi,\rho)}\ \int_{\mathbb{R}^d\times\mathbb{R}^d} c(x,y)\,d\gamma(x,y).
\label{eq:kantorovich}
\end{equation}
Because $\gamma$ allows splitting mass, existence and convexity are ensured, and this becomes the foundation for theory and computation.
Monge optimizes over \emph{maps} $T$; Kantorovich optimizes over \emph{couplings} $\gamma$.\index{coupling}
Keeping track of what the optimization variables are is essential.

\paragraph{Wasserstein distance}
For $c(x,y)=\|x-y\|^p$, define
\begin{equation}
W_p(\pi,\rho)
=
\left(
\inf_{\gamma\in\Gamma(\pi,\rho)}
\int \|x-y\|^p\,d\gamma(x,y)
\right)^{1/p}.
\label{eq:wasserstein}
\end{equation}
This is the $p$-Wasserstein distance.
Whereas KL divergence measures pointwise density mismatch, $W_p$ measures the geometric cost of \emph{moving mass in space}.
Consequently, if supports are slightly translated, Wasserstein remains finite and meaningful, while KL may even become infinite.

{Why the squared distance is special?}
In the $p=2$ case ($W_2$), the geometry is especially rich.
Under suitable conditions, if an optimal transport map $T$ exists, then it can be written as
\[
T(x)=\nabla \varphi(x)
\]
for a convex potential $\varphi$.
A standard proof sketch is: optimal plans have \emph{cyclically monotone} support (a property specific to the squared cost),
which by Rockafellar's theorem implies that the support lies in the graph of the subgradient of a convex function.
If the map is single-valued, one obtains $T(x)=\nabla\psi(x)$ $\pi$-a.e.
For our purposes, the key message is not the technical conditions but the geometric intuition:
\emph{OT is inherently geometric}.
This also connects naturally to our earlier distinction between gradient fields (scores) and general vector fields (flows).

\paragraph{Dynamic formulation {\rm (the continuity equation and transport energy)}}
In Section~\ref{subsec:liouville-continuity}, we saw that a velocity field $v_t$ moves a distribution via the continuity equation
\[
\partial_t\rho_t+\nabla\cdot(\rho_t v_t)=0,\qquad \rho_0=\pi,\ \rho_1=\rho.
\]
OT can be expressed as selecting, among all admissible ways of moving distributions,
the one with minimal total cost of motion.
For $W_2$, the Benamou--Brenier dynamic formulation states \citep{benamou2000computational,villani2009optimal}:
\begin{align}\nonumber
W_2^2(\pi,\rho)
=
\inf_{\{\rho_t,v_t\}}\Big\{&
\int_0^1\int_{\mathbb{R}^d} \|v_t(x)\|^2\,\rho_t(x)\,dx\,dt :\\[3mm]
&\;\partial_t\rho_t+\nabla\cdot(\rho_t v_t)=0,
\ \rho_0=\pi,\ \rho_1=\rho\Big\}.
\label{eq:benamou-brenier}
\end{align}
The objective is the time integral of squared speed weighted by density, analogous to kinetic energy.
Thus $W_2$ is the distance obtained by transporting a distribution with the least kinetic energy.
This formulation tightly connects OT to the Liouville/continuity equation.

A full proof of \eqref{eq:benamou-brenier} is beyond our scope.
Since the argument is technical, it can be skipped on a first reading; we only give a sketch.

\smallskip
\noindent
\textbf{(i) Static $\Rightarrow$ dynamic ($\le$).}
Take any transport plan $\gamma\in\Gamma(\pi,\rho)$ and consider the linear interpolation
\[
X_t=(1-t)x+ty\qquad (x,y)\sim\gamma.
\]
Let $\rho_t:=(X_t)_\#\gamma$ and define $V(x,y):=y-x$.
Assuming $\gamma$ is sufficiently regular such that the conditional expectation is well-defined, for any test function $\phi\in C_c^\infty(\mathbb{R}^d)$,
\begin{align*}
\frac{d}{dt}\int \phi(z),d\rho_t(z)
&=
\frac{d}{dt}\int \phi\bigl((1-t)x+ty\bigr),d\gamma(x,y)\\[2mm]
&=
\int \nabla\phi(X_t)\cdot (y-x),d\gamma\\[2mm]
&=
\int \nabla\phi(z)\cdot v_t(z),d\rho_t(z),
\end{align*}
where one can take a velocity field $v_t(z)=\mathbb{E}[y-x\mid X_t=z]$.\index{velocity field}
Integration by parts then yields $\partial_t\rho_t+\nabla\cdot(\rho_t v_t)=0$.
Moreover, the action (kinetic energy) along this interpolated path is upper bounded by
\(
\int \|x-y\|^2\,d\gamma.
\)
Choosing $\gamma$ as an optimal plan gives
\[
\inf_{(\rho,v)}\int_0^1\int \|v_t\|^2\,\rho_t
\le
W_2^2(\pi,\rho).
\]

\smallskip
\noindent
\textbf{(ii) Dynamic $\Rightarrow$ static ($\ge$).}
Conversely, assume a smooth pair $(\rho_t,v_t)$ satisfies $\partial_t\rho_t+\nabla\cdot(\rho_t v_t)=0$.
Define the flow map $X_t$ by
\[
\dot X_t(x)=v_t(X_t(x)),\qquad X_0(x)=x.
\]
For smooth solutions, the continuity equation implies $\rho_t=(X_t)_\#\rho_0$.
Define the transport plan
\[
\gamma:=(\mathrm{id},X_1)_\#\rho_0,
\]
so $\gamma\in\Gamma(\pi,\rho)$.
By Cauchy--Schwarz,
\[
\|X_1(x)-x\|^2
=
\Bigl\|\int_0^1 \dot X_t(x)\,dt\Bigr\|^2
\le
\int_0^1 \|\dot X_t(x)\|^2\,dt
=
\int_0^1 \|v_t(X_t(x))\|^2\,dt.
\]
Integrating with respect to $\pi$ gives
\[
\int \|x-y\|^2\,d\gamma(x,y)
=
\int \|X_1(x)-x\|^2\,d\pi(x)
\le
\int_0^1\int \|v_t(z)\|^2\,d\rho_t(z)\,dt.
\]
Since the left-hand side is at least $W_2^2(\pi,\rho)$, we obtain
\[
W_2^2(\pi,\rho)
\le
\int_0^1\int \|v_t\|^2\,d\rho_t\,dt
\]
for any admissible $(\rho,v)$, hence
\[
W_2^2(\pi,\rho)
\le
\inf_{(\rho_t,v_t)}\int_0^1\int \|v_t\|^2\,\rho_t\,dt.
\]

\smallskip
\noindent
Combining (i) and (ii) yields \eqref{eq:benamou-brenier}.
In nonsmooth settings, one can proceed via approximation and weak convergence, or via the superposition principle.

\paragraph{Relation to flow matching: what does the learned vector field approximate?}\index{flow matching}
Flow matching learns a time-dependent vector field $v_t(x)$ that transports $\pi$ to $\rho$ from data.
Importantly, flow matching is not automatically optimal in the OT sense of \eqref{eq:benamou-brenier}.
Typically, flow matching proceeds by:
\begin{itemize}
\item first specifying an intermediate path $\{\rho_t\}_{t\in[0,1]}$ (choosing the ``route''),
\item then estimating a compatible velocity field $v_t$ by regression.
\end{itemize}
If the learning objective is designed to be consistent with an energy minimization of the form \eqref{eq:benamou-brenier},
the learned $v_t$ can be interpreted as approximating an ``OT-efficient'' transport.
In that case, the flow can be viewed as a distributional map aligned with Wasserstein geometry.\index{Wasserstein distance}

OT can feel difficult because the optimization variables are not finite-dimensional parameters but infinite-dimensional objects,
\[
\text{a map }T,\quad \text{a coupling }\gamma,\quad \text{or a velocity field }v_t.
\]
Once the physical picture of mass conservation and kinetic energy is internalized, however,
the continuity-equation viewpoint becomes natural.
In the next section, we use this OT-guided intuition to introduce the concrete objectives used in conditional flow matching.

\section{Conditional flow matching}
\index{conditional flow matching}
\label{subsec:probpath-cfm}

We now arrive at the point where the ``learning'' face of flow matching becomes clear \citep{lipman2023flowmatching}.
So far we have seen that distribution transport is governed by the continuity equation \eqref{eq:continuity}.
However, directly estimating $v_t$ from \eqref{eq:continuity} would require evaluating and integrating $\rho_t$ at each time,
which is statistically and computationally inconvenient.
The key trick is to work \emph{conditionally}, which makes vector-field regression easier
by (i) reducing ambiguity in multimodal settings, (ii) reducing the variance of the learning signal,
and (iii) improving coverage of thin regions.

Thus, CFM is best viewed not as increasing representational power, but as improving the \emph{conditioning} of the learning problem under the same expressivity,
making regression easier in terms of multimodality, variance, and coverage.
This viewpoint also aligns with later statistical tasks such as missing-data imputation and counterfactual generation,
which are inherently problems of sampling from conditional distributions.

\paragraph{The big picture: what is unknown, and what do we design?}\index{vector field}
Our goal is to learn a time-dependent vector field $v_t(x)$ that realizes transport
\[
X_0\sim \pi  \qquad{\Longrightarrow}\qquad X_1\sim \rho,
\]
where
\begin{itemize}
\item $\pi$: a tractable base distribution for random numbers (known),
\item $\rho$: the data distribution (unknown, but we have samples),
\item $v_t$: the velocity field to be learned (unknown).
\end{itemize}
A crucial design variable is an intermediate path
\(
\{\rho_t\}_{t\in[0,1]},
\)
called a \textbf{probability path}.
As long as $\rho_0=\pi$ and $\rho_1=\rho$, the path is not unique:
there are many ways to move one distribution to the other.
Flow matching exploits this freedom to stabilize learning.

\paragraph{Designing a probability path: intuitive interpolation}
A familiar choice is linear interpolation.
Sample a noise point $X_0\sim\pi$ and a data point $X_1\sim\rho$, and set
\begin{align}\label{line}
X_t = (1-t)X_0+tX_1.
\end{align}
Then each point moves along a straight line as $t$ increases.
The distribution of $X_t$ defines $\rho_t$.
Importantly, once an \emph{interpolation rule for samples} is specified,
many quantities needed for learning (notably the target velocity defined below) can be computed without explicitly evaluating $\rho_t$.

In practice, one often mixes noise into the interpolation, or adjusts variance as a function of $t$ to improve stability,
especially to connect with diffusion-model paths.
The key point is:
\begin{quote}
The goal is not to \emph{evaluate} formulas for $\rho_t$, but to \emph{construct} a sampling rule that defines $\rho_t$.
\end{quote}

\paragraph{Conditional paths: fix $X_1$}
A second key idea is to treat intermediate distributions conditionally on the data endpoint.
Sample $X_1\sim\rho$ and fix $X_1=x_1$, then consider a conditional path
\[
X_t \sim \rho_t(\cdot\,|\,x_1).
\]
For linear interpolation \eqref{line}, conditional on $x_1$, we have $X_0\sim\pi$ and $X_t$ becomes an affine transform of $\pi$ (shift and scaling).
This ``conditioning on $x_1$'' will greatly simplify the learning objective.

\paragraph{Learning a velocity field by regression: conditional flow matching (CFM)}\index{conditional flow matching}
Intuitively, the ``true velocity'' of a particle at time $t$ is determined by the sampling rule used to create the path.
Under conditioning, this velocity can often be computed as a \emph{target velocity}
\[
u_t(x\,|\,x_1),
\]
and we fit $v_t(x)$ to match it in least squares:
\begin{equation}
\min_{\theta}\ 
\mathbb{E}\left[
\left\|v_\theta(t,X_t)-u_t(X_t\,|\,X_1)\right\|^2
\right].
\label{eq:cfm-pop}
\end{equation}
The expectation is over $t\sim\mathrm{Unif}[0,1]$, $X_1\sim\rho$, and $X_t\sim\rho_t(\cdot\,|\,X_1)$.
Thus flow matching training is essentially a \emph{nonlinear least squares} problem:
\begin{itemize}
\item regressors: $(t,X_t)$,
\item response: target velocity $u_t(X_t\,|\,X_1)$,
\item regression function: $v_\theta(t,x)$.
\end{itemize}
In statistical language, this is an instance of $M$-estimation with a squared loss.

\paragraph{Why conditioning helps: advantages from the regression viewpoint}
In \eqref{eq:cfm-pop}, conditioning on $X_1$ reduces learning difficulty in three ways:
\begin{itemize}
\item[(i)] {Less blur from multimodality:}
if the desired mode is ambiguous given $(t,x)$, conditioning on $X_1$ resolves that ambiguity and yields a more consistent teacher signal.
\item[(ii)] {Lower conditional variance of the teacher signal:}
more information reduces the variability of $u_t$, which tends to reduce the variance of minibatch gradients and stabilize training.
\item[(iii)] {A global problem becomes many local tasks:}
sampling the endpoint $X_1$ makes each minibatch a local regression near that endpoint, which helps cover low-density regions.
\end{itemize}

\paragraph{Empirical $L^2$ loss}
Given data $x_1,\ldots,x_n\overset{\mathrm{iid}}{\sim}\rho$, approximate $\rho$ by the empirical measure
\[
\widehat\rho_n=\frac1n\sum_{i=1}^n\delta_{x_i}.
\]
Then the empirical objective corresponding to \eqref{eq:cfm-pop} is
\begin{align}
\widehat{\mathcal{L}}_n(\theta)
&=
\mathbb{E}_{t\sim\mathrm{Unif}[0,1]}
\Biggl[
\frac1n\sum_{i=1}^n
\mathbb{E}_{X_t\sim\rho_t(\cdot\,|\,x_i)}
\left\|
v_\theta(t,X_t)-u_t(X_t\,|\,x_i)
\right\|^2
\Biggr].
\label{eq:cfm-erm}
\end{align}
This is ordinary empirical risk minimization for the regression problem with inputs $(t,X_t,x_i)$ and target $u_t(X_t\,|\,x_i)$.

For each $i$, draw $t_{ik}\sim\mathrm{Unif}[0,1]$ and represent conditional sampling $X_t\sim\rho_t(\cdot\,|\,x_i)$ by
$X_{t_{ik}}^{(ik)}=T(t_{ik},x_i,\xi_{ik})$ using random seeds $\xi_{ik}$.
Then \eqref{eq:cfm-erm} can be approximated by
\begin{equation}
\widehat{\mathcal{L}}_{n,K}(\theta)
=
\frac1{nK}\sum_{i=1}^n\sum_{k=1}^K
\left\|
v_\theta\!\left(t_{ik},X_{t_{ik}}^{(ik)}\right)
-
u_{t_{ik}}\!\left(X_{t_{ik}}^{(ik)}\,|\,x_i\right)
\right\|^2.
\label{eq:cfm-empirical}
\end{equation}
In practice, one often sets $K=1$ and runs minibatch SGD.

\paragraph{Why integrals disappear}
The biggest advantage is that we never need to evaluate $\rho_t$ (or its normalizing constants) inside training.
We only need:
\begin{itemize}
\item sample $x_1$ from data,
\item sample $x_0$ from the base distribution,
\item compute $x_t$ and the target velocity $u_t$ by algebra according to the designed path.
\end{itemize}
In other words, we do not ``compute densities''; we ``generate samples and regress.''
This shift is a major reason why deep learning based training can be stable in practice,
often enjoying computational advantages over likelihood-based methods such as continuous normalizing flows that rely on \eqref{eq:logdens_along_flow}.

\paragraph{Target velocity for the linear interpolation path: closed form}
For the linear interpolation \eqref{line} with $X_0\sim\pi$ and $X_1\sim\rho$, the time derivative is
\begin{equation}
\dot{X}_t=\frac{d}{dt}X_t = X_1-X_0.
\label{eq:linear_target_v_pair}
\end{equation}
Thus the teacher signal is simply the difference vector once a pair $(X_0,X_1)$ is given.
Moreover, for $t<1$, we can solve $X_0=(X_t-tX_1)/(1-t)$, so \eqref{eq:linear_target_v_pair} can be written as a function of $(t,X_t,X_1)$:
\begin{equation}
\dot{X}_t
=
\frac{X_1-X_t}{1-t}
\qquad (t<1).
\label{eq:linear_target_v_cond}
\end{equation}
This makes explicit that we can compute the teacher velocity from input $(t,X_t)$ and conditioning $X_1$.
Since \eqref{eq:linear_target_v_cond} can become large when $t\approx 1$, stabilization tricks include adding noise to the path or changing the sampling distribution of $t$.
The core point remains: CFM closes the learning loop using only \emph{sample generation + regression}, without density evaluation.

\paragraph{Why use neural networks? as flexible nonparametric regression}
Representing $v_\theta(t,x)$ by a neural network can be viewed as performing fast flexible nonparametric regression.
Because \eqref{eq:cfm-pop} uses squared loss, the population minimizer is (under regularity conditions) a conditional mean:
\[
v^\ast(t,x)=\mathbb{E}\left[u_t(X_t\,|\,X_1)\,|\,X_t=x\right].
\]
Neural networks can then be interpreted as approximators of this conditional mean function,
which reduces the ``black-box'' feel and reconnects to the statistical intuition ``deep learning = function approximation.''

\paragraph{What is ``conditional'' here?}
The conditioning here is not a causal conditional independence concept;
it simply means ``fix the data endpoint $x_1$ and define an interpolation path conditioned on it.''
The main benefit is technical: it makes the target velocity computable and closes the regression problem.

\paragraph{Choosing a coupling: variance reduction in learning}
In CFM, how we pair $(x_0,x_1)$ within a minibatch (the coupling) influences training stability through the variance of the teacher signal $u_t$.\index{stability}
With independent coupling (random pairing), even nearby $x_t$ may be associated with unrelated endpoints $x_1$,
increasing conditional variance in regression.

\medskip
\noindent\textbf{A concrete OT coupling (minibatch version).}
Let the minibatch size be $m$.
Prepare $\{x_0^{(j)}\}_{j=1}^m\sim\pi$ and data points $\{x_1^{(i)}\}_{i=1}^m$.
Define the cost matrix
\[
C_{ij}=\|x_0^{(j)}-x_1^{(i)}\|^2,
\]
and choose the optimal assignment (a permutation) $\sigma\in S_m$ by
\[
\sigma^\ast=\argmin_{\sigma\in S_m}\sum_{i=1}^m C_{i,\sigma(i)}.
\]
Then pair $(x_0^{(\sigma^\ast(i))},x_1^{(i)})$.
For the linear bridge, the teacher signal is roughly $u_t\simeq x_1-x_0$,
so OT pairing tends to make $\|x_1-x_0\|$ small and reduces the variability of $u_t$ among nearby $x_t$.

\paragraph{Practical approximation (Sinkhorn)}
Instead of exact assignment, one can solve an entropically regularized OT problem:
\[
\min_{P\in\mathbb{R}_+^{m\times m}}
\sum_{i,j}P_{ij}C_{ij}
+\varepsilon\sum_{i,j}P_{ij}(\log P_{ij}-1)
\quad \text{s.t.}\quad
P{1}=\tfrac1m {1},\ P^\top {1}=\tfrac1m {1}.
\]
Here $P=(P_{ij})$ is a coupling (transport plan) between the two empirical distributions
\[
\frac1m\sum_{i=1}^m\delta_{x_1^{(i)}}\quad \text{and}\quad \frac1m\sum_{j=1}^m\delta_{x_0^{(j)}},
\]
and $P_{ij}\ge0$ indicates the strength of matching $x_1^{(i)}$ with $x_0^{(j)}$.
The constraints enforce uniform marginals (each row and column sums to $1/m$), so $mP_{ij}$ can be interpreted as a conditional probability $\mathbb{P}(J=j\mid I=i)$.
Sampling pairs $(i,j)$ according to $P$ yields a randomized coupling.

OT coupling tends to match nearby points and functions as a \emph{variance reduction} device by stabilizing the teacher signal.
This effect is systematized in \citet{tong2023improving}.
For implementation details and code examples, see \citet{lipman2024flowmatchingguide}.

\begin{figure}[!tbp]
  \centering
  \IfFileExists{independent_coupling.jpg}{%
    \includegraphics[width=0.7\linewidth]{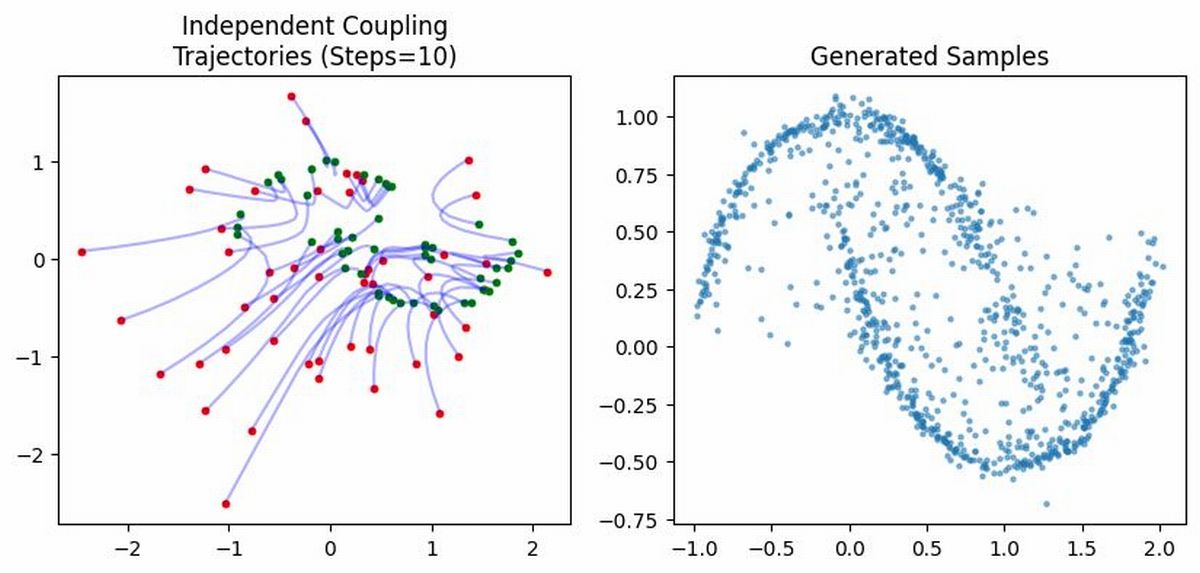}\\
    {\small (a) Independent coupling}
  }{%
    {\small (a) Omitted because independent\_coupling.jpg is not found.}
  }

  \vspace{1.5em}

  \IfFileExists{ot_coupling.jpg}{%
    \includegraphics[width=0.7\linewidth]{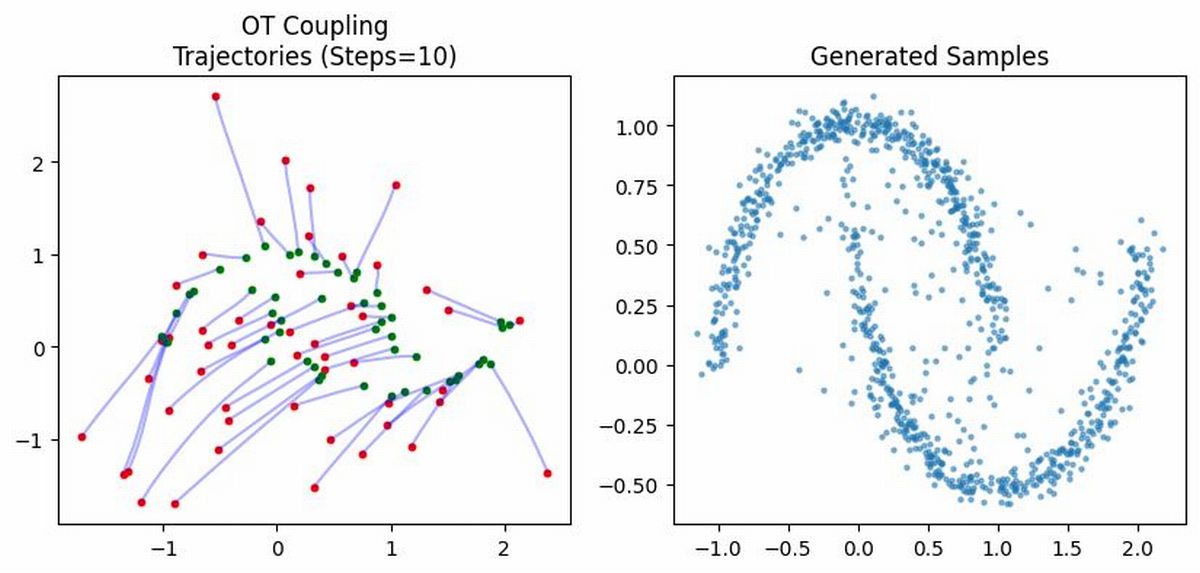}\\
    {\small (b) OT coupling}
  }{%
    {\small (b) Omitted because ot\_coupling.jpg is not found.}
  }

  \caption{Pairings induced by different couplings (illustration).}
  \label{fig:fm_coupling_compare}
\end{figure}

\paragraph{Outlook: which probability path should we choose?}
We have seen that flow matching reduces to regression.
The next question is: which conditional path $\rho_t(\cdot\,|\,x_1)$ yields stable learning and good generation?
Several design principles exist \citep{sohl2015deep,ho2020ddpm,song2021scorebased}:
\begin{itemize}
\item linear-interpolation paths (transport is intuitive),
\item noise-injected interpolation (smoothing for learning),
\item diffusion-consistent paths (connection to score learning).
\end{itemize}
The choice also affects how we interpret the learned $v_t$ (more OT-like or more diffusion-like).
We now explain how to generate samples once a vector field has been learned.

\section{Deterministic sampling via ODE integration}
\index{ODE}
\index{ODE solver}
\label{subsec:ode-sampling}

We now describe how to generate data points using the learned time-dependent vector field $v_t(x)$.
Whereas diffusion models are based on stochastic differential equations (SDEs),
flow matching typically generates samples by solving an ordinary differential equation (ODE) \citep{albergo2025stochastic}.\index{flow matching}
For statisticians, ODEs may feel unfamiliar, but what we need is only the intuition:
``advance in small time steps along the arrows.''
No advanced analysis is required for the purposes of this chapter \citep{chen2023geom}.

\paragraph{The generative backbone: from noise to data by flowing along arrows}
After training, we obtain an estimated vector field $\widehat v(t,x)$.
Generation is essentially two lines:
\begin{enumerate}
\item Sample an initial point $X_0\sim \pi$ (e.g., standard normal).
\item Solve the ODE
\begin{equation}
\frac{dX_t}{dt}=\widehat v(t,X_t),\qquad t\in[0,1],
\label{eq:gen-ode}
\end{equation}
from $t=0$ to $t=1$, and output $X_1$.
\end{enumerate}
If learning is successful, $X_1$ follows (approximately) the data distribution $\rho$.
The essence is simply ``assign a vector field to noise points and transport them by the flow.''

\paragraph{What is an ODE solver? approximating continuous time by discrete steps}\index{ODE}
Closed-form solutions of ODEs are rare, so we approximate numerically by discretizing time.
Modern ODE solvers implement high-order Runge--Kutta methods with adaptive step sizes,
implicit methods for stiffness, event handling, and more,
and have become ubiquitous as general-purpose engines for high-accuracy and safe integration.
Combined with automatic differentiation, they also enable sensitivity (gradient) computation and play a central role in Neural ODEs and continuous normalizing flows \citep{chen2018neuralode}.
In Python, \texttt{scipy.integrate.solve\_ivp} is standard; \texttt{diffrax} (JAX) and \texttt{torchdiffeq} (PyTorch) are also widely used.

The simplest scheme is the forward Euler method:
for step size $\Delta t$,
\[
X_{t+\Delta t}\approx X_t+\Delta t\,\widehat v(t,X_t).
\]
This is literally ``look at the arrow and take one step.''
More accurate solvers can be understood as smarter ways to perform this step to improve accuracy and stability.
In large-scale probabilistic and generative models, continuous-time representations have become practical,
and a common workflow is ``let the solver automatically trade accuracy for computation.''

\paragraph{Implication of determinism: generation is a map}
Since \eqref{eq:gen-ode} contains no injected noise, specifying $X_0$ uniquely determines the entire path $X_t$.
Thus we define a (numerical) map $\widehat T$ by
\[
X_1 = \widehat T (X_0).
\]
This is what it means that ODE-based generation is a deterministic transformation.
This property offers several advantages for statistical applications.

\paragraph{Advantage 1: inversion becomes tangible}
Solving the same ODE backward in time (integrating from $t=1$ to $t=0$) maps a data point $X_1$ back to the noise side.
That is, we can numerically construct an inverse for $\widehat T$.
This contrasts with SDE-based generation (diffusion models), where paths branch stochastically.
Having a usable inverse is directly useful for
\begin{itemize}
\item clustering and outlier diagnosis in latent space,
\item stability of representations for the same data point,
\item controllability in conditional generation (via label/covariate embeddings).
\end{itemize}

\paragraph{Advantage 2: the latent variable $X_0$ has a clear role}
Although $X_0$ is just a random input, the deterministic map $\widehat T$ associates it uniquely to $X_1$.
Thus $X_0$ can be treated as a coordinate of generation.
Statistically, it plays a role analogous to a latent factor in latent-variable models,
and (depending on the chosen dimension) may serve as a low-dimensional representation explaining $X_1$.
While interpretability is model-dependent, determinism helps stabilize meaning in latent space.

\paragraph{Advantage 3: computational efficiency (often lighter than SDE simulation)}
Diffusion-model generation often simulates an SDE (or an associated reverse-time process) with many small steps.
ODE simulation is simpler because it lacks noise injection, and higher-order solvers may require fewer steps.
Hence generation can often be faster.
That said, the required number of steps depends on smoothness of the vector field and training quality,
so it is more accurate to say ``ODE generation often has room to be made fast'' rather than ``ODE is always faster.''

\paragraph{SDE (diffusion) versus ODE: what is fundamentally different?}
The difference is ``noise or no noise,'' but the implications are substantial:
\begin{itemize}
\item \textbf{SDE:} includes diffusion (smoothing) and explores via randomness.
In principle it can reach complex distributions, but generation is stochastic and often requires many steps.
\item \textbf{ODE:} transports as a flow.
Generation is deterministic and easier to control; inversion and latent representations can be advantageous,
but poor vector fields may yield unstable flows (numerically, trajectories may fold or amplify errors).
\end{itemize}
From a statistical perspective, ODE advantages are pronounced when one wants the same output from the same input, or when one wants to analyze data via an inverse map.

\paragraph{Practical note: solver choice and step size}
An ODE solver trades off accuracy and computation.
Large steps are fast but inaccurate; small steps are accurate but slow.
In practice, one often:
\begin{itemize}
\item starts with a coarse discretization and increases steps if sample quality is insufficient, or
\item uses adaptive-step solvers that automatically adjust step size based on error estimates.
\end{itemize}
The point is that an ODE solver is not magical; it is simply a computational procedure for moving along an arrow field.

\begin{algorithm}[ht]
\caption{Deterministic sampling via ODE integration (flow matching)}
\label{alg:ode-sampling}
\begin{algorithmic}[1]
\Require Trained vector field $\widehat v(t,x)$, base distribution $\pi$, time interval $[0,1]$, number of steps $K$ (or error tolerance $\varepsilon$)
\Ensure Generated sample $x_1$
\State $x_0 \sim \pi$ \Comment{e.g., $x_0\sim N(0,I_d)$}
\State Prepare a time grid: $0=t_0<t_1<\cdots<t_K=1$ (e.g., $t_k=k/K$)
\State $x \leftarrow x_0$
\For{$k=0,1,\dots,K-1$}
  \State $\Delta t \leftarrow t_{k+1}-t_k$
  \State $x \leftarrow x + \Delta t\, \widehat v(t_k, x)$
  \Comment{forward Euler (minimal form)}
  \State \Comment{replace this line by a Runge--Kutta update for higher accuracy}
\EndFor
\State $x_1 \leftarrow x$
\State \Return $x_1$
\end{algorithmic}
\end{algorithm}

\section{Relationship between scores and velocity fields}
\index{score function}
\index{velocity field}
\label{subsec:score-velocity-bridge}

We summarize how the score field (a gradient field) learned by score matching
and the velocity field (a general vector field) learned by flow matching
are connected through continuous-time distribution equations.
The point here is not to introduce diffusion models as a specific model class;
rather, we emphasize a general structure:
\emph{stochastic (noisy) dynamics and deterministic flows can describe the same evolution of marginal densities}.

\paragraph{Two descriptions: particle dynamics and distribution dynamics}
Let $X_t\in\mathbb{R}^d$ be a particle (sample) evolving in continuous time, with density $\rho_t(x)$.
There are two representative descriptions:

\begin{itemize}
\item \textbf{Deterministic flow (ODE):}
\begin{equation}
\frac{dX_t}{dt}=v_t(X_t).
\label{eq:bridge-ode}
\end{equation}
Then $\rho_t$ satisfies the continuity equation
\begin{equation}
\partial_t\rho_t(x)+\nabla_x\cdot\{\rho_t(x)v_t(x)\}=0.
\label{eq:bridge-continuity}
\end{equation}

\item \textbf{Noisy flow (SDE):}
\begin{align}\label{eq:bridge-sde}
dX_t=f_t(X_t)\,dt+g(t)\,dW_t,
\end{align}
where $W_t$ is a $d$-dimensional Brownian motion and $g(t)\ge 0$ is (for simplicity) an isotropic diffusion coefficient.
Take a smooth test function with compact support $\varphi \in C_c^\infty(\mathbb{R}^d)$ and apply It\^{o}'s formula to $\varphi(X_t)$:
\[
d\varphi(X_t)
=
\nabla\varphi(X_t)\cdot f_t(X_t)\,dt
+\frac{g(t)^2}{2}\Delta\varphi(X_t)\,dt
+g(t)\,\nabla\varphi(X_t)\cdot dW_t .
\]
(See, e.g., \citet{oksendal2003stochastic}.)
Taking expectations and using that the stochastic integral has mean $0$ yields
\[
\frac{d}{dt}\mathbb{E}\bigl[\varphi(X_t)\bigr]
=
\mathbb{E}\Bigl[\nabla\varphi(X_t)\cdot f_t(X_t)
+\frac{g(t)^2}{2}\Delta\varphi(X_t)\Bigr].
\]
If $X_t$ has density $\rho_t$ then
\[
\mathbb{E}[\varphi(X_t)]
=
\int_{\mathbb{R}^d}\varphi(x)\rho_t(x)\,dx,
\]
so
\[
\int_{\mathbb{R}^d} \varphi(x)\,\partial_t\rho_t(x)\,dx
=
\int_{\mathbb{R}^d}\Bigl(\nabla\varphi(x)\cdot f_t(x)
+\frac{g(t)^2}{2}\Delta\varphi(x)\Bigr)\rho_t(x)\,dx .
\]
Assuming boundary terms vanish, integration by parts gives
\[
\int_{\mathbb{R}^d}\nabla\varphi(x)\cdot f_t(x)\,\rho_t(x)\,dx
=
-\int_{\mathbb{R}^d}\varphi(x)\,\nabla\cdot\{\rho_t(x)f_t(x)\}\,dx,
\]
\[
\int_{\mathbb{R}^d} (\Delta\varphi(x))\,\rho_t(x)\,dx
=
\int_{\mathbb{R}^d}\varphi(x)\,\Delta\rho_t(x)\,dx,
\]
hence for all $\varphi$,
\[
\int_{\mathbb{R}^d} \varphi(x)\,\Big\{\partial_t\rho_t(x)
+\nabla\cdot\{\rho_t(x)f_t(x)\}
-\frac{g(t)^2}{2}\Delta \rho_t(x)
\Big\}\,dx=0 .
\]
Therefore,
\begin{align}\label{eq:bridge-fp}
\partial_t\rho_t(x)
=
-\nabla\cdot\{\rho_t(x)f_t(x)\}
+\frac{g(t)^2}{2}\Delta \rho_t(x).
\end{align}
This is the Fokker--Planck (forward Kolmogorov) equation.
The first term represents advection (transport of probability mass by $f_t$),
and the second term represents diffusion (smoothing by Brownian noise), with diffusion coefficient $g(t)^2/2$ in the isotropic case.
\end{itemize}

Statistically, \eqref{eq:bridge-ode}--\eqref{eq:bridge-continuity} describe pure transport,
while \eqref{eq:bridge-sde}--\eqref{eq:bridge-fp} describe transport plus smoothing.
Although the underlying particle processes differ, it is crucial that
\emph{at the level of density evolution, the two descriptions can be transformed into each other}.

\paragraph{A score-based identity: the diffusion term is a divergence of $\rho_t s_t$}
Define the score field
\[
s_t(x)=\nabla_x\log \rho_t(x),
\]
assuming $\rho_t$ is positive and smooth.
Then the Laplacian term can be rewritten as
\begin{equation}
\Delta\rho_t
=
\nabla_x\cdot\{\rho_t\,\nabla_x\log\rho_t\}
=
\nabla_x\cdot\{\rho_t\,s_t\}.
\label{eq:bridge-laplace}
\end{equation}
This is a direct calculus identity and does not require probability theory.

\paragraph{An ODE that yields the same density evolution (probability flow ODE)}\index{ODE}
Substituting \eqref{eq:bridge-laplace} into \eqref{eq:bridge-fp} gives
\[
\partial_t\rho_t
=
-\nabla\cdot(\rho_t f_t)
+\frac{g(t)^2}{2}\nabla\cdot(\rho_t s_t)
=
-\nabla\cdot\Bigl(\rho_t\bigl[f_t-\frac{g(t)^2}{2}s_t\bigr]\Bigr).
\]
Thus, defining
\begin{equation}
v_t(x)=f_t(x)-\frac{g(t)^2}{2}\,s_t(x)
\label{eq:bridge-v}
\end{equation}
makes \eqref{eq:bridge-fp} take the same form as the continuity equation \eqref{eq:bridge-continuity}.
In words:
\begin{quote}
The marginal density evolution of an SDE can be represented as a deterministic ODE flow with the appropriately modified velocity field \eqref{eq:bridge-v}.
\end{quote}
This ODE is often called the \emph{probability flow} ODE, but we focus on the substance of \eqref{eq:bridge-v}.

\paragraph{Meaning of the formula: score learning and velocity learning are dual}
Equation \eqref{eq:bridge-v} shows that scores $s_t$ and velocity fields $v_t$ are, in a sense, interchangeable representations.
Equivalently:
\begin{itemize}
\item If we can learn the Stein score $s_t$, then \eqref{eq:bridge-v} constructs a velocity field and enables deterministic generation via an ODE.
\item If we can learn the velocity field directly, then we can perform distribution transport without evaluating densities.
\end{itemize}
From this viewpoint, score matching and flow matching share the design philosophy ``learn without density evaluation,''
but differ in whether they estimate a \emph{gradient field} (scores) or a \emph{general vector field} (velocities).

\paragraph{Connection to probability-path design: the shared regression structure}
Both frameworks also share the feature that learning appears as regression.
In score learning, one perturbs observations (e.g., with Gaussian noise) and estimates the score $s_t(x)=\nabla\log\rho_t(x)$ by squared loss.
In flow matching, one designs a probability path, constructs a target velocity, and reduces to $L^2$ regression.
In both cases, we ``generate samples and regress'' rather than ``compute densities,'' which contributes to practical stability.

We do not delve into diffusion-model details (forward/reverse processes or score-based generation) here.
Instead, we highlighted the general identity \eqref{eq:bridge-v} showing that
the score field $s_t=\nabla\log\rho_t$ and the velocity field $v_t$ describe the same density evolution $\{\rho_t\}$ in different coordinates.
This viewpoint underpins the design of flow matching: it is legitimate to learn $v_t$ as a regression function without density evaluation.
In essence, score matching and flow matching provide equivalent continuous representations of distribution transport, interconnected via a gauge transformation of the drift and diffusion terms in the Fokker--Planck equation.

\paragraph{Summary of this chapter}
\begin{itemize}
\item The continuity equation describes distribution evolution as mass conservation under a velocity field $v_t$ (Section~\ref{subsec:liouville-continuity}).
\item Optimal transport provides a geometric reference axis between distributions through couplings and Wasserstein distance (Section~\ref{subsec:ot-monge-kantorovich}).\index{optimal transport}
\item CFM designs probability paths to avoid density computation, reduces velocity learning to regression, and enables deterministic sample generation by integrating the learned $v_t$ (Sections~\ref{subsec:probpath-cfm}--\ref{subsec:ode-sampling}).
\end{itemize}

\paragraph{Preview of the next chapter}
The next chapter develops learning-theoretic guarantees:
how regression error in the estimated velocity field propagates to distributional transformation error at the final time
(e.g., in Wasserstein-type distances or test-function discrepancies).

\chapter{Theory of Flow Learning: Approximation, Generalization, and Orthogonality}
\index{generalization}

\section{A three-way decomposition of learning error}
\index{learning error}
In this chapter we assess, from a theoretical perspective, how well we can learn a flow (a vector field).
In deep learning--based estimation, error rarely has a single cause, so before any discussion we must clarify
what we mean by ``good'' or ``bad'' estimation.
We organize the learning error into the following three components:
(1) \emph{approximation error} (expressiveness limitation of the model class),
(2) \emph{estimation error} (finite-sample fluctuation), and
(3) \emph{optimization error} (imperfect convergence of the training algorithm).
The precise inequality varies with the setting, but keeping track of
\emph{which component a design choice aims to control}
makes the subsequent theory much easier to read.

\paragraph{Setup}
Let $v^\ast(t,x)$ denote the true (or ideal) velocity field as a function of time $t$ and state $x$,
and let $\widehat v(t,x)$ be the learned estimator.
As an error metric, consider the squared error risk with respect to a reference measure $\mu(dt,dx)$:
\[
\mathcal{E}(\widehat v)
=
\iint \| \widehat v(t,x)-v^\ast(t,x)\|^2\,\mu(dt,dx).
\]
Here $\mu$ corresponds to the sampling distribution used during training.
For the purposes of this book, it suffices to keep in mind such a squared-loss (regression) risk.

\paragraph{(1) Approximation error: expressiveness of the model class}
Fix a function class ${\cal V}$ represented, for example, by a neural network family.
Even with infinite data, the error cannot vanish unless $v^\ast\in{\cal V}$.
Let
\[
v_{\cal V}^\dagger
\in
\argmin_{v\in{\cal V}} \mathcal{E}(v)
\]
be the best approximation within ${\cal V}$.
Then $\mathcal{E}(v_{\cal V}^\dagger)$ is the unavoidable error induced by the model class.
Increasing network width or depth primarily aims to reduce this approximation component.

\paragraph{(2) Estimation error: finite-sample fluctuation}
In practice we observe only $n$ training samples.
We therefore minimize an empirical risk $\widehat{\mathcal{E}}_n(v)$ in place of the population risk $\mathcal{E}(v)$.
Writing (approximately)
\[
\widehat v_n
\approx
\argmin_{v\in{\cal V}} \widehat{\mathcal{E}}_n(v),
\]
we inevitably incur an error because $\widehat{\mathcal{E}}_n$ approximates $\mathcal{E}$ with finite data.
This \emph{estimation error} typically appears as a trade-off between the complexity of ${\cal V}$ and the sample size $n$.
Intuitively, overly complex models overfit (large estimation error),
whereas overly simple models suffer from dominant approximation error.

\paragraph{(3) Optimization error: imperfect convergence}
Deep learning rarely minimizes $\widehat{\mathcal{E}}_n$ exactly.
Instead, an iterative algorithm such as stochastic gradient descent is run for a finite number of steps.
Hence the obtained iterate $\widehat v_{n,T}$ may be suboptimal by
\[
\widehat{\mathcal{E}}_n(\widehat v_{n,T})
-
\inf_{v\in{\cal V}}\;\widehat{\mathcal{E}}_n(v).
\]
This \emph{optimization error} depends on the step size, batch size, number of iterations,
initialization, and regularization.
In deep learning practice, controlling optimization error is often the decisive factor.

\paragraph{How to read the decomposition}
Conceptually, one may view
\[
\mathcal{E}(\widehat v_{n,T})
=
\underbrace{\mathcal{E}(v_{\cal V}^\dagger)}_{\text{approximation}}
+
\underbrace{\bigl\{\mathcal{E}(\widehat v_n)-\mathcal{E}(v_{\cal V}^\dagger)\bigr\}}_{\text{estimation}}
+
\underbrace{\bigl\{\mathcal{E}(\widehat v_{n,T})-\mathcal{E}(\widehat v_n)\bigr\}}_{\text{optimization}},
\]
where the exact form depends on the setting.
Most of the learning theory in this chapter addresses approximation and estimation errors,
while implementation choices and numerical experiments relate strongly to optimization error.
In what follows, we will explicitly indicate which term is being controlled.

\section{Approximation theory for neural-network vector fields}
\index{neural network}
\index{vector field}
\label{subsec:nn-approx-vfield}

The goal of this section is \emph{not} to survey the technical details of deep learning with neural networks (NNs).
Instead, we translate NNs into the language of statistics and emphasize a simple viewpoint:
\begin{quote}
A neural network is a flexible basis expansion (i.e., a function class)
for approximating high-dimensional regression functions.
\end{quote}
In flow matching we learn not a scalar density but a $d$-dimensional vector field
\[
v_t(x)\in\mathbb{R}^d\qquad (t\in[0,1],\ x\in\mathbb{R}^d),
\]
so statistically we are estimating a vector-valued regression function
\[
(t,x)\mapsto v_t(x)
\]
with input dimension $d+1$.
From this viewpoint we review, at a high level, why NNs are useful as approximators.

Thinking of NNs primarily as ``input/hidden/output layers'' can obscure their statistical role.
The essence is simpler: an NN is a parameterized function family of the form
\[
f_\theta(x)
=
A_L\circ \sigma \circ A_{L-1}\circ \cdots \circ \sigma \circ A_1(x),
\]
where $\sigma:\mathbb{R}\to\mathbb{R}$ is a fixed nonlinearity (activation function),
each $A_\ell$ is an affine map $A_\ell(z)=W_\ell z+b_\ell$, and
$\theta=((W_1,b_1),\ldots,(W_L,b_L))$ collects all parameters.
Thus an NN is simply a \emph{parametric function class defined via repeated composition}
of linear maps and a fixed nonlinearity.

\paragraph{Lipschitz continuity and stabilization of learned velocity fields}\index{Lipschitz}
A map $g:\mathbb{R}^d\to\mathbb{R}^m$ is called $K$-Lipschitz (with respect to the Euclidean norm) if
\[
\|g(x)-g(y)\|\le K\|x-y\|
\qquad (\forall x,y\in\mathbb{R}^d).
\]
The smallest such $K$ is the Lipschitz constant, denoted
\[
\mathrm{Lip}(g):=\inf\Bigl\{K\ge 0:\ \|g(x)-g(y)\|\le K\|x-y\|\ \ (\forall x,y)\Bigr\}.
\]
Assume the activation $\sigma$ is Lipschitz (e.g., ReLU is $1$-Lipschitz).
Then the Lipschitz constant of the composition $f_\theta$ admits the bound
\[
\mathrm{Lip}(f_\theta)
\le
\Bigl(\prod_{\ell=1}^{L}\|W_\ell\|_{\mathrm{op}}\Bigr)\,\mathrm{Lip}(\sigma)^{L-1},
\]
where $\|\cdot\|_{\mathrm{op}}$ is the operator norm.
Consequently, regularization or normalization that controls the spectral norms $\|W_\ell\|_{\mathrm{op}}$
(e.g., spectral normalization) can be interpreted as suppressing excessively steep learned velocity fields.
This becomes important because ODE stability (error amplification) depends directly on such Lipschitz control.

\paragraph{Continuity with kernel methods and basis expansions: NNs as ``adaptive bases.''}
NNs may feel special only if we forget that they connect smoothly to classical nonparametric estimation.
A one-hidden-layer scalar network can be written as
\[
f(x)=\sum_{m=1}^M a_m\,\sigma(w_m^\top x+b_m),
\]
which is a linear combination of basis functions.
If $(w_m,b_m)$ are fixed, this is simply linear regression on the basis $\sigma(w_m^\top x+b_m)$.
Kernel regression and kernel density estimation also admit expansions such as
\[
\widehat{f}(x)=\sum_{i=1}^n \alpha_i\,K(x,X_i).
\]
The difference is that kernel methods fix the basis $K(\cdot,X_i)$ at data locations,
whereas NNs \emph{learn the basis shapes} through $(w_m,b_m)$.
Therefore, rather than a mysterious ``black box,''
an NN is naturally viewed as \emph{nonparametric regression with automatically designed bases}.

\paragraph{What does the universal approximation theorem guarantee?}
Roughly speaking:
\begin{quote}
With sufficiently many hidden units $M$, even a one-hidden-layer NN can approximate
any continuous function on a compact set arbitrarily well.
\end{quote}
This supports the view that NNs are not coefficient models for interpretation,
but flexible approximation classes optimized to fit a loss.

\paragraph{Vector-valued outputs: nothing fundamentally new}
In flow matching the target $v_t(x)$ is vector-valued.
We may approximate each component separately:
\[
v_t(x)=\bigl(v_{t,1}(x),\dots,v_{t,d}(x)\bigr),
\]
and apply scalar approximation results componentwise.
The main difficulty is therefore not ``being a vector field'' per se,
but the familiar difficulty of high-dimensional regression.

\paragraph{Depth and width: why can being ``deep'' help?}
Universal approximation emphasizes width, but blindly increasing width can worsen computation and overfitting.
Depth (composition) matters because
\begin{quote}
functions with compositional structure can sometimes be represented
with far fewer parameters by deep models than by shallow models.
\end{quote}
Velocity fields in flow matching often arise from combinations of local structures,
and ``local-to-global'' composition aligns naturally with depth.
In statistical terms, deep NNs can act as \emph{efficient approximators under suitable structural assumptions}.\index{flow matching}

\paragraph{The curse of dimensionality does not disappear}
In high dimensions, generic nonparametric estimation remains hard, and NNs are no exception.
The practical value of NNs lies in:
\begin{itemize}
\item flexible approximation classes (reducing bias),
\item practical optimization via gradients and automatic differentiation (computability),
\item potential representation efficiency for structured functions (compositional efficiency).
\end{itemize}
Thus we use NNs not because they are ``universal,'' but because
after reducing learning to regression, they provide a scalable, practical approximator.

\paragraph{Re-interpreting ``black-box'' behavior}
NNs look like black boxes because the target is not an explicit coefficient vector (like $\beta$ in linear regression),
but a function $f_\theta$ parameterized by a high-dimensional $\theta$.
From a statistical viewpoint,
\[
\text{black-box behavior} \;\approx\; \text{the price of using a rich function class}.
\]
In the same sense, kernel methods, splines, and Gaussian process regression are also evaluated
by the behavior of the fitted function rather than interpretability of coefficients.
NNs are not uniquely opaque; they exhibit a common feature of large-scale nonparametric estimation.

In many flow matching applications, $v_t$ is a tool for generation rather than a scientific object to interpret.
Still, there is room for diagnostics and ``making it visible'' through
\begin{itemize}
\item diagnostics via inverse mappings to latent space,\index{diagnostics}
\item sensitivity analysis under conditional generation,
\item structure through regularization and constraints (smoothness, divergence control, etc.).
\end{itemize}
We return to these points later.

\paragraph{Summary: NNs are one instance of nonparametric estimation}
The conclusion is:
\begin{quote}
In flow matching, neural networks serve as function approximators for nonparametric regression of vector fields,
positioned on the same continuum as kernel methods and basis expansions.
\end{quote}
Universal approximation indicates that expressiveness is not the fundamental bottleneck,
and depth can offer representation efficiency for structured targets.
In the next section we connect approximation error to statistical (finite-sample) error by viewing flow matching itself as a regression procedure.

\section{Flow matching as nonparametric regression}
\label{subsec:fm-npreg}
\index{nonparametric learning}

In the previous section we positioned NNs as function approximators and argued that we need not overemphasize their black-box character.
We now take one more step and make explicit that the learning rule in flow matching
fits almost exactly into the familiar statistical framework of \emph{$L^2$ regression with squared loss}.
The main actor here is not the NN itself, but the shape of the objective function.
This viewpoint naturally connects flow matching to kernel regression, splines, and local polynomial regression,
and prepares us to discuss error decomposition and regularization in statistical language.

\paragraph{Training pairs are \emph{synthetically generated}.}
In ordinary regression we observe data pairs $(Z_i,Y_i)$ and learn a function to predict $Y$ from $Z$.
Flow matching is formally the same---but with one decisive difference:
the training pairs used for regression are \emph{constructed synthetically} from observed data and noise.
Concretely, we sample
\[
t\sim \mathrm{Unif}\,[0,1],\qquad X_1\sim \rho,\qquad X_t\sim \rho_t(\cdot\,|\,X_1),
\]
and then define the input
\[
Z=(t,X_t)
\]
and the teacher signal (target velocity)
\[
Y=u_t(X_t\,|\,X_1).
\]
The pair $(Z,Y)$ is the regression data.

\paragraph{Population objective as an $L^2$ regression risk}
The population objective of flow matching (especially conditional flow matching) reduces to
\begin{equation}
\min_{v\in\mathcal{V}}
\ \mathbb{E}\Bigl[\|Y-v(Z)\|^2\Bigr],
\quad Z=(t,X_t),\ \ Y=u_t(X_t\,|\,X_1).
\label{eq:npreg-risk}
\end{equation}
Here $\mathcal{V}$ is any candidate function class (NNs, other smoothers, etc.).
Equation \eqref{eq:npreg-risk} is simply squared-loss risk minimization:
``learning a flow'' can be read as ``estimating a regression function.''

\paragraph{Remark on the probabilistic structure of the training pairs}
Conditioning on the observed endpoints $\{X_1^{(i)}\}_{i=1}^n$,
if we generate $t^{(i)}$ and bridge samples $X_{t^{(i)}}^{(i)}$ independently for each $i$,
then $(Z^{(i)},Y^{(i)})$ can be treated as conditionally i.i.d.
Under this viewpoint, \eqref{eq:npreg-risk} becomes isomorphic to standard empirical risk minimization,
and the usual complexity-versus-sample-size error decompositions apply.

\paragraph{The optimizer is the conditional mean: a basic fact of regression}
If the function class is sufficiently rich, then by the $L^2$ projection property,
the minimizer is given by the conditional mean:
\begin{equation}
v^\ast(z)=\mathbb{E}[Y\,|\,Z=z].
\label{eq:cond-mean}
\end{equation}
This is a fundamental theorem of regression and formalizes why the flow matching learning rule is ``natural''.
Importantly, it holds independently of the choice of NN.
A neural network is merely one implementation for approximating $v^\ast$.

\paragraph{Analogy with kernel regression and splines}
Conceptually:
\begin{itemize}
\item \textbf{Kernel regression (local averaging):}
approximate \eqref{eq:cond-mean} locally by averaging $Y$ near $Z$.
\item \textbf{Spline smoothing (penalized least squares):}
penalize roughness (e.g., norms of derivatives) to select a smooth $v$.
\item \textbf{NN regression:}
use a rich class $\mathcal{V}$ and approximately minimize \eqref{eq:npreg-risk}
by stochastic optimization.
\end{itemize}
All three estimate $v^\ast$ by squared loss; the difference is in
\emph{which structural assumptions} (smoothness, locality, compositional structure) are implicitly imposed.
Flow matching can be summarized as:
design the probability path to control the learning distribution of $Z$,
then track the conditional mean \eqref{eq:cond-mean} with a scalable approximator.

\paragraph{Error decomposition: a statistical ``map.''}
As regression, generation quality is influenced by at least three factors:
\begin{enumerate}
\item \textbf{Approximation error:} how well $v^\ast$ can be represented by $\mathcal{V}$.
\item \textbf{Statistical error:} approximating population expectations by finite-sample averages.
\item \textbf{Optimization error:} stopping stochastic optimization after finite iterations.
\end{enumerate}
This is not unique to NNs: it is the standard regression error decomposition carried over to flow learning.

\paragraph{Why does ``likelihood-free learning'' reduce to regression?}
The probability path design ensures that the learning distribution $Z=(t,X_t)$ is \emph{samplable},
and the teacher signal $Y$ can be computed from the same randomness.
Therefore, we do not need to evaluate intermediate densities or normalizing constants as in likelihood methods.
Flow matching is designed so that learning can be performed as regression rather than density estimation.

\paragraph{Relation to score matching: a shared philosophy of avoiding likelihoods}
The claim ``flow matching learns by regression without density evaluation'' aligns closely with score matching.
Both follow a likelihood-free design:
\begin{itemize}
\item In score matching, one targets $s_0(x)=\nabla\log p_0(x)$ by squared loss but eliminates the unknown $p_0$
via a Stein identity (integration by parts), yielding an objective such as
\[
\mathbb{E}_{p_0}\!\left[\|s_\theta(X)\|^2+2\nabla\cdot s_\theta(X)\right].
\]
\item In flow matching, rather than evaluating $\rho_t$, one designs a (conditional) probability path and constructs
$(t,X_t)$ and $Y=u_t(X_t\,|\,X_1)$ by sampling, reducing to
\[
\min_{v\in\mathcal{V}}\ \mathbb{E}\bigl[\|v(t,X_t)-u_t(X_t\,|\,X_1)\|^2\bigr].
\]
\end{itemize}
The analogy is:
\[
\text{avoid likelihood evaluation}
\Longleftrightarrow
\text{(Score) remove density by integration by parts}
\Longleftrightarrow
\text{(Flow) construct teacher signals and regress}.
\]
A key difference is that score matching estimates a \emph{gradient field} describing local shape of a fixed distribution,
whereas flow matching estimates a \emph{velocity field} that directly implements distribution transport and sample generation.

\paragraph{What if we restrict to a linear velocity field?}
Consider restricting the nonparametric regression problem \eqref{eq:npreg-risk} to a linear class.
Let the velocity field be a time-dependent affine function:
\begin{equation}
v(t,x)=a+bt+Cx,
\label{eq:lin-velocity}
\end{equation}
where \(a\in\mathbb{R}^d\), \(b\in\mathbb{R}^d\), and \(C\in\mathbb{R}^{d\times d}\).
Then \eqref{eq:npreg-risk} becomes
\[
\min_{a,b,C}\ \mathbb{E}\Bigl[\|Y-(a+bt+CX_t)\|^2\Bigr],
\]
and the solution is the $L^2$ projection (ordinary least squares) of $Y$ onto this linear space.
If we collect parameters as
\[
\beta=(a^\top,b^\top,\mathrm{vec}(C)^\top)^\top,
\]
then the normal equations take the form
\[
\mathbb{E}\!\bigl[\Phi(Z)\Phi(Z)^\top\bigr]\beta
=
\mathbb{E}\!\bigl[\Phi(Z)\otimes Y\bigr],
\]
where $\Phi(Z)$ denotes the associated feature map.
In other words, the unconstrained optimum $v^\ast(z)=\mathbb{E}[Y\,|\,Z=z]$ is projected onto the linear family \eqref{eq:lin-velocity}.

Although restrictive, this linear viewpoint is useful pedagogically: it clarifies
(1) the regression nature of learning (least-squares projection),
(2) the role of stability (error amplification) through the induced ODE flow, and
(3) the connection to ODE solvers.
We keep the nonparametric form \eqref{eq:npreg-risk} as the general description,
while using \eqref{eq:lin-velocity} as a simple anchor for intuition.

\section{Convergence rates for the learned estimator}
\index{convergence rate}
\label{subsec:rate-only}

This section summarizes how the quality of the learned estimator improves with sample size $n$
from the viewpoint of nonparametric estimation rates.
A key point is that flow matching does not estimate a density $p$ directly.
Instead, it estimates a velocity field $v_t(x)$ by regression, and then obtains the final distribution $\widehat p$
as the pushforward of an ODE flow.\index{pushforward}
Accordingly, rate discussions naturally split into:
\begin{enumerate}
\item the rate for estimating the velocity field (regression),
\item how velocity-field error propagates to the error of the final distribution.
\end{enumerate}

\paragraph{Notation and basic setup}
Write the regression data as
\[
Z=(t,X_t)\in[0,1]\times\mathbb{R}^d,\qquad
Y=u_t(X_t\,|\,X_1)\in\mathbb{R}^d,
\]
so the input dimension is $D=d+1$.
Let $\mathcal{V}_n$ be a candidate class (e.g., NNs) that may depend on $n$ and define
\[
\widehat{v}_n\in\argmin_{v\in\mathcal{V}_n}\ \widehat{R}_n(v),
\qquad
\widehat{R}_n(v)=\frac1n\sum_{i=1}^n \|v(Z_i)-Y_i\|^2.
\]
The population risk is
\[
R(v)=\mathbb{E}\bigl[\|v(Z)-Y\|^2\bigr],
\]
and the population regression function is
\[
v^\ast(z)=\mathbb{E}[Y\,|\,Z=z].
\]
We view $\widehat v_n$ as an estimator of $v^\ast$.

\paragraph{Decomposing regression error}
A standard ``map'' is to decompose the excess risk
\[
R(\widehat{v}_n)-R(v^\ast)
\]
into estimation and approximation components:
\begin{equation}
R(\widehat{v}_n)-R(v^\ast)
\le
\underbrace{\Bigl\{R(\widehat{v}_n)-\inf_{v\in\mathcal{V}_n}R(v)\Bigr\}}_{\text{estimation error}}
+
\underbrace{\Bigl\{\inf_{v\in\mathcal{V}_n}R(v)-R(v^\ast)\Bigr\}}_{\text{approximation error}}.
\label{eq:rate-decomp}
\end{equation}
Approximation error depends on expressiveness (width, depth, smoothness or norm constraints),
while estimation error depends on $n$ and the complexity of the class.
Choosing model size and regularization is precisely balancing these two terms.

\paragraph{A baseline nonparametric rate}
As a classical reference point, assume $v^\ast$ has smoothness $s>0$ over $D$-dimensional inputs.
For instance, for $0<s\le1$ this corresponds to H\"older continuity:
\[
\sup_{x\neq y}\frac{\|v^\ast(x)-v^\ast(y)\|}{\|x-y\|^{s}}<\infty.
\]
Then a typical $L^2$ error rate is
\begin{equation}
\|\widehat{v}_n-v^\ast\|_{L^2}^2
= O_P\!\left(n^{-2s/(2s+D)}\right) .
\label{eq:np-rate}
\end{equation}
When $D=d+1$ is large, the rate becomes slow---the curse of dimensionality.
Flow matching cannot avoid this difficulty without additional structure assumptions.

However, in flow matching the target is not an arbitrary regression function,
but a velocity field defined on a learning distribution induced by a designed probability path.
This yields two important possibilities:
\begin{itemize}
\item Probability-path design can control \emph{where} in the input space learning matters (the distribution of $Z$).
\item If $v^\ast$ has compositional or low-dimensional structure, it may behave as if the effective dimension were smaller than $D$.
\end{itemize}
The second point is not automatic; it depends on genuine structure.

\paragraph{Neural networks as sieve approximations}
Expanding $\mathcal{V}_n$ with $n$ (larger networks for larger data) corresponds to the classical sieve idea in statistics.\citep{ShenWong1994,Chen2007}
That is, one chooses
\[
\mathcal{V}_1\subset \mathcal{V}_2\subset \cdots
\]
so that approximation error decreases while estimation error remains controlled.
From this viewpoint, NNs are not theoretically ``mysterious''; they are one sieve family for nonparametric regression.\citep{ChenShen1998}

\paragraph{Order of estimation error}
To bound estimation error one needs a complexity measure for the class, such as
parameter dimension, norm constraints, or metric entropy.
Without pursuing detailed theorems, a prototypical form is that
\[
R(\widehat{v}_n)-\inf_{v\in\mathcal{V}_n}R(v)
\]
is controlled by
\begin{equation}
C\,\frac{\mathrm{Comp}(\mathcal{V}_n)}{n}
\quad \text{or}\quad
C\,\sqrt{\frac{\mathrm{Comp}(\mathcal{V}_n)}{n}},
\label{eq:est-order}
\end{equation}
depending on assumptions and techniques.
Combining \eqref{eq:rate-decomp} and \eqref{eq:est-order} quantifies the basic trade-off:
larger models reduce approximation error but tend to increase estimation error.\citep{vdVWellner1996,BartlettMendelson2002}

\paragraph{From velocity-field error to distribution error: propagation via ODE stability}
Next we describe how error in $\widehat v_t$ affects the final distribution $\widehat p$.
Starting from the same initial distribution $X_0\sim\pi$, solve
\[
\dot{X}_t=v_t(X_t),\qquad \dot{\widehat{X}}_t=\widehat{v}_t(\widehat{X}_t)
\]
over $t\in[0,1]$ to obtain $X_1$ and $\widehat X_1$.
Under standard assumptions such as Lipschitz continuity of $v_t$ in $x$ with constant $L$,\index{Lipschitz}
a Gr\"onwall-type bound yields\citep{CoddingtonLevinson1955,AmbrosioGigliSavare2008}
\begin{equation}
\mathbb{E}\|X_1-\widehat{X}_1\|
\le
C_L\int_0^1 e^{L(1-t)}\,\mathbb{E}\|v_t(X_t)-\widehat{v}_t(X_t)\|\,dt,
\label{eq:propagation-new}
\end{equation}
where $C_L>0$ depends on $L$ (and the time interval) but not on $n$.

Using the Wasserstein distance as a distribution metric,\index{Wasserstein distance}
one likewise obtains a bound of the form\citep{Villani2009,AmbrosioGigliSavare2008}
\begin{equation}
W_2(p,\widehat{p})
\le
C_L\left(\int_0^1 \mathbb{E}\|v_t(X_t)-\widehat{v}_t(X_t)\|^2\,dt\right)^{1/2},
\label{eq:w2-prop}
\end{equation}
under appropriate regularity assumptions.
Thus, small $L^2$ regression error implies small final distribution error.
This is a major advantage of viewing flow learning as regression rather than density estimation.

\paragraph{Why is it difficult to replace NNs by classical smoothers in flow matching?}
Although flow matching is regression,
practical and theoretical bottlenecks arise if one attempts to replace NNs by classical nonparametric estimators.
We summarize the main points.

\medskip
\noindent\textbf{(i) Expressiveness in high dimensions.}
The input $Z=(t,X_t)$ is high-dimensional and the velocity field $v(t,x)$ must represent complex local structure
(nonlinear, asymmetric, and conditional behavior).
Kernel regression or spline smoothing is theoretically clean but typically suffers from exploding computation and memory in high dimensions,
and it becomes impractical at the scale of images or language embeddings.
Linear models or low-dimensional bases are computationally light but often leave large approximation bias.
NNs are a representative choice that relaxes this trade-off in practice.

\medskip
\noindent\textbf{(ii) After training, we must \emph{solve an ODE}.}
Flow matching generates samples by numerically integrating
\[
\dot x_t = v(t,x_t).
\]
At this stage, not only regression error but also \emph{ODE error amplification} matters.
If $x\mapsto v(t,x)$ is uniformly $K$-Lipschitz, then
\[
\|x_t-y_t\|\le e^{Kt}\|x_0-y_0\|,
\]
so a large Lipschitz constant amplifies numerical and approximation errors.
Hence an estimator must be stable enough to be integrated; ``fit'' alone is not sufficient.
NNs allow implementable control of $\mathrm{Lip}(v)$ through spectral normalization or gradient penalties,
whereas local-fit smoothers can easily produce steep local behavior that destabilizes ODE integration.

\medskip
\noindent\textbf{(iii) Conditional velocity fields.}
In conditional flow matching one learns $v(t,x\,|\,x_1)$, so the input becomes $(t,x,x_1)$ and the field can vary substantially with $x_1$.
One needs to share structure across conditions while allowing local adaptation.
NN parameter sharing achieves this naturally.
In contrast, conditional kernel methods scale poorly as the conditioning dimension increases.

\medskip
\noindent\textbf{(iv) Computation for learning and sampling.}
Flow matching is implemented as large-scale mini-batch optimization:
each iteration samples $(x_0,x_1)$, constructs intermediate points $x_t$ and teacher signals $u_t$,
computes the gradient of $\|v_\theta(t,x_t)-u_t\|^2$ via automatic differentiation, and updates parameters.
This is a tensorized computation pattern that maps well to GPUs and distributed data-parallel training.
Sampling also reduces to repeated forward evaluations of $v_\theta$ inside an ODE solver,
again amenable to batch parallelism and modern acceleration (mixed precision, compilation).

In contrast, many classical estimators require global iterations over all data,
large matrix factorizations, or constrained optimization,
which may not translate cleanly to SIMD-style GPU acceleration at large scale.

\medskip
\noindent\textbf{(v) Beyond ``regression'': implementing stable transport.}
Ultimately, flow matching is ``a regression problem whose goal is to implement transport stably.''\index{flow matching}
Replacing NNs therefore requires simultaneously satisfying
\[
\text{(expressiveness)}+\text{(high-dimensional computability)}+\text{(ODE stability)}.
\]
NNs currently offer the most standard compromise.
In low-dimensional settings, or when strong structure assumptions are justified,
using linear (or low-dimensional basis) velocity fields can substantially simplify both theory and computation.

\section{Numerical experiment: Lipschitzness and stability of generation}
\index{Lipschitz}
\index{stability}

This section empirically examines how smoothness of the velocity field (in particular Lipschitz control)
affects numerical stability of the generation process and robustness to outliers.\index{velocity field}

\begin{figure}[htbp]
  \centering
  \includegraphics[width=0.9\linewidth]{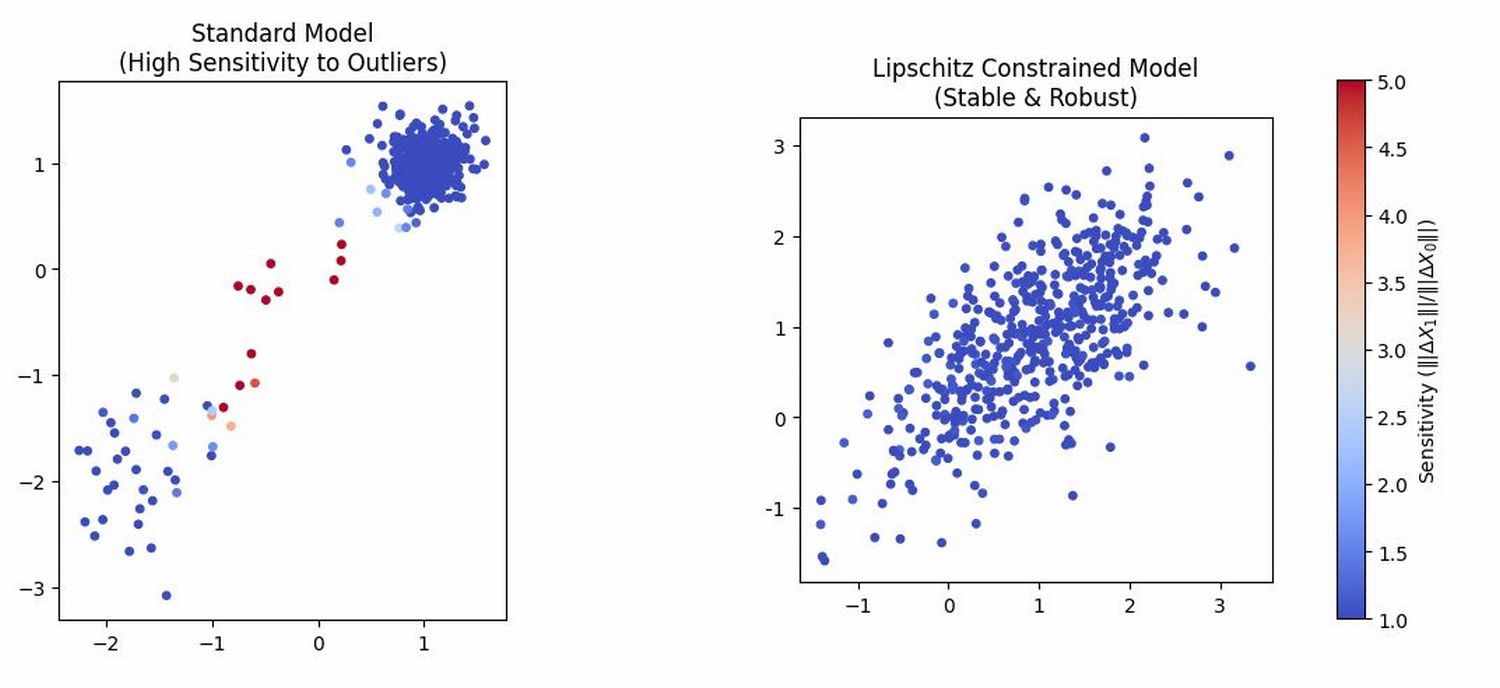}
  \caption{Stabilization and robustness induced by Lipschitz constraints on the velocity field.
The color intensity visualizes the sensitivity ratio ${\|\Delta X_1\|}/{\|\Delta X_0\|}$ of the final state to a small perturbation in the initial state;
red indicates instability.
(Left) standard training.
(Right) training with a Lipschitz constraint via spectral normalization.}
  \label{fig:stability_comparison}
\end{figure}

\paragraph{Mathematical meaning of the sensitivity index: the Jacobian of the flow}
Let $\Phi_t$ denote the ODE flow, so that $X_t=\Phi_t(X_0)$.
For a small perturbation of the initial condition,
\[
\Delta X_t \approx D\Phi_t(X_0)\,\Delta X_0,
\]
so the sensitivity ratio $\|\Delta X_1\|/\|\Delta X_0\|$ approximates the operator norm $\|D\Phi_1(X_0)\|_{\mathrm{op}}$.
If $v_t$ is Lipschitz in $x$ with constant $L$, then the variational equation and a Gr\"onwall-type argument imply
\[
\sup_x \|D\Phi_1(x)\|_{\mathrm{op}}\le e^{L}.
\]
Figure~\ref{fig:stability_comparison} visualizes this sensitivity index for a dataset containing outliers.

The statistical and geometric insights from this experiment are as follows.

\paragraph{1. Vulnerability of the standard model and amplification of error}
Under standard training (Figure~\ref{fig:stability_comparison}, left),
the sensitivity sharply increases near regions containing outliers, producing unstable red zones.
This indicates that the learned velocity field $v_t$ developed steep local changes (large gradients)
in an attempt to fit outliers.
In some locations the sensitivity exceeds $4.0$, meaning that tiny randomness in the initial state is exponentially amplified during generation.
This visually confirms the warning of Gr\"onwall-type bounds: error amplification is governed by the Lipschitz constant $L$.

\paragraph{2. Robust transport under Lipschitz constraints}
With spectral normalization (Figure~\ref{fig:stability_comparison}, right),
the sensitivity remains uniformly low (blue), indicating a stable flow.
By explicitly suppressing the Lipschitz constant of $v_t$, the model learns a smooth ``main flow'' without being dragged by outliers.
This suggests that controlling Lipschitzness acts as an effective regularization for nonparametric regression of vector fields,
and can be decisive when generation is used as a precise computational primitive (e.g., counterfactual generation).

\paragraph{Theoretical consequence: controlling error propagation}
As the Gr\"onwall bound \eqref{eq:propagation-new} indicates,
the Lipschitz constant $L$ governs the amplification factor by which estimation error propagates to the final samples.
Explicitly controlling $L$ is therefore not only a statistical regularizer but also a geometric guarantee of numerical stability for ODE solvers.

Here our goal is not a fully rigorous theorem, but a ``map'' showing which quantities dominate error propagation.
Rigorous asymptotic expansions (e.g., Donsker-type conditions) are discussed later in Section~\ref{subsec:ddml}.

\paragraph{Asymptotic theory vs finite-sample bounds}
Statistics and machine learning often target different forms of guarantees.
In statistics, asymptotic theory ($n\to\infty$) is used to approximate distributions of estimators (e.g., asymptotic normality)
and design confidence intervals and tests.
In machine learning, one often derives finite-sample generalization bounds (holding with probability $1-\delta$),
for example via Rademacher complexity.
These are not opposing viewpoints but complementary tools:
finite-sample bounds provide conservative safety guarantees but may have large constants,
whereas asymptotic theory provides accurate approximations for typical behavior but requires regularity and local assumptions.

In this chapter we emphasized nonparametric regression rates and the mechanism by which velocity-field error propagates to distribution error.
Discussions where $\sqrt{n}$ inference is essential---most notably DDML and orthogonalization---are developed in Section~\ref{subsec:ddml}.

\section{Overfitting and regularization}
\label{subsec:overfit-regularize-geom}
\index{regularization}
\index{overfitting}

We revisit overfitting in flow matching from a geometric viewpoint:
\emph{what geometry does the learned velocity field $v_t(x)$ induce},
and what does regularization suppress or protect?
In statistics, overfitting means an unnecessarily complex estimator with poor generalization.
In flow matching, complexity manifests not only in the shape of $v_t$ as a function,
but also in the geometry of the induced ODE flow $T_\theta$.
Small local perturbations in the velocity field can be nonlinearly amplified into large changes of the final distribution transformation,
creating a flow-specific source of instability.

\paragraph{Overfitting appears as local oscillations of the velocity field}
Since flow matching is regression, a typical overfitting picture is that
to reduce training loss near data points,
the velocity field develops fine-scale wiggles (high-frequency components).
Then $v_t$ becomes strongly pulled toward training points $Z_i=(t_i,X_{t_i,i})$
and may change abruptly in unnatural directions away from them.

In fluid metaphors, this resembles many small vortices that stretch tiny initial differences into large discrepancies.
Statistically:
\begin{itemize}
\item the estimation error becomes locally large, and
\item the local error accumulates through time integration of the ODE and propagates to the error of the generated distribution.
\end{itemize}

\paragraph{Geometric viewpoint: the velocity field generates a transformation map}
Generation by ODE solves
\[
\dot{X}_t=v_t(X_t),\qquad X_0\sim\pi,
\]
and outputs $X_1=T(X_0)$.
Thus $v_t$ is not merely a regression function; it is a \emph{generator} of a space-deforming map $T$.
An overfitted $v_t$ can locally fold or twist $T$,
creating unnaturally sharp pushforwards or excessive stretching of low-density regions.
Then samples may look ``fit'' near training data but become unstable.

As seen earlier, a direct approach to suppress overfitting is to control the Lipschitz constant $L$ of the velocity field.
Even if training loss looks small, strong local oscillations in $\nabla_x v_t$ can be nonlinearly amplified by ODE integration.
Hence regularization in flow matching serves \emph{both} generalization and stabilization of generation.
Large $L$ leads to exponential amplification of small errors, revealing instability and the effect of overfitting.
Conversely, Lipschitz control is a geometric condition that can achieve
\emph{overfitting suppression and numerical stability simultaneously}.

\paragraph{Weight decay: suppressing excessive bending of the induced map}
When representing $v_\theta$ by an NN, a standard regularizer is weight decay (e.g., an $\|\theta\|^2$ penalty):
\begin{equation}
\min_\theta\ \widehat{R}_n(v_\theta)+\lambda\|\theta\|^2.
\label{eq:weight-decay}
\end{equation}
This is the ridge regularization analogue in statistics.
Geometrically, weight decay indirectly controls the amplitude and curvature of the velocity field,
tending to prevent the induced flow map $T_\theta$ from twisting excessively.
While not identical to Lipschitz control, both share the goal of suppressing local amplification.

\paragraph{Smoothing regularization: suppressing high-frequency components}
From a regression viewpoint it is also natural to penalize spatial derivatives of the velocity field, e.g.,
\[
\int_0^1\int \|\nabla_x v_t(x)\|^2\,\rho_t(x)\,dx\,dt.
\]
This is a vector-field analogue of spline roughness penalties.
It directly suppresses high-frequency components and reduces ``vortices'' in the flow.
We do not go into implementation details, but this is perhaps the most geometrically transparent regularization.

\paragraph{Smoothness of the generated geometry}
Many data are believed to concentrate near low-dimensional manifolds (the manifold hypothesis),
a viewpoint often invoked in generative modeling.\index{generative model}
An ODE flow defines a map on the ambient space, but what matters is how the high-density region of $\pi$
is pushed forward into the data-concentrated region.
An overfitted velocity field can locally fold this pushforward and create sharp geometry of generated samples.
Regularization suppresses such folding and tends to preserve smoothness of the generated geometry.
We use ``manifold'' here informally to mean ``geometry induced by a smooth transformation.''

\paragraph{Implication for robustness: regularization prevents overreaction to outliers}
From a robust statistics perspective, overfitting can be seen as overreaction to outliers or biased regions.
Lipschitz control and weight decay make it harder to create locally huge velocities or steep gradients,
thereby suppressing unnatural flows driven by outliers.
Loss-function modifications (e.g., Huberization) can also help,
but in flow matching, regularization that makes the flow geometry gentler often yields practical robustness.
For mitigating learning in low-density ``void'' regions of high-dimensional data space,
a robust flow matching method based on density-weighted Stein operators has been proposed.\citep{eguchi2025implicit}

\paragraph{Practical guidelines: what to monitor and tune}
We summarize practical heuristics in the spirit of a statistician:
\begin{itemize}
\item If generation is unstable (samples ``blow up'' or concentrate locally), suspect poor Lipschitz control and strengthen regularization (weight decay, gradient penalties, spectral normalization).
\item If training loss decreases but sample quality does not improve, consider high-frequency overfitting; use early stopping or smoothing regularization.
\item Use probability-path design to smooth the learning distribution and make the regression problem itself easier.
\end{itemize}
These are not novel prescriptions; they extend classical nonparametric ideas of controlling function roughness.

\paragraph{Takeaway}
Regularization in flow matching is not merely a trick to reduce generalization error.
It controls the geometry of the induced flow, suppresses propagation of estimation error,
and stabilizes the distribution transformation produced by generation.
This viewpoint is particularly well-justified in Wasserstein geometry and gradient-flow frameworks.\citep{AmbrosioGigliSavare2008}

\section{Calibration of semiparametric models}
\index{calibration}
In this section we adopt the stance that ``model misspecification'' is not merely a finite-dimensional parameter shift,
but an infinite-dimensional deviation that distorts the distribution.
If we incorporate such deviations naively into inference, the target becomes ambiguous
and interpretable quantities (main effects, causal effects) can be obscured.
We therefore represent distributional deviations explicitly as an \emph{additional transformation}
and learn that transformation to calibrate the model, yielding a semiparametric decomposition.

\paragraph{A parametric model and a calibration transform}
Start with a parametric model with an interpretable finite-dimensional parameter $\theta\in\Theta$:
\[
{\cal M}=\{p_\theta(x):\theta\in\Theta\}.
\]
Here $p_\theta$ encodes an interpretable structure (e.g., a main effect or a causal skeleton),
but the true distribution $q(x)$ typically does not belong to ${\cal M}$.
Let
\[
X^{(0)}\sim p_\theta
\]
be a pseudo-sample generated from the baseline model, and apply an additional transformation $T_\eta$ to represent the observed distribution:
\begin{equation}
X=T_\eta(X^{(0)}),\qquad X^{(0)}\sim p_\theta .
\label{eq:calib-push}
\end{equation}
The variable $X^{(0)}$ is an ``idealized'' sample under the baseline model,
and $T_\eta$ absorbs residual distortions not explained by the baseline.
The observed distribution can be written as a pushforward\index{pushforward}
\[
q_{\theta,\eta} = (T_\eta)_\# p_\theta.
\]
Even if $T_\eta$ is invertible and smooth (so one could write density change-of-variables),
our focus is not density evaluation but \emph{whether we can sample from the calibrated distribution},
so \eqref{eq:calib-push} is taken as the basic representation.

\paragraph{Meaning as a semiparametric decomposition}
Equation \eqref{eq:calib-push} decomposes misspecification as
\[
\text{finite-dimensional (baseline)} \quad+\quad \text{infinite-dimensional (calibration)}.
\]
The baseline $p_\theta$ carries interpretability, while the calibration $T_\eta$ carries flexibility.
This clarifies three inferential requirements:
\begin{enumerate}
\item[(i)] retain interpretable finite-dimensional targets (main effects, causal effects, etc.),
\item[(ii)] absorb distributional distortions with a sufficiently flexible component,
\item[(iii)] ensure that flexibility in (ii) does \emph{not} break inference for (i) (asymptotic normality, standard errors, etc.).
\end{enumerate}
Requirement (iii) is crucial: an overly powerful $T_\eta$ can absorb effects the baseline is meant to represent.
Avoiding this cannot be achieved by ``learning a flexible generator'' alone; it must be addressed at the level of inferential design.

\paragraph{``Calibration'' rather than ``absorption.''}
The goal is not to discard the baseline model and move to fully nonparametric estimation.
Rather, we aim to preserve the interpretable structure while making residual deviations explicit as an additional transformation.
Thus $T_\eta$ should not be ``as flexible as possible'' indiscriminately; it should be introduced so as
\emph{not to unnecessarily perturb the inferential target}.
This naturally leads to the role split
\[
\theta\ \text{as the target};\quad \eta\ \text{as a nuisance}.
\]
\index{nuisance}

\paragraph{Grounding the problem in inference}
Often the inferential target is not $\theta$ itself but a functional such as a main effect $\beta$ or a causal effect $\psi$,
written generically as $\theta=\theta(P)$.
The calibrated representation gives a distribution approximation $P\approx P_{\theta,\eta}$,
but the focus of inference is
\[
\theta_0=\theta(P_0),
\]
where $P_0$ is the true distribution.
The nuisance $\eta$ is a high-flexibility component needed to approximate $P_0$ well.
The challenge is that error in estimating $\eta$ by machine learning can have a first-order effect on the distribution of $\widehat\theta$
if one uses naive plug-in estimation.
Designing estimators that suppress this ``first-order leakage'' is precisely the core of DDML, discussed next.

\paragraph{Foreshadowing DDML: why orthogonalization is needed}
Under the calibration decomposition \eqref{eq:calib-push}, $\eta$ may be high-dimensional and nonlinear (often an NN).\index{neural network}
Thus its estimation error cannot be ignored.
A naive plug-in approach (simply substituting $\widehat\eta$) may destroy the asymptotic distribution of the target.
DDML addresses this by designing a moment condition of the form
\[
\mathbb E\{\Psi(O;\theta,\eta)\}=0,
\]
where $O$ denotes the observed data (often specified as $(X,Y,A)$ in applications),
and constructing $\Psi$ so that nuisance estimation error does not enter at first order (orthogonalization).
Orthogonalization combined with cross-fitting\index{cross-fitting}
allows flexible calibration $T_\eta$ while maintaining inference on interpretable targets.

\paragraph{Role of $X^{(0)}$.}
The variable $X^{(0)}$ in \eqref{eq:calib-push} is not observed; it is the baseline pseudo-sample generated from $p_\theta$.
The calibration map $T_\eta$ transforms this pseudo-sample into the observed distribution.
Thus $X^{(0)}$ represents the interpretable baseline component, while $T_\eta$ represents residual distortions.

\section{Double machine learning (DDML)}
\index{double machine learning}\index{DDML}
\label{subsec:ddml}

\paragraph{Positioning: $\sqrt{n}$ inference with high-flexibility nuisance estimation.}
In the previous section we viewed the target as a functional $\theta_0=\theta(P_0)$
and wrote the high-flexibility components needed for identification and estimation as a nuisance $\eta_0=\eta(P_0)$.
Flows and conditional generators are powerful estimators for approximating $\eta_0$ in a samplable form,
but naive plug-in substitution can leave a first-order regularization bias.
Suppressing this first-order leakage is the core of DDML (Double/Debiased Machine Learning),
which combines orthogonalization (Neyman orthogonality)\index{Neyman orthogonality}
and cross-fitting.\citep{chernozhukov2018ddml}
We formulate it in generic notation not restricted to causal inference.

Let the target be $\theta(P)$ and the nuisance be $\eta(P)$.
Suppose $\theta$ is defined by a (Fisher-score type) estimating equation
\[
\mathbb{E}\bigl[\Psi(O;\theta,\eta)\bigr]=0,
\]
where $\Psi$ is an influence-function type moment.\index{influence function}
We say that $\Psi$ is Neyman-orthogonal with respect to $\eta$ if, under the true distribution $P_0$,
\begin{equation}
\left.\frac{d}{d\varepsilon}\ \mathbb{E}_{P_0}\!\bigl[\Psi\bigl(O;\theta_0,\eta_0+\varepsilon h\bigr)\bigr]\right|_{\varepsilon=0}=0
\qquad (\text{for any perturbation }h).
\label{eq:neyman_orth}
\end{equation}
This definition means that nuisance error does not enter at first order:
even if $\widehat\eta-\eta_0$ converges relatively slowly,
the leading behavior of $\widehat\theta$ may still be governed by a central limit theorem.

\paragraph{Functional derivatives (G\^{a}teaux and Fr\'echet).}
Equation \eqref{eq:neyman_orth} is a functional derivative statement.
Define
\[
M(\eta)\;=\;\mathbb{E}_{P_0}\!\bigl[\Psi(O;\theta_0,\eta)\bigr].
\]
If the directional derivative exists along any direction $h$,
\[
D M(\eta_0)[h]
=
\left.\frac{d}{dr}M(\eta_0+r h)\right|_{r=0}
=
\lim_{r\to 0}\frac{M(\eta_0+r h)-M(\eta_0)}{r},
\]
then $M$ is G\^{a}teaux differentiable at $\eta_0$.
Moreover, $M$ is Fr\'echet differentiable if there exists a bounded linear functional $\dot M_{\eta_0}$,
with respect to a norm $\|\cdot\|$ (e.g., an $L^2(P_0)$ norm), such that
\begin{equation}
M(\eta_0+h)
=
M(\eta_0)+\dot M_{\eta_0}(h)+o\bigl(\|h\|\bigr)
\qquad (\|h\|\to 0).
\label{eq:frechet_def}
\end{equation}
Fr\'echet differentiability implies G\^{a}teaux differentiability and $\dot M_{\eta_0}(h)=D M(\eta_0)[h]$.
Orthogonality in this book is at least in the directional (G\^{a}teaux) sense,
while stronger uniform arguments may require Fr\'echet differentiability.

\paragraph{Bartlett's identity and the meaning of orthogonality}
To build intuition, consider a smooth nuisance submodel $P_r$ passing through $P_0$,
and let its Fisher score (tangent) be
\[
s_h(O)
=
\left.\frac{d}{dr}\log p_r(O)\right|_{r=0},
\]
where $h$ indexes the nuisance direction.
Bartlett's first identity yields
\[
\mathbb{E}_{P_0}[s_h(O)]=0.
\]
If $g(O)$ is regular enough, then the derivative of an expectation satisfies
\begin{equation}
\left.\frac{d}{dr}\mathbb{E}_{P_r}[g(O)]\right|_{r=0}
=
\mathbb{E}_{P_0}\!\bigl[g(O)\,s_h(O)\bigr].
\label{eq:bartlett_cov}
\end{equation}
This is Bartlett's covariance identity: ``derivative of an expectation equals covariance with the score.''

From this viewpoint, Neyman orthogonality means that the moment function $\Psi$ is $L^2(P_0)$-orthogonal to the nuisance score directions:
\[
\mathbb{E}_{P_0}\!\bigl[\Psi(O;\theta_0,\eta_0)\,s_h(O)\bigr]=0\; 
\qquad (\text{for all allowed nuisance directions }h).
\]
See Figure~\ref{fig:neyman}.

\begin{figure}[!t]
  \centering
  \includegraphics[width=0.65\linewidth]{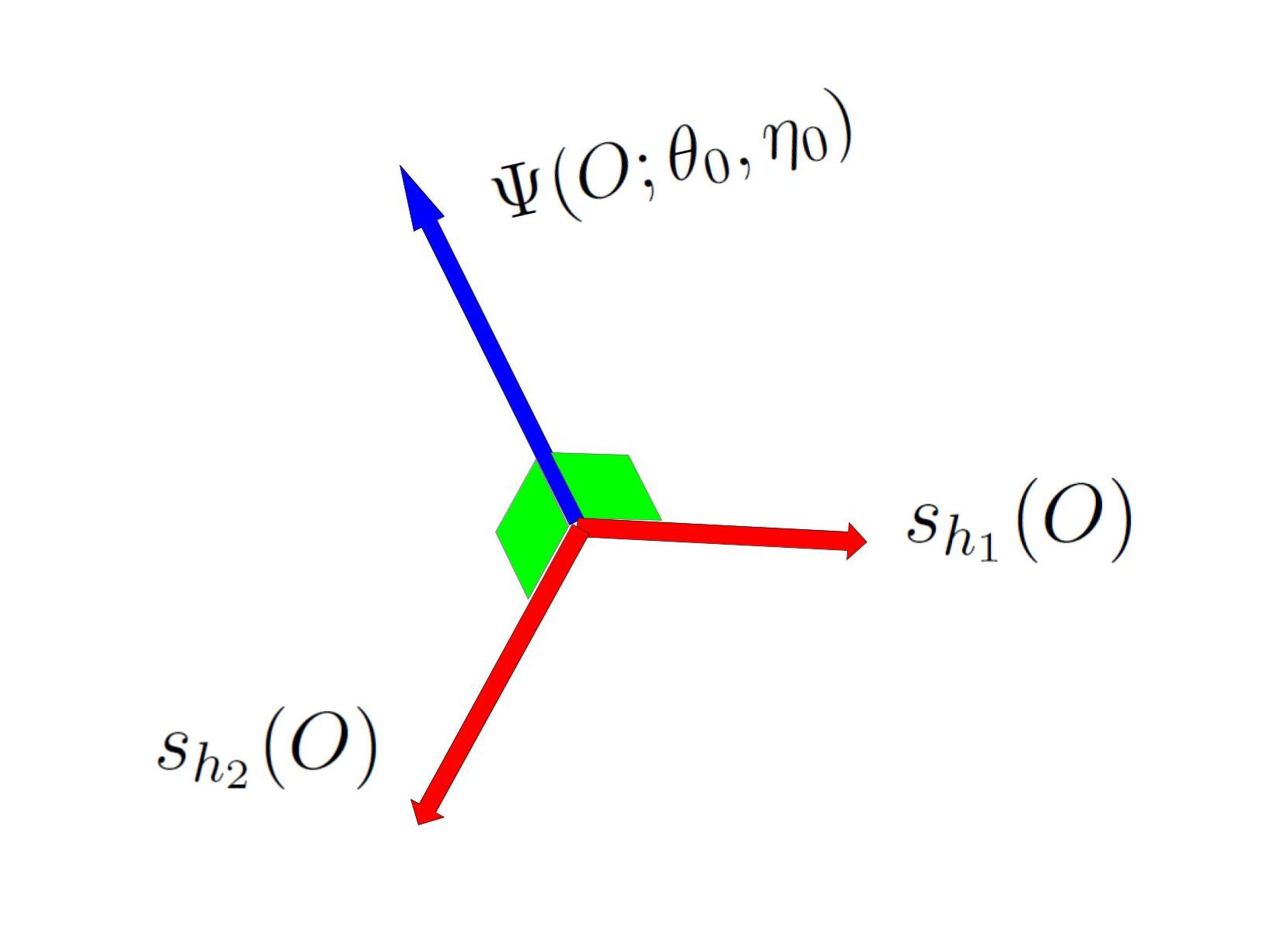}
  \caption{Neyman orthogonality: parameter score vs nuisance score directions.}
  \label{fig:neyman}
\end{figure}

Geometrically, the collection $\{s_h\}$ spans the nuisance tangent space,\index{nuisance tangent space}
and designing $\Psi$ to lie in its orthogonal complement ensures that
small shifts along nuisance directions do not change $\mathbb{E}[\Psi]$ at first order.
This is the intuitive meaning of \eqref{eq:neyman_orth}, and explains why DDML can preserve inference even with complex nuisance learners.

As an aside, Bartlett's second identity
\[
-\mathbb{E}\!\Bigl[\frac{\partial}{\partial\eta}S_\eta(O)\Bigr]
=
\mathbb{E}\!\bigl[S_\eta(O)S_\eta(O)^\top\bigr]
\]
states that the score variance equals the Fisher information.
In information-geometric terms, orthogonalization can be interpreted as projecting away nuisance components.

\paragraph{Bridge to asymptotic theory}
If $\Psi(O;\theta,\eta)$ is Neyman-orthogonal in $\eta$,
then the first-order asymptotic expansion of $\widehat\theta$ does not involve $\widehat\eta-\eta_0$.
Hence, if $\widehat\eta$ achieves roughly an $n^{-1/4}$ rate,
$\widehat\theta$ can still admit $\sqrt{n}$ inference.
We state this as a theorem and then discuss local robustness.

\begin{theorem}\label{thm:asymptotic_normality}
Assume \eqref{eq:neyman_orth} holds and the nuisance estimator satisfies
$$\|\widehat{\eta} - \eta_0\|_2 = o_p(n^{-1/4}).$$
Then any solution $\widehat{\theta}$ to the score equation
$\Psi_n(\widehat{\theta}, \widehat{\eta}) = 0$
satisfies
$$
\sqrt{n}(\widehat{\theta} - \theta_0) \xrightarrow{d} N(0, J^{-1} V J^{-1}),
$$
where $J =\mathbb E[ -\nabla_\theta \Psi(\theta_0, \eta_0)]$
and $V = \mathbb{E}[\Psi(O;\theta_0,\eta_0)\Psi(O;\theta_0,\eta_0)^\top]$.
\end{theorem}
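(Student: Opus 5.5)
The argument is the standard DDML one: expand the empirical score equation around $(\theta_0,\eta_0)$ and show that every term involving $\widehat\eta-\eta_0$ is $o_p(n^{-1/2})$. Two of these terms are controlled by orthogonality and the $n^{-1/4}$ rate. The remaining one, an empirical-process term, is controlled by cross-fitting or a Donsker-type condition.

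The theorem as stated leaves these implicit, so I will make the regularity conditions explicit. They are: consistency $\widehat\theta\to_p\theta_0$; $\Psi$ twice Fréchet differentiable in $\eta$, with the second-order remainder of $M(\eta)=\mathbb E_{P_0}\Psi(O;\theta_0,\eta)$ bounded by $C\|\eta-\eta_0\|_2^2$; $J$ nonsingular; $\nabla_\theta\Psi$ satisfying a uniform law of large numbers near $(\theta_0,\eta_0)$; and either cross-fitting or a Donsker condition on $\{\Psi(\cdot;\theta,\eta)\}$.

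\textbf{Step 1 (expansion in $\theta$).} Write $\Psi_n(\theta,\eta)=n^{-1}\sum_i\Psi(O_i;\theta,\eta)$. A mean-value expansion in $\theta$ gives
\[
0=\Psi_n(\widehat\theta,\widehat\eta)=\Psi_n(\theta_0,\widehat\eta)+\nabla_\theta\Psi_n(\bar\theta,\widehat\eta)(\widehat\theta-\theta_0).
\]
Consistency and the uniform LLN give $\nabla_\theta\Psi_n(\bar\theta,\widehat\eta)\to_p -J$. Therefore
\[
\sqrt n(\widehat\theta-\theta_0)=J^{-1}\sqrt n\,\Psi_n(\theta_0,\widehat\eta)+o_p(1)\bigl(1+\sqrt n\|\widehat\theta-\theta_0\|\bigr).
\]
Once the next step shows $\sqrt n\,\Psi_n(\theta_0,\widehat\eta)=O_p(1)$, it follows that $\sqrt n$-consistency holds and the last term is $o_p(1)$.

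\textbf{Step 2 (decomposition in $\eta$).} Split
\[
\sqrt n\,\Psi_n(\theta_0,\widehat\eta)=\sqrt n\,\Psi_n(\theta_0,\eta_0)+\mathbb G_n\bigl[\Psi(\cdot;\theta_0,\widehat\eta)-\Psi(\cdot;\theta_0,\eta_0)\bigr]+\sqrt n\,\bigl\{M(\widehat\eta)-M(\eta_0)\bigr\},
\]
where $\mathbb G_n=\sqrt n(P_n-P_0)$ and $M(\widehat\eta)$ is evaluated with $\widehat\eta$ held fixed. The three terms are handled as follows.

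1. The first term converges in distribution to $N(0,V)$ by the CLT, since $M(\eta_0)=0$.
2. For the third term, a Taylor expansion in the sense of \eqref{eq:frechet_def} gives $M(\widehat\eta)-M(\eta_0)=\dot M_{\eta_0}(\widehat\eta-\eta_0)+O(\|\widehat\eta-\eta_0\|_2^2)$. The linear term vanishes by Neyman orthogonality \eqref{eq:neyman_orth}. The quadratic term is $\sqrt n\cdot o_p(n^{-1/2})=o_p(1)$, which is exactly where the $n^{-1/4}$ rate is used.
3. The middle empirical-process term is the main obstacle. I will use cross-fitting. Conditional on the auxiliary fold, $\widehat\eta$ is fixed, so the term has mean zero and conditional variance at most $\mathbb E\|\Psi(O;\theta_0,\widehat\eta)-\Psi(O;\theta_0,\eta_0)\|^2$. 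This tends to zero under an $L^2$-continuity assumption, for instance mean-square Lipschitz dependence on $\eta$, combined with $\|\widehat\eta-\eta_0\|_2\to_p0$. Chebyshev's inequality conditionally, followed by dominated convergence, then makes the term $o_p(1)$. Averaging over the finitely many folds preserves this. Without cross-fitting, one would instead invoke asymptotic equicontinuity of a Donsker class.

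\textbf{Step 3 (conclusion).} Combining Steps 1 and 2 gives $\sqrt n(\widehat\theta-\theta_0)=J^{-1}\sqrt n\,\Psi_n(\theta_0,\eta_0)+o_p(1)$. Slutsky's lemma then yields the limit $N(0,J^{-1}VJ^{-1})$, written with $J^{-\top}$ in place of the second $J^{-1}$ if $J$ is not symmetric.
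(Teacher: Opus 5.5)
Your proof is correct and follows the same route as the paper's: expand the score equation around $(\theta_0,\eta_0)$, get $N(0,V)$ from the CLT and $-J$ from the law of large numbers, and remove the nuisance contribution by Neyman orthogonality at first order and by the $o_p(n^{-1/4})$ rate at second order. Your split into $\sqrt n\,\Psi_n(\theta_0,\eta_0)$, the empirical-process term $\mathbb G_n[\Psi(\cdot;\theta_0,\widehat\eta)-\Psi(\cdot;\theta_0,\eta_0)]$, and the bias term $\sqrt n\{M(\widehat\eta)-M(\eta_0)\}$ makes explicit what the paper compresses into ``stochastic equicontinuity lets $\nabla_\eta\Psi_n$ be replaced by its expectation'' (cross-fitting is a legitimate substitute for that Donsker-type step), and your $J^{-\top}$ remark correctly repairs the paper's $J^{-1}VJ^{-1}$ when $J$ is not symmetric.
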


\begin{proof}
Let $\Psi_n(\theta, \eta) = \frac{1}{n}\sum_{i=1}^n \Psi(O_i; \theta, \eta)$.
Expand the defining equation $\Psi_n(\widehat{\theta}, \widehat{\eta}) = 0$ around $(\theta_0,\eta_0)$:
$$
0 = \Psi_n(\theta_0, \eta_0) + (\nabla_\theta \Psi_n)(\widehat{\theta} - \theta_0) + (\nabla_\eta \Psi_n)[\widehat{\eta} - \eta_0] + R_n,
$$
where $R_n$ is a higher-order remainder.
Rearranging,
$$
\sqrt{n}(\widehat{\theta} - \theta_0)
=
-(\nabla_\theta \Psi_n)^{-1}
\left(
\sqrt{n}\Psi_n(\theta_0, \eta_0)
+
\sqrt{n}(\nabla_\eta \Psi_n)[\widehat{\eta} - \eta_0]
\right)
+
o_p(1).
$$
We evaluate each term:
\begin{enumerate}
\item \emph{Leading term.}
By the central limit theorem,
$\sqrt{n}\Psi_n(\theta_0, \eta_0)=\frac{1}{\sqrt{n}}\sum_i \Psi(O_i;\theta_0,\eta_0)$ converges to $N(0,V)$.
\item \emph{Jacobian.}
By the law of large numbers, $\nabla_\theta \Psi_n \xrightarrow{p} \mathbb{E}[\nabla_\theta \Psi] = -J$.
\item \emph{Effect of the nuisance.}
Let $T_n=\sqrt{n}(\nabla_\eta \Psi_n)[\widehat{\eta} - \eta_0]$.
Under stochastic equicontinuity of the empirical process, $\nabla_\eta \Psi_n$ may be replaced by its expectation.
By Neyman orthogonality, $\nabla_\eta \Psi(\theta_0,\eta_0)=0$ in the relevant (directional) sense,
so the nuisance enters only at second order, of size $O_p(\sqrt{n}\|\widehat{\eta}-\eta_0\|^2)$.
Under $\|\widehat{\eta}-\eta_0\|=o_p(n^{-1/4})$, we have $T_n=o_p(1)$.
\end{enumerate}
Therefore $\sqrt{n}(\widehat{\theta} - \theta_0)$ converges to $N(0,J^{-1}VJ^{-1})$.
\end{proof}

Theorem~\ref{thm:asymptotic_normality} implies asymptotic linearity of $\widehat\theta$ and that $V$ can be estimated in sandwich form,
enabling Wald-type confidence intervals and tests.
The key point is that nuisance estimation does not appear in the first-order limit distribution---this is the practical meaning of orthogonalization.
A fully rigorous proof requires regularity conditions (e.g., uniform boundedness of moments in a semiparametric model).
For detailed treatments, see \citet{vanDerVaart1998,Tsiatis2006}.

\paragraph{Implementation via cross-fitting}
A generic cross-fitted DDML procedure is summarized in Algorithm~\ref{alg:cox_dml}.

\begin{algorithm}[!tbp]
\caption{Cross-fitted DDML with an orthogonal estimating equation}
\label{alg:cox_dml}
\begin{algorithmic}[1]
\Require Data $\{O_i\}_{i=1}^n$ (e.g., $O_i=(X_i,A_i,Y_i)$), number of folds $K$,
estimating function $\Psi(O;\theta,\eta)$, nuisance learner $\mathcal{A}_\eta$
\Ensure $\widehat\theta$ and (optionally) an asymptotic variance estimate $\widehat V$
\State Split indices $\{1,\ldots,n\}$ into $K$ disjoint folds $I_1,\ldots,I_K$.
\For{$k=1,\ldots,K$}
  \State Set training indices $I_{-k}\gets \{1,\ldots,n\}\setminus I_k$.
  \State \textbf{(Nuisance fit)} Train $\widehat\eta^{(-k)}\gets \mathcal{A}_\eta(\{O_i\}_{i\in I_{-k}})$.
\EndFor
\State Define the cross-fitted estimating equation
\[
\widehat\Psi_n(\theta)=\frac{1}{n}\sum_{k=1}^K\sum_{i\in I_k}
\Psi\bigl(O_i;\theta,\widehat\eta^{(-k)}\bigr).
\]
\State \textbf{(Solve)} Numerically find $\widehat\theta$ such that $\widehat\Psi_n(\widehat\theta)=0$
(e.g., Newton or quasi-Newton; in one dimension, bisection).
\If{variance estimation is requested}
  \State Compute
  $
  \widehat A\gets -\frac{1}{n}\sum_{k=1}^K\sum_{i\in I_k}
  \bigl(\frac{\partial}{\partial\theta}\bigr)^\top \Psi(O_i;\widehat\theta,\widehat\eta^{(-k)}).
  $
  \State Compute
  $
  \widehat B\gets \frac{1}{n}\sum_{k=1}^K\sum_{i\in I_k}
  \Psi(O_i;\widehat\theta,\widehat\eta^{(-k)})\Psi(O_i;\widehat\theta,\widehat\eta^{(-k)})^\top.
  $
  \State Set $\widehat V\gets \widehat A^{-1}\widehat B\,\widehat A^{-\top}$.
\EndIf
\State \Return $\widehat\theta$ (and optionally $\widehat V$).
\end{algorithmic}
\end{algorithm}

\paragraph{Connection to a local misspecification model}
Finally, we describe robustness to a local misspecification of the baseline model.
Consider the contiguous neighborhood
\begin{align}\label{local}
\eta_{0,n}=\eta_0+n^{-1/2}\Delta.
\end{align}
We evaluate the first-order bias (shift in the limit distribution) under this local perturbation.

\begin{proposition}[Local robustness]
\label{prop:local_robust}
Assume the local model \eqref{local}.
Then the first-order shift appearing in the limit distribution of $\sqrt{n}(\widehat\theta-\theta_0)$
depends only on the component of $\Delta$ that remains after orthogonalization,
and there is no first-order bias for perturbations compatible with the orthogonality constraint.
\end{proposition}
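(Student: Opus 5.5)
Under the local sequence \eqref{local} the data $O_1,\dots,O_n$ are drawn from $P_{0,n}$, indexed by $(\theta_0,\eta_{0,n})$, rather than from $P_0$, and the question is how this shifts the expansion used in the proof of Theorem~\ref{thm:asymptotic_normality}. I would redo that expansion with the same decomposition, $\sqrt n(\widehat\theta-\theta_0)=J^{-1}\{\sqrt n\Psi_n(\theta_0,\eta_0)+T_n\}+o_p(1)$, and track what changes under $P_{0,n}$.

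The first step is to show that the Jacobian and nuisance terms are unaffected. The smooth submodel $r\mapsto P_{\eta_0+r\Delta}$ has score $s_\Delta$, and I would assume it is differentiable in quadratic mean. Then Le Cam's first lemma makes $P_{0,n}^n$ contiguous to $P_0^n$. By contiguity, every $o_p(1)$ statement under $P_0$ carries over: $\nabla_\theta\Psi_n\to -J$ and the second-order nuisance remainder $T_n=o_p(1)$, which was obtained from Neyman orthogonality \eqref{eq:neyman_orth} together with $\|\widehat\eta-\eta_{0,n}\|=o_p(n^{-1/4})$. Because $\eta_{0,n}-\eta_0=O(n^{-1/2})$, the rate requirement is unchanged.

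The second step finds the limit of the leading term $\sqrt n\Psi_n(\theta_0,\eta_0)$ under $P_{0,n}$. Le Cam's third lemma applies here. The log-likelihood ratio satisfies $\log dP_{0,n}^n/dP_0^n=n^{-1/2}\sum_i s_\Delta(O_i)-\tfrac12\mathbb E[s_\Delta^2]+o_p(1)$. It is jointly asymptotically normal with $\sqrt n\Psi_n(\theta_0,\eta_0)$, and their asymptotic covariance is $\mathbb E_{P_0}[\Psi s_\Delta]$. Equivalently, by Bartlett's covariance identity \eqref{eq:bartlett_cov}, this covariance equals $\frac{d}{dr}\mathbb E_{P_r}[\Psi(O;\theta_0,\eta_0)]|_{r=0}$. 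The third lemma then gives
\[
\sqrt n(\widehat\theta-\theta_0)\xrightarrow{d}N\bigl(J^{-1}\mathbb E_{P_0}[\Psi s_\Delta],\;J^{-1}VJ^{-1}\bigr).
\]
The variance is unchanged, and the shift is $J^{-1}\mathbb E_{P_0}[\Psi s_\Delta]$.

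The final step reads off the claim. Decompose $s_\Delta=\Pi s_\Delta+(s_\Delta-\Pi s_\Delta)$, where $\Pi$ is the $L^2(P_0)$ projection onto the nuisance tangent space spanned by the $s_h$. Orthogonality in the form $\mathbb E[\Psi s_h]=0$ kills the projected part, so the shift depends only on the residual $s_\Delta-\Pi s_\Delta$, which is what "remaining after orthogonalization" means. If $\Delta$ is an allowed nuisance direction, that residual is zero and there is no first-order bias.

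I expect the main obstacle to be making this last equivalence precise. The paper states orthogonality both as the Gâteaux derivative condition \eqref{eq:neyman_orth} and as orthogonality to nuisance scores, and the argument needs these two forms to agree along the specific path generated by $\Delta$. I would resolve this by restricting to perturbations $\Delta$ that induce regular parametric submodels, for which Bartlett's identity \eqref{eq:bartlett_cov} justifies interchanging differentiation and expectation. I would also state explicitly the differentiability in quadratic mean that the contiguity step requires.
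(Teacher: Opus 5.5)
Your argument is correct at the paper's level of rigor, but it takes a different route from the paper's proof.

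\textbf{The paper's route.} The paper does not track a change in the law of the data. It re-centres the expansion of Theorem~\ref{thm:asymptotic_normality} at $\eta_{0,n}$ and writes the shift as $b_\Delta=J^{-1}\lim_n\sqrt n\,\mathbb{E}[\Psi(O;\theta_0,\eta_{0,n})]$. The expectation there is effectively under the fixed $P_0$, so the expansion starts from $M(\eta_0)=0$. It then expands this deterministic quantity to first order in the nuisance argument, obtaining $J^{-1}DM(\eta_0)[\Delta]$, and kills it directly with \eqref{eq:neyman_orth}.

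\textbf{Your route.} You let the sample be drawn from the contiguous laws $P_{0,n}$ and keep the expansion at $\eta_0$. You carry the $o_p(1)$ remainders over by contiguity (Le Cam's first lemma) and get the drift of the leading term from the third lemma. The result is $N(J^{-1}\mathbb{E}_{P_0}[\Psi s_\Delta],\,J^{-1}VJ^{-1})$.

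\textbf{What each buys.} The paper's argument is two lines and uses orthogonality exactly in its defining G\^{a}teaux form. However, it reads \eqref{local} as a misspecified plug-in under $P_0$. Its phrase ``projection of $\Delta$ onto the orthogonal complement of the nuisance tangent space'' is also loose, since $\Delta$ is a nuisance direction, not a score. Your argument takes the word ``contiguous'' seriously and shows that the variance is unchanged. It also gives the residual a precise meaning, $s_\Delta-\Pi s_\Delta$, which is nonzero only for perturbations leaving the allowed class (e.g.\ a $g$ with a component along $X$ in the Cox example). The price is differentiability in quadratic mean, $\Psi\in L^2(P_0)$, and the score form of orthogonality.

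\textbf{On the obstacle you flag.} Bartlett's identity \eqref{eq:bartlett_cov} only converts a derivative with respect to the data law into $\mathbb{E}_{P_0}[\Psi s_\Delta]$. It does not by itself reach \eqref{eq:neyman_orth}, which differentiates the nuisance argument of $\Psi$ with the law held at $P_0$. The bridge is unbiasedness of the moment along the submodel, $\mathbb{E}_{P_{\theta_0,\eta_0+r\Delta}}[\Psi(O;\theta_0,\eta_0+r\Delta)]=0$. Differentiating at $r=0$ gives $\mathbb{E}_{P_0}[\Psi s_\Delta]=-DM(\eta_0)[\Delta]$. So the two forms of orthogonality agree, and your shift is exactly $-b_\Delta$; the sign reflects that you perturb the law while the paper perturbs the plug-in. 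Alternatively, you may simply invoke the paper's own statement that orthogonality means $\mathbb{E}_{P_0}[\Psi s_h]=0$ for all allowed $h$.
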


\begin{proof}
Under the local model the true nuisance is $\eta_{0,n}$.
As in the proof of Theorem~\ref{thm:asymptotic_normality},
\[
\sqrt{n}(\widehat\theta-\theta_0)
=
J^{-1}\frac{1}{\sqrt{n}}\sum_{i=1}^n \Psi(O_i;\theta_0,\eta_{0,n})
+o_p(1).
\]
Hence the shift is determined by
\[
b_\Delta
=
J^{-1}\lim_{n\to\infty}\sqrt{n}\,\mathbb{E}\bigl[\Psi(O;\theta_0,\eta_{0,n})\bigr].
\]
By the G\^{a}teaux derivative,
\[
\sqrt{n}\,\mathbb{E}\bigl[\Psi(O;\theta_0,\eta_{0,n})\bigr]
=
\left.
\frac{\partial}{\partial\varepsilon}
\mathbb{E}\bigl[\Psi(O;\theta_0,\eta_0+\varepsilon \Delta)\bigr]
\right|_{\varepsilon=0}
+o(1).
\]
By Neyman orthogonality \eqref{eq:neyman_orth}, this derivative vanishes for directions eliminated by orthogonalization.
Therefore $b_\Delta$ depends only on the projection of $\Delta$ onto the orthogonal complement of the nuisance tangent space.
In particular, if $\Delta$ lies entirely in the eliminated directions, then $b_\Delta=0$.
\end{proof}

\paragraph{Connection to the next chapter}
Chapters~5 and beyond build on the ``regression error $\to$ distribution error'' map
and incorporate flows into concrete statistical models such as copulas, survival models, and missing-data imputation,
clarifying how model misspecification and generation error influence inferential targets.

\chapter{Extensions to Statistical Models: Dependence, Time, and Missingness}
\index{copula}
\index{survival analysis}
\index{missing data}
\index{multiple imputation}
\index{MAR}
\index{MNAR}
\label{sec:stat-extensions}

This chapter connects flow matching (and more broadly generative modeling) to three classical statistical domains:
(1) dependence modeling via copulas,
(2) event-time data with censoring in survival analysis, and
(3) missing-data inference via multiple imputation (MI).
We start from linear regression as a warm-up, to clarify the common design pattern:
\begin{quote}
Flows are often more useful as \emph{conditional samplers} than as generic density estimators.
They serve as modular components for completion, imputation, and counterfactual generation.
\end{quote}

All three examples can be organized by the same blueprint.
We first fix an interpretable \emph{base model} (marginals, hazards, mean structure, etc.).
Then we absorb only the remaining complexity (nonlinearity, dependence, incomplete observation)
as a \emph{residual} learned by a flow (or by conditional flow matching).
More concretely:
\begin{enumerate}
\item \textbf{Choose a base model.} Model the part that should remain interpretable (marginals, hazards, mean structure, etc.).
\item \textbf{Choose coordinates for residuals.} Map the unexplained component into a convenient coordinate system
(e.g., uniformization via the probability integral transform, or latent-variable representations).
\item \textbf{Learn conditional generation.} Learn the conditional distribution needed for completion or imputation as a regression problem
along a designed probability path, and use it as a sampler.
\end{enumerate}
A practical reason this perspective works so well is the availability of reliable ODE solvers:
once a vector field has been learned, sampling is straightforward by solving an ODE
(see Section~\ref{subsec:ode-sampling}).\index{ODE}

\section{Linear regression model and scores}
We begin with a simple but instructive case: linear regression.
We keep the regression parameter (the main inferential target) interpretable,
while relaxing only the Gaussianity of the error distribution by introducing a nuisance density.
This clarifies how (i) orthogonalized scores in DDML and (ii) nuisance learning by flow matching
fit into a common semiparametric template.

For $i=1,\dots,n$, consider
\[
Y_i=X_i^\top\beta+\sigma\epsilon_i,\qquad \epsilon_i\sim N(0,1).
\]
This imposes a strong normality assumption on the error.

\index{nuisance}
More generally, let
\[
Y_i=X_i^\top\beta+\sigma \varepsilon_i,\qquad \varepsilon_i\sim p_\eta,
\]
and impose identifying constraints $\mathbb{E}[\varepsilon_i]=0$ and $\mathbb{E}[\varepsilon_i^2]=1$.
With the standardized residual
\[
e_i=e_i(\beta,\sigma):=\frac{Y_i-X_i^\top\beta}{\sigma},
\]
the conditional density is
\[
f(y_i\,|\,X_i;\beta,\sigma,p_\eta)=\frac{1}{\sigma}p_\eta(e_i),
\]
and the log-likelihood becomes
\begin{equation}
\ell_n(\beta,\sigma,p_\eta)
=
\sum_{i=1}^n \Bigl\{\log p_\eta(e_i)-\log\sigma\Bigr\}
=
\sum_{i=1}^n \log p_\eta(e_i)-\frac{n}{2}\log\sigma^2.
\label{eq:loglik-locscale}
\end{equation}
If $p_\eta=\varphi$ is the standard normal density, then
\[
\ell_n(\beta,\sigma^2,\varphi)
=
-\frac{n}{2}\log(2\pi\sigma^2)-\frac{1}{2\sigma^2}\sum_{i=1}^n (Y_i-X_i^\top\beta)^2.
\] 

Define the (one-dimensional) Stein score
\[
s_\eta(e):=\partial_e\log p_\eta(e).
\]
Differentiating \eqref{eq:loglik-locscale}, the per-observation scores for $O_i=(Y_i,X_i)$ are
\begin{align}
\psi_{\beta}(O_i;\beta,\sigma,\eta)
&:=
\partial_\beta \ell_i
=
-\frac{1}{\sigma}X_i\, s_\eta(e_i),
\label{eq:score-beta-raw}\\
\psi_{\sigma^2}(O_i;\beta,\sigma,\eta)
&:=
\partial_{\sigma^2}\ell_i
=
-\frac{1}{2\sigma^2}\Bigl\{1+e_i s_\eta(e_i)\Bigr\}.
\label{eq:score-sig2-raw}
\end{align}
In the Gaussian case, $s_\eta(e)=-e$, so these reduce to the usual normal linear-regression scores.

\paragraph{Nuisance tangent space and orthogonalization by projection}
Perturbations of $p_\eta$ that preserve mean $0$ and variance $1$ generate the nuisance tangent space in $L^2(p_\eta)$,
\[
\mathcal{T}_\eta
=
\Bigl\{h(e):\ \mathbb{E}[h(\varepsilon)]=0,\ \mathbb{E}[\varepsilon h(\varepsilon)]=0,\ 
\mathbb{E}[(\varepsilon^2-1)h(\varepsilon)]=0\Bigr\}.
\]
Hence the orthogonal complement is
\[
\mathcal{T}_\eta^\perp=\mathrm{span}\{1,\ e,\ e^2-1\}.
\]
The efficient score is obtained by projecting the parametric score
\eqref{eq:score-beta-raw}--\eqref{eq:score-sig2-raw} onto $\mathcal{T}_\eta^\perp$.
A key object is the projected score function
\[
\tilde s_\eta(e):=\Pi_{\mathcal{T}_\eta^\perp}\,s_\eta(e)
=
b_\eta\, e+c_\eta\,(e^2-1),
\]
where the constant term vanishes because $\mathbb{E}[s_\eta(\varepsilon)]=0$.
Let
\[
\mu_3:=\mathbb{E}[\varepsilon^3],\qquad \mu_4:=\mathbb{E}[\varepsilon^4].
\]
Solving the normal equations for the $L^2(p_\eta)$ projection yields
\begin{equation}
c_\eta=\frac{\mu_3}{\mu_4-1-\mu_3^2},\qquad
b_\eta=-\frac{\mu_4-1}{\mu_4-1-\mu_3^2}.
\label{eq:bc-coef}
\end{equation}
Therefore the efficient scores can be written as
\begin{align}
\psi_{\beta}^{\mathrm{eff}}(O_i;\beta,\sigma^2,\eta)
&=
-\frac{1}{\sigma}X_i\,\tilde s_\eta(e_i),
\label{eq:score-beta-eff}\\
\psi_{\sigma^2}^{\mathrm{eff}}(O_i;\beta,\sigma^2,\eta)
&=
-\frac{1}{2\sigma^2}\Bigl\{1+e_i\tilde s_\eta(e_i)\Bigr\}.
\label{eq:score-sig2-eff}
\end{align}
If the error distribution is symmetric ($\mu_3=0$), then $c_\eta=0$ and $b_\eta=-1$,
so $\tilde s_\eta(e)=-e$ and \eqref{eq:score-beta-eff} coincides with the OLS score.
This is a concrete instance of Neyman orthogonality:\index{orthogonality}
the score has been orthogonalized against nuisance directions, so nuisance estimation error
does not impact the target estimator to first order.

\paragraph{Learning the nuisance distribution by flow matching}\index{flow matching}
Here the nuisance parameter $\eta$ is the error density $p_\eta$.
Given standardized residual samples $\{e_i\}$, we can learn a transport from $Z\sim N(0,1)$
to $p_\eta$ via flow matching, thereby obtaining a \emph{generator} for the errors.
Concretely, we learn a velocity field $v_\theta(t,z)$ so that particles evolve as
\[
\frac{d}{dt}Z_t=v_\theta(t,Z_t),\qquad t\in[0,1],
\]
and the terminal distribution of $Z_1$ matches the empirical residual distribution.
In the simplest FM setup, we pair $e_1$ (a residual sample) with $e_0\sim N(0,1)$, draw $t\sim\mathrm{Unif}(0,1)$,
use linear interpolation $e_t=(1-t)e_0+te_1$, and fit $v_\theta$ by
\[
\min_\theta\ 
\mathbb{E}\Bigl[\bigl(v_\theta(t,e_t)-(e_1-e_0)\bigr)^2\Bigr].
\]
After training, solving the ODE with $Z_0\sim N(0,1)$ produces pseudo-samples from $p_\eta$
without requiring an explicit density formula.
The learned generator can be used for (i) additional sampling (diagnostics, bootstrap-like uses) and
(ii) approximating the Stein score $s_\eta$ (e.g., via KDE in one dimension, or via DSM).

\paragraph{Cross-fitting}
Split the sample into $K$ folds.
For fold $k$, learn $\widehat\eta^{(-k)}$ (e.g., a flow-based generator/score for $p_\eta$) using the complement of fold $k$.
Then in fold $k$ solve estimating equations such as
\[
\sum_{i\in k}\psi_\beta(O_i;\beta,\sigma^2,\widehat\eta^{(-k)})=0,\qquad
\sum_{i\in k}\psi_{\sigma^2}(O_i;\beta,\sigma^2,\widehat\eta^{(-k)})=0,
\]
or use them for a one-step update from an initial estimator.
Averaging over folds yields $(\widehat\beta,\widehat\sigma^2)$.
This stabilizes inference under flexible nuisance learning.

The importance of residualization/orthogonalization in high-dimensional linear models is emphasized in
\citet{belloni2014pds}. Related ``debiased'' constructions for low-dimensional inference under regularization
include \citet{zhang2014jssb} and \citet{vandegeer2014aos}, which can be viewed as score constructions via projection.

\section{Copulas, censoring, and MAR: a roadmap}
\index{copula}
\index{censoring}
\index{MAR}
\index{MNAR}
\index{multiple imputation}
We now connect flows to (1) dependence modeling (copulas), (2) survival analysis with censoring, and (3) missing data via MI.
These topics use different terminology, so this section provides only the minimum ``map'' needed for this chapter.

\subsection*{Copulas: separating marginals and dependence}
\paragraph{Goal}
A copula separates a multivariate distribution into marginal distributions and a dependence component.
This is useful when we want to flexibly enrich dependence while keeping marginals interpretable, or vice versa.

\paragraph{Basic form}
Let $X=(X_1,\dots,X_d)$ be continuous with marginal CDFs $F_j(x_j)$.
With $U_j=F_j(X_j)$ we have $U_j\in(0,1)$ and the joint distribution of $U=(U_1,\dots,U_d)$ has uniform marginals.
If
\[
P(X_1\le x_1,\dots,X_d\le x_d)
=
C(F_1(x_1),\dots,F_d(x_d)),
\]
then $C$ is called a copula.
This provides a natural semiparametric decomposition: keep the marginals (or a simple base dependence model)
parametric and use a flow to calibrate the residual dependence.

\subsection*{Survival analysis: censoring, hazards, proportional hazards}
Event times $T$ (death, relapse, failure, etc.) are often not fully observed due to limited follow-up.
With a censoring time $C$, the observed data are typically
\[
Y=\min(T,C),\qquad \Delta=\mathbb{I}(T\le C),
\]
where $\Delta=0$ indicates right censoring.

Let $S(t)=P(T>t)$ and density $f(t)$.
The hazard is
\[
\lambda(t)=\lim_{\Delta t\downarrow 0}\frac{P(t\le T<t+\Delta t\,|\,T\ge t)}{\Delta t}
=\frac{f(t)}{S(t)}.
\]
With cumulative hazard $\Lambda(t)=\int_0^t \lambda(s)\,ds$, we have $S(t)=\exp\{-\Lambda(t)\}$.

\paragraph{Cox proportional hazards model}
Cox PH models
\[
\lambda(t\,|\,x)=\lambda_0(t)\exp(\beta^\top x),
\]
estimating the covariate effect $\beta$ as a finite-dimensional target.
In this book we keep $\beta$ interpretable while absorbing departures from PH or more complex time dependence via a flow-based correction.
Censoring makes conditional distributions essential, which is precisely where conditional generation becomes valuable.

\subsection*{Missing data: MCAR/MAR/MNAR and multiple imputation}
\paragraph{Missingness mechanisms}
Write $X=(X_{\mathrm{obs}},X_{\mathrm{mis}})$ and let $R$ indicate observation (1) vs missingness (0).
The missingness mechanism is $P(R\,|\,X)$.
Common assumptions are:
\begin{itemize}
\item \textbf{MCAR} (Missing Completely at Random): $R\perp X$.
\item \textbf{MAR} (Missing at Random):
\[
P(R\,|\,X_{\mathrm{obs}},X_{\mathrm{mis}})=P(R\,|\,X_{\mathrm{obs}}).
\]
\item \textbf{MNAR} (Missing Not at Random): missingness may depend on $X_{\mathrm{mis}}$; MAR fails.
\end{itemize}

\paragraph{Rubin's multiple imputation}
MI creates $K$ completed data sets, computes an estimate $\widehat\theta^{(k)}$ and variance estimate $\widehat V^{(k)}$
from each, then combines them via
\[
\bar\theta=\frac{1}{K}\sum_{k=1}^K \widehat\theta^{(k)},
\quad
\bar V=\frac{1}{K}\sum_{k=1}^K \widehat V^{(k)},
\quad
B=\frac{1}{K-1}\sum_{k=1}^K
(\widehat\theta^{(k)}-\bar\theta)(\widehat\theta^{(k)}-\bar\theta)^\top,
\]
and total variance
\[
T=\bar V+\left(1+\frac{1}{K}\right)B.
\]
The key statistical requirement is the ability to generate valid samples from
\[
X_{\mathrm{mis}} \,|\,X_{\mathrm{obs}},
\]
which is exactly conditional generation.
Under MNAR, non-identifiability remains and sensitivity analysis (or extra assumptions) is unavoidable.

With this roadmap, we now develop the three connections:
(1) residual dependence calibration via flow copulas,
(2) survival models that preserve interpretability while extending distributional structure,
and (3) imputation as conditional generation in MI.

\section{Flow copulas}
\index{copula}
\index{dependence}
\index{Sklar's theorem}
\label{subsec:flow-copula}

\paragraph{Motivation: keep marginals, flex dependence}
In multivariate modeling we must handle both marginal shapes and dependence.
Statistically, it is often desirable to preserve interpretable marginal modeling
(e.g., regression structures or scales) while flexibly enriching dependence.
Copulas provide the classical coordinate system for this separation, and flows provide a powerful nonparametric dependence module.

\paragraph{Sklar's theorem and the copula factorization}
Let $F_j(x)=P(X_j\le x)$.
For continuous marginals, the probability integral transform gives
\[
U_j=F_j(X_j),\qquad U_j\sim\mathrm{Unif}(0,1).
\]
Sklar's theorem states that the joint CDF can be written as
\[
F(x_1,\dots,x_d)=C(F_1(x_1),\dots,F_d(x_d)).
\]
If a joint density exists, then
\[
p(x_1,\dots,x_d)=c(u)\,\prod_{j=1}^d f_j(x_j),
\qquad u_j=F_j(x_j),
\]
where $c(u)=\partial^d C(u)/(\partial u_1\cdots\partial u_d)$ is the copula density.
Thus we can ``delegate'' marginals to a statistical model and learn only $c$ by a flexible generator.

\paragraph{Pseudo-observations and boundary stabilization}
In practice $F_j$ are unknown, so we use pseudo-observations
\[
\widehat U_{ij}=\widehat F_j(X_{ij}).
\]
Naively plugging in empirical CDFs can be unstable near $0$ and $1$ after logit/probit transforms.
A robust implementation uses rank-based stabilization
\[
\widehat U_{ij}=\frac{R_{ij}-1/2}{n}\qquad (R_{ij}:\ \text{rank of }X_{ij}),
\]
and/or clipping $\widehat U_{ij}\leftarrow \min(1-\varepsilon,\max(\varepsilon,\widehat U_{ij}))$ with $\varepsilon\asymp n^{-1}$.
For inference (standard errors), uncertainty from marginal estimation can propagate into dependence estimation;
bootstrap that re-fits marginals, or cross-fitting, is a principled way to account for this.\index{cross-fitting}

\paragraph{Flow copulas via transforms on $\mathbb{R}^d$.}
The copula domain is $[0,1]^d$, whose boundary can be inconvenient.
We map $u\in(0,1)^d$ into $z\in\mathbb{R}^d$ using, for example, the logit transform
\[
z=\mathrm{logit}(u),\qquad
\mathrm{logit}(u)=\Bigl(\log\frac{u_1}{1-u_1},\dots,\log\frac{u_d}{1-u_d}\Bigr).
\]
Let $q(z)$ be a base density on $\mathbb{R}^d$ (e.g., standard normal) and let $T_\theta:\mathbb{R}^d\to\mathbb{R}^d$ be an invertible map.
Generate
\[
Z=T_\theta(Z_0),\qquad Z_0\sim q,
\]
then map back by $U=\sigma(Z)$ with componentwise sigmoid $\sigma(z)=1/(1+e^{-z})$.
By change of variables,
\[
c_\theta(u)
=
q\bigl(T_\theta^{-1}(z)\bigr)\,
\left|\det \nabla T_\theta^{-1}(z)\right|\,
\prod_{j=1}^d \frac{1}{u_j(1-u_j)},
\qquad z=\mathrm{logit}(u).
\]
A probit map $z_j=\Phi^{-1}(u_j)$ is also common; then
\[
c_\theta^{\mathrm{probit}}(u)
=
q\!\left(T_\theta^{-1}(z)\right)\left|\det\nabla T_\theta^{-1}(z)\right|
\prod_{j=1}^d \frac{1}{\phi(z_j)},
\qquad z_j=\Phi^{-1}(u_j).
\]
Both transforms have diverging Jacobians near $u\to0,1$, so clipping is practically essential.

\paragraph{Learning by flow matching: regression rather than likelihood}\index{flow matching}
Rather than maximizing a copula likelihood, we design a probability path from $q$ to the data distribution in $z$-space
and learn the corresponding velocity field by conditional flow matching:
\begin{enumerate}
\item Construct pseudo-observations $u_i$ (typically by ranks).
\item Transform to $z_i=\mathrm{logit}(u_i)$ (or probit).
\item Design a probability path $\rho_t(\cdot\,|\,z_1)$ and learn the velocity field $v_t(z)$ by conditional flow matching in $z$-space.
\item Sample by solving the learned ODE in $z$-space and mapping back to $u$.
\end{enumerate}
This yields a sampler for the copula (and for conditional distributions induced by the copula)
without requiring evaluation of the normalizing constants in high dimension.

\paragraph{A simple experiment: an S-shaped dependence}
We illustrate with a strongly nonlinear dependence where correlation changes sign across the domain,
which is difficult for simple parametric copulas.
Figure~\ref{fig:copula_fm} shows (left) samples from the target copula and (right) samples from the learned model,
together with KDE-based contour visualization of the copula density and the learned vector field at $t=0.5$.\index{vector field}
The key point is geometric: FM does not directly optimize a density value at each point;
it learns a transport rule that moves simple noise into the target dependence structure,
thereby avoiding normalization issues while capturing complex dependence.

\begin{figure}[htbp]
    \centering
    \includegraphics[width=\textwidth]{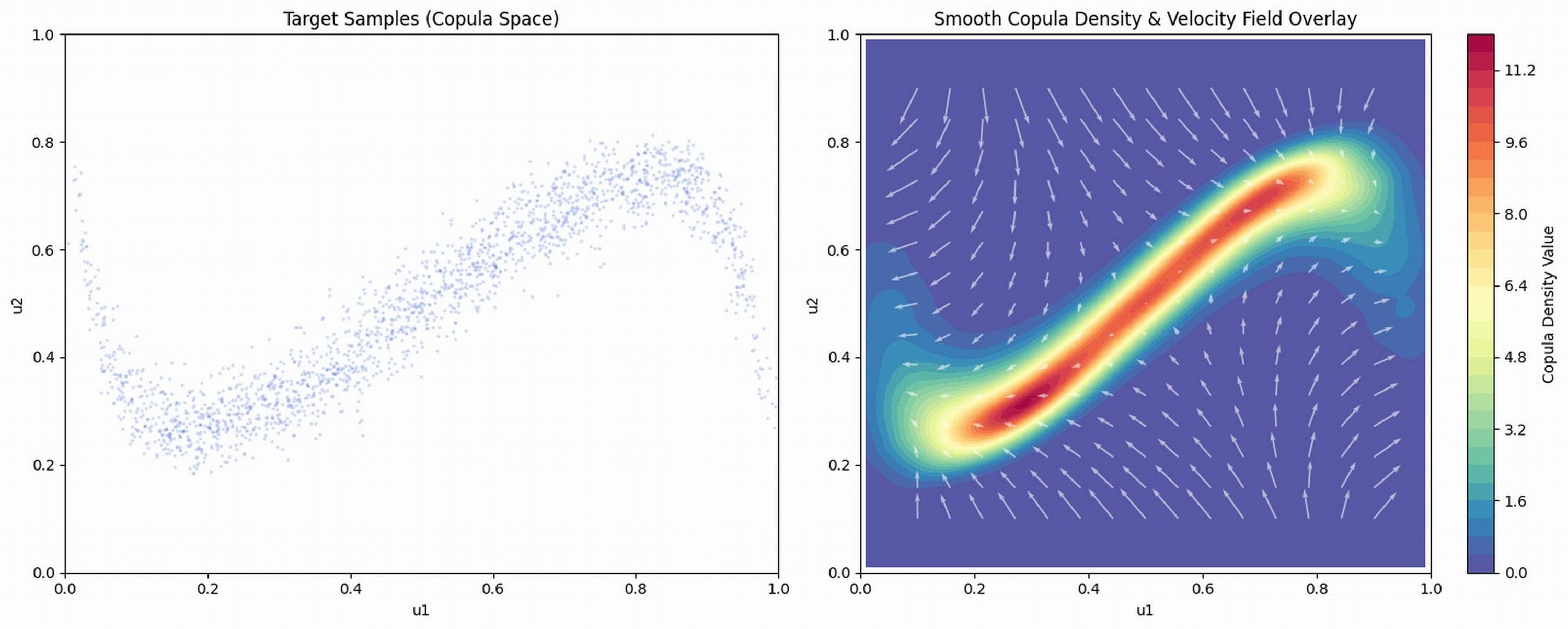}
    \caption{Flow-copula learning by FM. Left: target samples exhibiting an S-shaped dependence. Right: samples from the learned model; KDE-based copula-density contours are shown as filled levels, overlaid with the learned vector field (velocity at $t=0.5$, arrows).}
    \label{fig:copula_fm}
\end{figure}

\paragraph{Semiparametric interpretation}
The construction ``marginals + flow dependence'' is a semiparametric dependence model.
It preserves interpretability of marginal modeling while providing a flexible dependence module.
This is particularly useful when downstream tasks require conditional generation, e.g.,
sampling from $p(x_A\,|\,x_B)$ for completion or imputation.

\subsubsection*{Parametric copulas versus flow copulas}
\label{subsubsec:archimedean-vs-flow}
A classical family is the Archimedean copula, defined as
\begin{equation}
C(u_1,\dots,u_d)=\varphi^{-1}\!\left(\varphi(u_1)+\cdots+\varphi(u_d)\right),
\label{eq:archimedean}
\end{equation}
where $\varphi:[0,1]\to[0,\infty]$ is decreasing and convex with $\varphi(1)=0$ (plus regularity conditions).
Archimedean copulas are attractive because (i) they use low-dimensional parameters,
(ii) dependence summaries such as Kendall's $\tau$ often have clear relationships to parameters,
and (iii) tail dependence properties can be interpreted by family.

However, structural constraints can be strong.
For example, \eqref{eq:archimedean} often implies near-exchangeability, so dependence tends to be symmetric.
In high dimension a single generator must explain all dependence, making anisotropic or locally varying dependence harder to express
without additional constructions (rotations, nesting, vines).

\paragraph{What flow copulas extend}
Flow copulas represent dependence via a map (or vector field) on $\mathbb{R}^d$ rather than a one-dimensional generator,
enabling:
\begin{itemize}
\item pair-specific strength/direction (non-exchangeable dependence),
\item asymmetric tail dependence,
\item local dependence that varies across the domain,
\item dependence that varies with covariates (conditional copulas).
\end{itemize}
In this sense, flows provide a nonparametric dependence module beyond standard parametric copulas.\citep{Joe2014,KamtheAssefaDeisenroth2021,MitskopoulosAmvrosiadisOnken2022,huk2025diffusion}

\paragraph{Why parametric copulas still matter}
From a statistical viewpoint, parametric copulas remain valuable:
they are stable in small samples, interpretable via low-dimensional summaries, and provide explicit structural extrapolation.
Flow copulas require regularization and model selection to control overfitting,
and tail behavior may require additional diagnostics.

\paragraph{Practical recommendation: hybrid designs}
Rather than choosing between parametric and flow copulas, a natural approach is to combine them:
\begin{enumerate}
\item \textbf{Parametric base + flow residual.} Sample $U^{(0)}\sim C_{\alpha}$ (e.g., Archimedean),
then apply an additional deformation $U=\mathcal{T}_\theta(U^{(0)})$.
The parameter $\alpha$ summarizes interpretable dependence, while $\mathcal{T}_\theta$ captures local/asymmetric residual dependence.
\item \textbf{Weak structural constraints on the flow.} If exchangeability or other symmetry is desired, encode it in the parameterization
or add penalties that discourage symmetry violations.
\end{enumerate}
This aims for the trio: stable estimation, interpretable summary, and flexible residual correction.

\paragraph{Model diagnostics}\index{diagnostics}
When comparing flow copulas to parametric baselines, minimum practical diagnostics include:
\begin{itemize}
\item uniform marginals of pseudo-observations and stability near boundaries,
\item agreement in low-dimensional summaries (Kendall's $\tau$, tail dependence),
\item behavior of conditional sampling $p(u_A\,|\,u_B)$, which directly matters for completion/imputation.
\end{itemize}


\section{Survival analysis with censoring}
\index{survival analysis}
\index{AFT model}
\label{subsec:flow-survival}

We next illustrate how flows can be integrated with one of the most central models in statistics:
the Cox proportional hazards model.

\paragraph{Censoring as a missing-data problem for distributions}
In survival analysis, the true event time $T$ is not always observed.
With a censoring time $C$, we observe
\begin{align}\label{transform1}
\tilde{T}=\min(T,C),\qquad \delta=\mathbb{I}(T\le C),
\end{align}
where $\delta=0$ indicates right censoring.
While censoring is usually formulated through likelihood contributions,
from the flow viewpoint it is also a missing-data problem:
we need the conditional distribution of the latent $T$ given the observed truncation event $T>\tilde t$.

\paragraph{Right censoring is conditional generation}\index{censoring}
For censored cases ($\delta=0$), the data provide only $T>\tilde T$.
Completion and prediction therefore rely on sampling from
\[
p(t\,|\,t>\tilde t, X=x)
\ \propto\ 
p(t\,|\,X=x)\,\mathbb{I}(t>\tilde t).
\]
This highlights a natural role for flows:
not necessarily as ``a better density model'' for $T\,|\,X$,
but as a stable \emph{conditional sampler under truncation}.

\paragraph{A monotone conditional time model}
To obtain a coherent one-dimensional conditional distribution, we can model $T\,|\,X=x$ via a monotone transform
\[
T=h_\theta(\epsilon;x),\qquad \epsilon\sim r,
\]
where $\epsilon\mapsto h_\theta(\epsilon;x)$ is increasing.
Then, letting $R$ denote the CDF of $r$,
\begin{align}\notag
F_\theta(t\,|\,x)&=P(T\le t\,|\,x)
=P(\epsilon\le h_\theta^{-1}(t;x))
=R\bigl(h_\theta^{-1}(t;x)\bigr),
\end{align}
and the density is
\[
f_\theta(t\,|\,x)
=
r\bigl(h_\theta^{-1}(t;x)\bigr)\,
\left|\frac{\partial}{\partial t}h_\theta^{-1}(t;x)\right|.
\]
If $h_\theta$ is restricted to simple location/scale forms, this includes accelerated failure time (AFT) models as a special case.

\paragraph{Censoring likelihood}
Under right censoring, the log-likelihood is
\begin{align}\notag
&\ell(\theta)=\sum_{i=1}^n
\left\{
\delta_i \log f_\theta(\tilde{T}_i\,|\,X_i)
+(1-\delta_i)\log S_\theta(\tilde{T}_i\,|\,X_i)
\right\},\\[2mm]\notag
&S_\theta(t\,|\,x)=1-F_\theta(t\,|\,x).
\end{align}
In the monotone-transform representation, evaluating $F_\theta$ and $S_\theta$ reduces to computing the inverse map $h_\theta^{-1}$,
illustrating why invertible flow constructions can be practically attractive.

\paragraph{Cox PH model: counting-process formulation and partial likelihood}
Cox PH is a special case of the transformation model \eqref{transform1}, with hazard
\index{transformation model}
\[
\lambda_\beta(t\,|\,X)=\lambda_0(t)\exp(\beta^\top X),
\qquad
S_\beta(t\,|\,X)=\exp\{-\Lambda_0(t)\exp(\beta^\top X)\},
\]
where $\Lambda_0(t)=\int_0^t\lambda_0(s)\,ds$.
For individual $i$, define the counting process and at-risk indicator
\[
N_i(t)=\mathbb{I}(\tilde T_i\le t,\ \delta_i=1),\qquad
Y_i(t)=\mathbb{I}(\tilde T_i\ge t).
\]
Let $\mathcal{F}_{t-}$ be the natural filtration up to (but not including) time $t$.
With covariate process $X_i(t)$, the Cox model assumes the $\mathcal{F}_{t-}$-intensity
\[
\lambda_i(t\,|\,\mathcal{F}_{t-})
=
Y_i(t)\,\lambda_0(t)\exp\{\beta^\top X_i(t)\},
\]
a semiparametric model for the conditional hazard.\citep{Cox1972,Cox1975,AndersenGill1982,FlemingHarrington1991}

\paragraph{Partial likelihood}
Using only conditional probabilities of ``who fails'' at event times yields the partial likelihood
\[
L_p(\beta)
=
\prod_{i:\delta_i=1}
\frac{\exp\{\beta^\top X_i(\tilde T_i)\}}
{\sum_{j\in\mathcal R_i}\exp\{\beta^\top X_j(\tilde T_i)\}},
\qquad
\mathcal R_i=\{j:\tilde T_j\ge \tilde T_i\},
\]
which eliminates the baseline hazard $\lambda_0(t)$.
This robustness to the nuisance baseline is a key semiparametric strength.

\paragraph{What breaks when PH is violated?}\index{proportional hazards}
If PH fails, the hazard ratio changes over time; a natural representation is
a time-varying coefficient $\beta(t)$ or, more generally, an additional non-PH component $g(t,X)$ in the intensity.
A purely constant-coefficient Cox model can then exhibit systematic miscalibration over time horizons.

\paragraph{Using a flow as a correction module}
We keep $\beta$ as the interpretable target and absorb departures from PH into a nuisance component $g(t,X)$.
A convenient generalized intensity is
\[
\lambda(t\,|\,X)=\lambda_0(t)\exp\{\beta^\top X+g(t,X)\},
\]
where $\eta=(\Lambda_0,g)$ plays the role of nuisance.
To preserve $\sqrt{n}$-valid inference on $\beta$ under flexible estimation of $\eta$,
we use orthogonalized scores and cross-fitting (DDML). The core idea is to construct an estimating equation whose
population moment is \emph{Neyman-orthogonal} with respect to nuisance perturbations.

\paragraph{Orthogonal score (generalized Cox score)}
Write $\Lambda_0(t)=\int_0^t\lambda_0(s)\,ds$ and $\eta=(\Lambda_0,g)$.
For $k=0,1$, define
\[
S_\eta^{(k)}(t;\beta)
=
\mathbb{E}\!\left[
Y(t)\exp\{\beta^\top X+g(t,X)\}\,X^{\otimes k}
\right],
\qquad
\mu_\eta(t;\beta)=\frac{S_\eta^{(1)}(t;\beta)}{S_\eta^{(0)}(t;\beta)}.
\]
Consider the estimating function (Fisher score--type moment)
\begin{align}\nonumber
\Psi(O;\beta,\eta)
=
\int_0^\tau
\Bigl(X-\mu_\eta(t;\beta)\Bigr)
\Bigl[
dN(t)-Y(t)\exp\{\beta^\top X+g(t,X)\}\,d\Lambda_0(t)
\Bigr],
\end{align}
and its population moment $\Psi(\beta,\eta)=\mathbb{E}\{\Psi(O;\beta,\eta)\}$.
We say that $\Psi$ is Neyman-orthogonal in direction $h$ if
\begin{align}\label{result}
\left.
\frac{\partial}{\partial\varepsilon}
\Psi\bigl(\beta_0,\eta_0+\varepsilon h\bigr)
\right|_{\varepsilon=0}=0,
\end{align}
where $\eta_0=(\lambda_0,g_0)$. See Section~\ref{subsec:ddml} for the general DDML discussion.\index{cross-fitting}

\paragraph{Comparison with the PH case ($g\equiv 0$)}
If $g\equiv0$, then
\begin{align}\nonumber
\Psi_{\mathrm{PH}}(O;\beta,\eta)
=
\int_0^\tau
\Bigl(X-\mu_{\mathrm{PH}}(t;\beta)\Bigr)
\Bigl[
dN(t)-Y(t)\exp\{\beta^\top X\}\,d\Lambda_0(t)
\Bigr],
\end{align}
where
\[
\mu_{\mathrm{PH}}(t;\beta)=\frac{\mathbb E[Y(t)\exp(\beta^\top X)X]}{\mathbb E[Y(t)\exp(\beta^\top X)]}.
\]
Thus the difference is entirely the correction term $g(t,X)$.

\begin{assumption}[Basic assumptions (survival analysis)]
\label{ass:surv_basic}
Assume:
\begin{enumerate}
\item i.i.d. sampling: $\{O_i\}_{i=1}^n$ are i.i.d. with $O=(\tilde T,\delta,X)$.
\item Independent censoring: $C\indep T\,|\,X$.
\item Regularity: $\displaystyle\int_0^\tau\lambda_0(t)\,dt<\infty$, and $\exp\{\beta^\top X+g(t,X)\}$ is integrable over $t\in(0,\tau)$.
\item Bounded covariates and positive risk: $\|X\|$ is bounded and there exists $c>0$ such that
\[
\inf_{t\in(0,\tau)}S_{\eta_0}^{(0)}(t;\beta_0)\ge c.
\]
\end{enumerate}
\end{assumption}

\begin{definition}
\label{def:beta_target}
The target parameter $\beta_0$ is defined as the solution to
\[
\Psi(\beta,\eta_0)=0
\]
under the true nuisance $\eta_0=(\lambda_0,g_0)$.
We further assume uniqueness under an identification (orthogonality) constraint.
\end{definition}

\begin{proposition}[Neyman orthogonality (centered score)]
\label{prop:orthogonality}
Under Assumption~\ref{ass:surv_basic}, suppose the estimating equation $\Psi(\beta,\eta)$ is defined with centering by the risk-set mean.
Then, for any perturbation direction $h$,
\[
\left.\frac{d}{dr}\Psi(\beta_0,\eta_0+r h)\right|_{r=0}=0.
\]
That is, $\Psi$ is Neyman-orthogonal with respect to $\eta$.
\end{proposition}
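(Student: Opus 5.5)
The plan is to compute the G\^{a}teaux derivative of $\Psi(\beta_0,\eta_0+rh)$ directly, treating the two nuisance components separately. Write a perturbation direction as $h=(h_\Lambda,h_g)$, so that $\Lambda_r=\Lambda_0+r h_\Lambda$ and $g_r=g_0+r h_g$. Under Assumption~\ref{ass:surv_basic} (bounded $X$, $S^{(0)}_{\eta_0}\ge c$, integrability), dominated convergence lets us differentiate under the expectation and the $dt$-integral.

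The product rule then gives two terms. The first term is where $X-\mu_{\eta_r}(t;\beta_0)$ is differentiated and the bracket is held at $\eta_0$. The second is where the bracket is differentiated and the centering is held at $\mu_{\eta_0}$.

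\emph{First term.} Here I use the counting-process structure. Independent censoring and the Cox-type intensity with $g_0$ make
\[
dM(t)=dN(t)-Y(t)\exp\{\beta_0^\top X+g_0(t,X)\}\,d\Lambda_0(t)
\]
a martingale increment with respect to $\mathcal{F}_{t-}$. The derivative $\dot\mu(t)=\partial_r\mu_{\eta_r}(t;\beta_0)|_{r=0}$ is a ratio of population expectations, so it is a deterministic function of $t$; it is bounded because $S^{(0)}\ge c$. Hence $\mathbb{E}\int_0^\tau\dot\mu(t)\,dM(t)=0$ by the zero-mean property of stochastic integrals of predictable, in fact deterministic, integrands. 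This is the standard reason the risk-set centering kills the sensitivity to how $\mu$ is estimated.

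\emph{Second term.} Differentiating the compensator gives
\[
-\int_0^\tau \mathbb{E}\Bigl[\bigl(X-\mu_{\eta_0}(t)\bigr)Y(t)e^{\beta_0^\top X+g_0(t,X)}\Bigr]\,dh_\Lambda(t)
\;-\;\int_0^\tau \mathbb{E}\Bigl[\bigl(X-\mu_{\eta_0}(t)\bigr)Y(t)e^{\beta_0^\top X+g_0(t,X)}h_g(t,X)\Bigr]\,d\Lambda_0(t).
\]
The baseline-hazard part vanishes identically. For each fixed $t$ its inner expectation equals $S^{(1)}_{\eta_0}(t)-\mu_{\eta_0}(t)S^{(0)}_{\eta_0}(t)=0$ by the very definition of $\mu_\eta$; this is exactly what ``centering by the risk-set mean'' buys.

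\emph{Main obstacle.} The remaining piece is the $h_g$-direction. The displayed $h_g$-integrand is a weighted covariance between $X$ and $h_g(t,X)$ on the risk set, and it is not zero for arbitrary $h_g$. It vanishes automatically only when $h_g$ depends on $t$ alone, in which case it is absorbed into $\Lambda_0$ and the same cancellation applies. To close the argument I would invoke the identification (orthogonality) constraint of Definition~\ref{def:beta_target}: the admissible $g$, and hence the admissible directions $h_g$, are restricted so that
\[
\mathbb{E}\bigl[(X-\mu_{\eta_0}(t))\,Y(t)e^{\beta_0^\top X+g_0}\,h_g(t,X)\bigr]=0 .
\]
Equivalently, $g$ is taken to lie in the weighted orthogonal complement of the linear span of $X$, which is what separates $\beta^\top X$ from $g$. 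If that reading of the constraint is not acceptable, the fallback is to replace $\mu_\eta$ by the weighted $L^2$ projection of $X$ onto the tangent space generated by $g$, i.e.\ $\mathbb{E}[X\mid t,\text{$g$-directions}]$ under the risk-set weight, for which the same computation yields zero for every $h_g$. I would state explicitly that ``any direction $h$'' is understood within this admissible class. Combining the three vanishing pieces gives the claimed orthogonality.
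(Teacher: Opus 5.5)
Your proposal is correct and follows the paper's proof step for step. You split the derivative into the centering-variation term, which vanishes because $\dot\mu$ is a deterministic integrand against the true martingale increment, and the compensator term. Its baseline-hazard part vanishes since $S^{(1)}_{\eta_0}-\mu_{\eta_0}S^{(0)}_{\eta_0}=0$, and its $g$-part is removed by the identification constraint. Your explicit statement that ``any direction $h$'' must be read within the class of $h_g$ with zero risk-set-weighted covariance with $X$ is the same restriction the paper's proof relies on, only left implicit there.
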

(The proof is technical and may be skipped on a first reading.)
\begin{proof}
Perturb the nuisance parameter in direction $h=(\lambda, g)$:
\[
\lambda_\varepsilon(t) = \lambda_0(t) + \varepsilon \lambda(t), \quad g_\varepsilon(t,x) =g_0(t,x)+  \varepsilon g(t,x).
\]
Let $\Psi(\varepsilon) := \Psi(\beta_0, \eta_\varepsilon) = \mathbb{E}[\Psi(O; \beta_0, \eta_\varepsilon)]$ and differentiate at $\varepsilon=0$.
The nuisance $\eta$ enters through (i) the centering term $\mu_\eta$ and (ii) the compensator involving $\lambda,g$, so the derivative decomposes as
\[
\left.\frac{\partial}{\partial\varepsilon} \Psi(\varepsilon)\right|_{\varepsilon=0} = I_1 + I_2.
\]
The term $I_1$ is the contribution from the variation of $\mu_\eta$:
\[
I_1 = \mathbb{E}\left[ \int_0^\tau \left( - \left.\frac{\partial}{\partial\varepsilon}\mu_{\eta_\varepsilon}(t;\beta_0)\right|_{\varepsilon=0} \right) \{dN(t) - Y(t) e^{\beta_0^\top X + g_0(t,X)} d\Lambda_0(t)\} \right].
\]
The bracketed term is deterministic and the curly bracket is the true martingale increment $dM_0(t)$, so the expectation vanishes.

For $I_2$,
\[
I_2 = -\mathbb{E}\left[ \int_0^\tau (X - \mu_{\eta_0}(t;\beta_0)) \frac{\partial}{\partial\varepsilon} \left( Y(t) e^{\beta_0^\top X + g_\varepsilon(t,X)} d\Lambda_\varepsilon(t) \right)_{\varepsilon=0} \right].
\]
Decompose into the $\lambda$- and $g$-parts.
\begin{itemize}
\item[(a.)] The $\lambda$-part: since $d\Lambda_\varepsilon = \lambda_0 + \varepsilon \lambda$,
\begin{align}\notag
I_{2,\Lambda} &= -\mathbb{E}\left[ \int_0^\tau (X - \mu_{\eta_0}) Y(t) e^{\beta_0^\top X + g_0(t,X)} \lambda(t) dt \right] \\[3mm]
&= -\int_0^\tau \lambda(t) {\mathbb{E}\left[ Y(t) e^{\beta_0^\top X + g_0(t,X)} (X - \mu_{\eta_0}(t;\beta_0)) \right]} dt.\label{expect}
\end{align}
By the definition $\mu_{\eta_0}=S_{\eta_0}^{(1)}/S_{\eta_0}^{(0)}$, the expectation in \eqref{expect} equals
\[
S_{\eta_0}^{(1)}(t;\beta_0)-\mu_{\eta_0}(t;\beta_0)S_{\eta_0}^{(0)}(t;\beta_0)=0,
\]
so $I_{2,\Lambda}=0$.
\item[(b.)] The $g$-part:
\[
\begin{aligned}
I_{2,g} &= -\mathbb{E}\left[ \int_0^\tau (X - \mu_{\eta_0}) Y(t) e^{\beta_0^\top X+ g_0(t,X) } g(t,X) d\Lambda_0(t) \right] \\[3mm]
&= -\int_0^\tau d\Lambda_0(t)\,\mathbb{E}\left[ Y(t) e^{\beta_0^\top X+ g_0(t,X) } (X - \mu_{\eta_0}(t;\beta_0)) g(t,X) \right].
\end{aligned}
\]
This expectation is a weighted covariance between $X$ and $g$ within the risk set.
Under the identification constraint that $g$ lives in a residual space orthogonal to the linear span of $X$,
the integrand vanishes, yielding $I_{2,g}=0$.
\end{itemize}
Hence $I_1=0$ and $I_2=0$, proving \eqref{result} and Neyman orthogonality.
\end{proof}

\paragraph{Local robustness}
The intuition is that centering by $\mu_\eta(t;\beta)$ cancels first-order nuisance perturbations under the risk-set weighted measure.
This generalizes the classical fact that Cox partial likelihood eliminates $\lambda_0$:
the same principle extends to small departures from PH encoded in $g$.
Thus, using Cox as a base model and learning $g$ flexibly (e.g., by a flow) need not destroy $\sqrt{n}$ inference on $\beta$
when the departure is moderate.

\begin{corollary}[Local efficiency (reduction to Cox score)]
\label{cor:efficiency}
If the true model is proportional hazards, $g_0\equiv 0$, then with $\eta$ fixed at the truth,
$\Psi(O;\beta,\eta)$ coincides with the efficient score of classical Cox partial likelihood:
\[
U_{\mathrm{Cox}}(\beta)
=
\int_0^\tau
\left\{
X-\bar X(\beta,t)
\right\}\,dN(t),
\;
\bar X(\beta,t)
=
\frac{\mathbb{E}\!\left[Y(t)\exp\{\beta^\top X\}X\right]}
{\mathbb{E}\!\left[Y(t)\exp\{\beta^\top X\}\right]}.
\]
Therefore $\widehat\beta$ attains the semiparametric efficiency bound under Cox PH.
\end{corollary}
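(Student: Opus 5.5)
The plan is a direct algebraic reduction followed by an appeal to classical Cox theory.

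First I would set $g_0\equiv 0$ and fix $\eta=\eta_0=(\Lambda_0,0)$ in the generalized score $\Psi(O;\beta,\eta)$. Then $S^{(k)}_{\eta_0}(t;\beta)=\mathbb{E}[Y(t)e^{\beta^\top X}X^{\otimes k}]$, so $\mu_{\eta_0}(t;\beta)=\bar X(\beta,t)$. This makes $\Psi$ coincide with $\Psi_{\mathrm{PH}}$. I would then split
\[
\Psi_{\mathrm{PH}}(O;\beta,\eta_0)=\int_0^\tau\{X-\bar X(\beta,t)\}\,dN(t)-\int_0^\tau\{X-\bar X(\beta,t)\}Y(t)e^{\beta^\top X}\,d\Lambda_0(t).
\]
The first piece is $U_{\mathrm{Cox}}(\beta)$. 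For the second, the same computation as in part (a) of the proof of Proposition~\ref{prop:orthogonality} gives
\[
\mathbb{E}\bigl[Y(t)e^{\beta^\top X}(X-\bar X(\beta,t))\bigr]=S^{(1)}-\bar X S^{(0)}=0
\]
for every $t$. So the compensator term has mean zero for every $\beta$, not only at $\beta_0$, and it does not depend on which $\beta$ solves the equation at the population level.

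The identification should therefore be read as follows. The population moment $\mathbb{E}\Psi(O;\beta,\eta_0)$ equals $\mathbb{E}U_{\mathrm{Cox}}(\beta)$ identically in $\beta$, and the two estimating equations define the same $\beta_0$.

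The step I expect to be the main obstacle is that pointwise the two functions differ by the compensator integral. That integral is nonzero for an individual observation, so one must decide what "coincide" means.

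I would argue at the level of the empirical (risk-set) version. In the sample analogue, the centering uses empirical risk-set means $\bar X_n(\beta,t)=\sum_j Y_j e^{\beta^\top X_j}X_j/\sum_j Y_j e^{\beta^\top X_j}$. Summing the compensator term over $i$ gives
\[
\int\sum_i Y_i e^{\beta^\top X_i}(X_i-\bar X_n)\,d\Lambda_0=0
\]
exactly. The summed estimating equation is therefore exactly the partial-likelihood score $\partial_\beta\log L_p(\beta)$. Hence $\widehat\beta$ equals the Cox maximum partial likelihood estimator, asymptotically or exactly depending on whether population or empirical centering is used.

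Finally, I would invoke the classical result (Andersen--Gill; Begun--Hall--Huang--Wellner; Tsiatis~2006) that, in the Cox model with $\Lambda_0$ unrestricted, the efficient score for $\beta$ is the projection of the $\beta$-score onto the orthocomplement of the nuisance tangent space generated by $\Lambda_0$-perturbations. That projection is precisely $\int\{X-\bar X(\beta,t)\}\,dM(t)$. I would also confirm consistency with Theorem~\ref{thm:asymptotic_normality}. There $J=V$ equals the Cox information $\mathbb{E}\int\{X-\bar X\}^{\otimes2}Y e^{\beta_0^\top X}d\Lambda_0$, so the sandwich collapses to $J^{-1}$, the efficiency bound.
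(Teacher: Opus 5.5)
Your proposal is correct and follows the route the paper implicitly takes (the corollary is stated without proof). That route is: set $g_0\equiv 0$ so that $\mu_{\eta_0}(t;\beta)=\bar X(\beta,t)$ and $\Psi$ reduces to $\Psi_{\mathrm{PH}}$, then appeal to classical Cox efficiency theory. What you add, and the paper glosses over, is the treatment of the compensator term: it has mean zero for every $\beta$, cancels exactly when summed with empirical risk-set centering, and the efficient score is $\int_0^\tau\{X-\bar X(\beta_0,t)\}\,dM(t)=\Psi(O;\beta_0,\eta_0)$, so that $J=V$ in Theorem~\ref{thm:asymptotic_normality}. This is the right way to make ``coincides'' precise, since $\Psi$ and the displayed $U_{\mathrm{Cox}}$ differ pointwise.
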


\paragraph{Related work}
Orthogonal-score constructions in survival analysis are developed in several strands of the modern literature
(e.g., time-varying effects, nuisance learning, causal survival forests). We refer the reader to the references in the Japanese draft for a detailed map.

\paragraph{Summary}
Orthogonalized DDML-style estimating equations recover Cox efficiency when PH is true,
remain locally robust when PH approximately holds,
and allow flexible learning of non-PH components $g_0$ while preserving interpretability of $\beta_0$.

\subsection*{Real-data comparison}
\label{sec:realdata_ph_vs_flow}

\paragraph{Goal}
We compare settings where the proportional hazards (PH) assumption is approximately valid and where it is violated.
We evaluate (i) interpretability of the Cox estimate $\widehat\beta$ and
(ii) the impact of a minimal time-varying correction on calibration and prediction error.
We call the extended model ``Cox+TV'': a Cox base model with a minimal time-varying effect to capture PH violations,
designed to shrink back to Cox when PH holds.

\paragraph{Using the \texttt{survival} package in R.}
Our baseline implementation uses the standard R package \texttt{survival} \citep{TherneauGrambsch2000},
which provides Cox fitting (\texttt{coxph}), PH diagnostics (\texttt{cox.zph}), and related tools.

We use three benchmark right-censored data sets included in the package:
\begin{description}
\item[\texttt{lung}:] NCCTG advanced lung cancer data ($n=228$), with 165 events and right censoring.
Covariates include age, sex, and performance-status measures (e.g., \texttt{ph.ecog}, \texttt{pat.karno}).
\item[\texttt{pbc}:] Mayo Clinic primary biliary cirrhosis trial data, with clinical biomarkers and censoring.
\item[\texttt{veteran}:] Veteran's Administration lung cancer trial data, often exhibiting stronger PH violations for certain covariates.
\end{description}

\paragraph{Base model and PH diagnostics}
We fit a Cox model and test PH using \texttt{cox.zph}.
We then choose a single covariate with the strongest evidence of PH violation and add a time-varying effect for that covariate.
This yields the ``minimal'' correction that is easy to interpret.

\paragraph{Extended model (Cox+TV)}
We use a one-degree-of-freedom time-varying effect of the form
$\beta(t)=\beta+\gamma\log t$ for a selected covariate (notation suppressed for brevity).
This can be viewed as a low-dimensional proxy for $g(t,X)$.

\paragraph{Prediction metrics}
For a horizon $t_0$, let $\widehat R(t_0\,|\,x)$ denote predicted risk.
Calibration curves are constructed by binning by predicted risk and estimating observed risk in each bin via Kaplan--Meier.

We compute the inverse probability of censoring weighted (IPCW) Brier score.
Let $Y_i(t_0)=\mathbb{I}\{T_i\le t_0,\delta_i=1\}$ and let $\widehat G$ denote the estimated censoring survival function.
With weights
\[
w_i(t_0)
=
\frac{\mathbb{I}(\tilde T_i\ge t_0)}{\widehat G(t_0\,|\,x_i)}
+
\frac{\mathbb{I}(\tilde T_i<t_0,\ \delta_i=1)}{\widehat G(\tilde T_i\,|\,x_i)},
\]
the Brier score is
\[
\mathrm{Brier}(t_0)
=
\frac{1}{n}\sum_{i=1}^n
w_i(t_0)\Bigl(Y_i(t_0)-\widehat R(t_0\,|\,x_i)\Bigr)^2.
\]

\paragraph{Summary of results}
Table~\ref{tab:ph_flow_summary} summarizes PH diagnostics and prediction comparisons.
For \texttt{lung}, the global PH $p$-value is large, the average risk difference between Cox and Cox+TV is tiny,
and the IPCW Brier score is essentially unchanged, indicating shrinkage back to Cox.
For \texttt{veteran}, where PH is strongly violated, the time-varying correction is non-negligible,
improving both calibration and Brier score.
For \texttt{pbc}, improvement at the chosen horizon is limited; stronger time bases (e.g., splines) may be needed.

\begin{table}[t]
\centering
\caption{PH diagnostics and Cox vs Cox+TV summary (evaluation data).}
\label{tab:ph_flow_summary}
\footnotesize
\setlength{\tabcolsep}{3.2pt}
\renewcommand{\arraystretch}{1.08}
\begin{tabular}{lrrrrrr}
\hline
dataset & $t_0$ & PH $p$ & tv & mean$|\Delta R|$ & Br.(Cox) & Br.(Flow) \\
\hline
lung    & 210.0  & 0.545    & ph.ecog & 0.00348 & 0.21587 & 0.21578 \\
pbc     & 1180.5 & 0.118    & bili    & 0.00341 & 0.06763 & 0.06828 \\
veteran & 93.5   & 2.17e-4  & karno   & 0.01807 & 0.16758 & 0.16476 \\
\hline
\end{tabular}
\end{table}

\begin{figure}[t]
\centering
\includegraphics[width=0.95\textwidth]{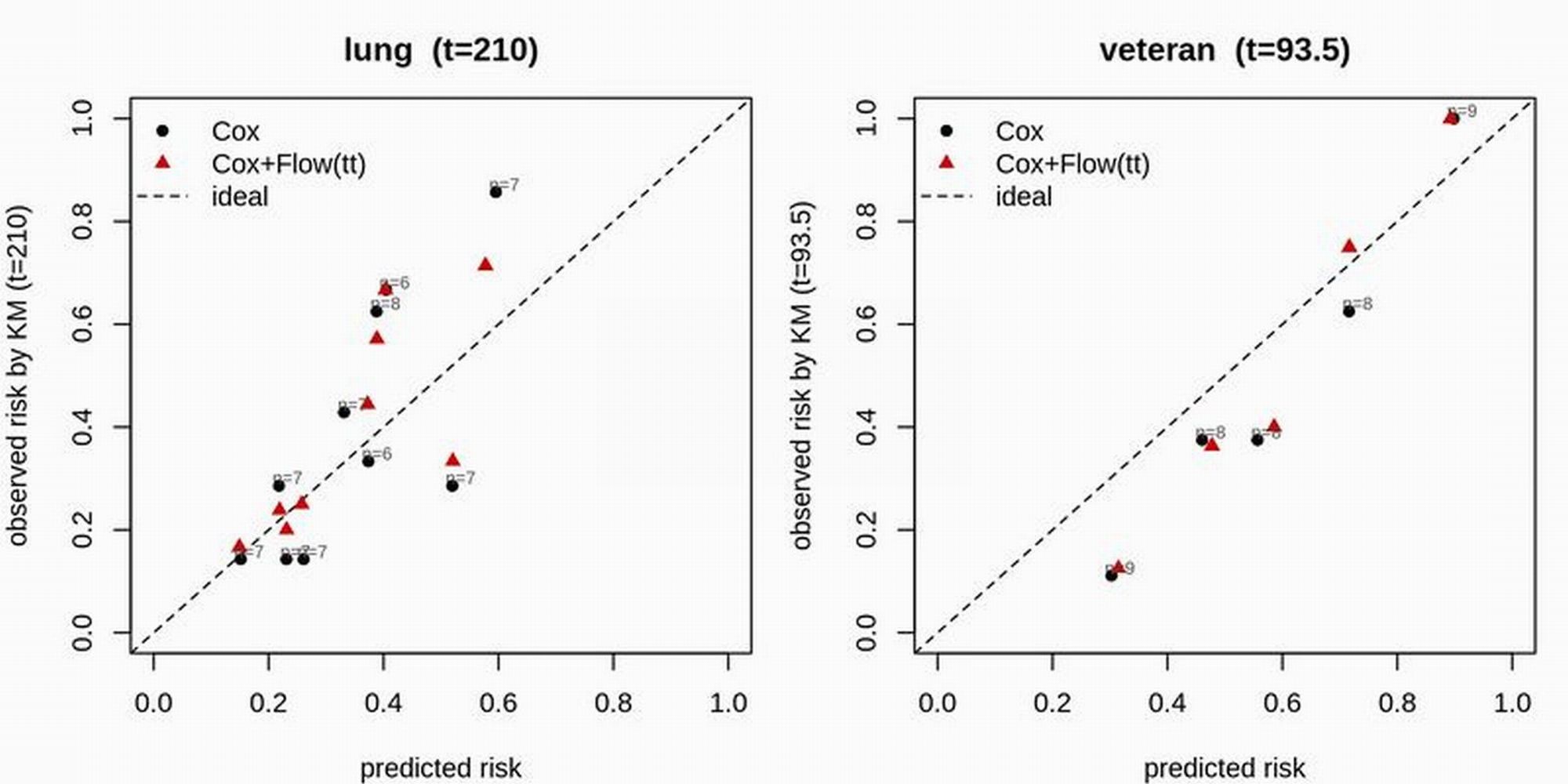}
\caption{
Calibration curves for a PH-good case (\texttt{lung}) and a PH-bad case (\texttt{veteran}).
The horizontal axis is predicted risk at horizon $t_0$; the vertical axis is observed risk estimated by Kaplan--Meier within bins defined by Cox-predicted risk.
Each label indicates the bin sample size $n$.
When PH holds, Cox and Cox+TV nearly coincide; when PH is violated, Cox+TV moves toward the ideal line.
}
\label{fig:calib_good_bad}
\end{figure}

\paragraph{Takeaway for statistical practice}
The main benefit of bringing flows into survival analysis is not ``a new estimator of $\beta$'',
but a reliable conditional sampler for truncated distributions and a modular way to correct misspecification
without sacrificing interpretability.
In particular, flows facilitate:
\begin{itemize}
\item natural completion for censored cases by sampling $T\,|\,(T>\tilde T, X)$,
\item flexible time-dependent structure as a nuisance component,
\item downstream uncertainty quantification via simulation-based procedures.
\end{itemize}


\section{Multiple imputation via conditional flows}
\index{missing data}
\index{MAR}
\index{MNAR}
\index{multiple imputation}
\index{MICE}
\label{subsec:flow-missing-mi}

Missing data are a central statistical problem.
Assume an underlying complete-data vector $X\in\mathbb{R}^d$ and a missingness indicator
$M\in\{0,1\}^d$ (where $M_j=1$ means missing), yielding observed and missing parts
$X_{\mathrm{obs}}$ and $X_{\mathrm{mis}}$.
The inferential goal is to estimate targets (regression coefficients, means, causal effects, etc.) validly under missingness.

What we truly need is not a single ``best guess'' fill-in for $x_{\mathrm{mis}}$,
but the conditional distribution
\[
p(x_{\mathrm{mis}}\,|\,x_{\mathrm{obs}})
\]
(or samples from it), because downstream inference and uncertainty depend on this distribution.
Thus the core of missing-data analysis is reconstructing the \emph{distributional shape} of missing values,
not merely predicting them.

Rubin's framework (MCAR/MAR/MNAR) and MI provide a unified solution.\citep{Rubin1976, Rubin1987, LittleRubin2019}
The widely used MICE algorithm approximates $p(x_{\mathrm{mis}}\,|\,x_{\mathrm{obs}})$ via chained conditional regressions.
However, in high-dimensional, nonlinear, or multi-modal settings,
chained regressions can distort distributional shape (e.g., collapsing a multi-modal conditional distribution to unimodal).
In our perspective, conditional flow matching directly learns a sampler
\[
X_{\mathrm{mis}}\sim p(\cdot\,|\,x_{\mathrm{obs}})
\]
that preserves multimodality, nonlinear dependence, and heavy tails.

\paragraph{Flows for missing-data analysis}
From the flow viewpoint, missingness can be handled in two natural ways:
\begin{enumerate}
\item Model the complete-data distribution $p(x)$, then generate from $p(x_{\mathrm{mis}}\,|\,x_{\mathrm{obs}})$.
\item Condition on the missingness pattern $m$ and directly learn $p(x_{\mathrm{mis}}\,|\,x_{\mathrm{obs}},m)$ (conditional generation).
\end{enumerate}
In practice, the second approach is often more stable and aligns with the ``learn by regression'' philosophy.

\paragraph{Under MAR: conditional distributions are the essence}
Assume MAR:
\[
P(M\,|\,X)=P(M\,|\,X_{\mathrm{obs}}).
\]
Then valid inference fundamentally relies on $p(x_{\mathrm{mis}}\,|\,x_{\mathrm{obs}})$.
MI draws multiple samples from this conditional distribution, creates $K$ completed data sets,
and combines inferential outputs using Rubin's rules (as reviewed in Section~5.2).

\paragraph{Flow-based conditional imputation}
A flow-based imputer can be represented as a conditional generator
\[
X_{\mathrm{mis}} = G_\theta(\epsilon; X_{\mathrm{obs}}, M),\qquad \epsilon\sim \pi,
\]
trained so that the induced conditional distribution matches the data.
In flow-matching terms, we design a probability path from a base noise distribution to the conditional target
and learn the conditional velocity field by regression.
After training, we generate each imputed version by solving the conditional ODE and sampling at $t=1$.

\paragraph{Caution}
As in any flexible conditional generation, regularization and diagnostics are important:
we must check whether generated imputations preserve key low-dimensional summaries and whether they respect data constraints.

\paragraph{MI pipeline: connecting to Rubin's rules}
The flow component replaces only the \emph{imputation engine};
the combination step remains standard:
\begin{enumerate}
\item For $k=1,\dots,K$, generate imputed data $X_{\mathrm{mis}}^{(k)}$ from the conditional flow sampler to obtain a completed data set.
\item Compute $\widehat\theta^{(k)}$ and $\widehat V^{(k)}$ from each completed data set using the intended complete-data analysis.
\item Combine $(\widehat\theta^{(k)},\widehat V^{(k)})$ using Rubin's rules.
\end{enumerate}
This modularity is important: flows do not replace statistical procedures; they provide the conditional sampling component those procedures require.

\paragraph{MNAR and DRO: sensitivity analysis via ambiguity sets}
Under MNAR, identification is fundamentally impossible without additional assumptions.
A modern approach is to formulate uncertainty as an \emph{ambiguity set} and consider worst-case analysis
via distributionally robust optimization (DRO).\citep{Duchi2021,Blanchet2019, Muzellec2020}
One convenient formulation uses a KL-ball around a nominal conditional model $p_\theta(\cdot\,|\,x_{\mathrm{obs}},m)$:
\[
\sup_{q:\ \mathrm{KL}(q\|p_\theta)\le \rho}\ \mathbb{E}_q\!\left[\ell(X_{\mathrm{mis}},x_{\mathrm{obs}},m)\right].
\]
The optimizer takes an exponential-tilting form
\[
q_{\eta}(x_{\mathrm{mis}}\,|\,x_{\mathrm{obs}},m)\ \propto\ 
p_\theta(x_{\mathrm{mis}}\,|\,x_{\mathrm{obs}},m)\exp\{\eta\,\ell(x_{\mathrm{mis}},x_{\mathrm{obs}},m)\},
\]
with normalizer
\[
A(\eta)=\log \mathbb{E}_{p_\theta}\!\left[\exp\{\eta\,\ell(X_{\mathrm{mis}},x_{\mathrm{obs}},m)\}\right].
\]
Then $\mathrm{KL}(q_\eta\|p_\theta)=\eta A'(\eta)-A(\eta)$, and the active-constraint solution $\eta^\ast\ge0$ satisfies
\[
\eta^*A'(\eta^*)-A(\eta^*)=\rho.
\]
In practice, Monte Carlo from the flow-based $p_\theta(\cdot\,|\,x_{\mathrm{obs}},m)$ can approximate $A(\eta)$ and $A'(\eta)$,
and one can solve for $\eta^\ast$ by bisection or Newton's method.

\subsection*{Numerical experiment}
\label{subsec:mi-mice-vs-fm-demo}

\paragraph{Goal}
We visualize that (i) when the conditional distribution is multi-modal, MICE tends to collapse distributional shape,
whereas (ii) conditional flow matching can preserve it in MI, and we examine the impact on regression inference.

\paragraph{Experimental setup}
Let $X=(X_1,X_2,X_3)^\top\in\mathbb{R}^3$ and generate $n=3000$.
We draw
\[
X_1 \sim \mathcal{N}(0,1),\qquad
X_2 \sim \mathcal{N}(0,1),
\]
and let $X_3$ be generated from a nonlinear bimodal conditional mechanism (details follow the Japanese source),
so that $p(x_3\,|\,x_1,x_2)$ has two modes in a substantial region.

\begin{enumerate}
\item{Missingness mechanism (MAR).}
We impose missingness only on $X_3$:
$M_3\in\{0,1\}$ is generated under a MAR mechanism depending on $(X_1,X_2)$.

\item{Two MI engines to compare.}
We compare:
\begin{itemize}
\item {MICE:} chained equations with standard parametric conditional regressions.
\item {FM:} conditional flow matching used as a sampler for $p(x_3\,|\,x_1,x_2,M_3)$.
\end{itemize}

\item{ Covariate fairness.}
To make the comparison meaningful, we include the same set of predictors/auxiliary variables in both approaches.

\item{Evaluation metrics.}
We evaluate (i) distributional reconstruction for missing rows (Wasserstein $W_1$, histogram shape),
(ii) prediction summaries such as Brier score for $\mathbb{I}\{X_3>0\}$, and
(iii) regression inference using Rubin's MI combination.
\end{enumerate}
\paragraph{Results: MICE collapses shape, FM preserves it}
Table~\ref{tab:mi-demo-result} summarizes distributional reconstruction and regression inference.
Figure~\ref{fig:mi-demo-hist} visualizes the pooled imputed distribution of $X_3$ among missing rows:
MICE yields a unimodal shrinkage, while FM preserves bimodality.

\begin{table}[t]
\centering
\caption{Reconstruction of $X_3$ for missing rows and regression inference (Rubin combination).
Since $X$ is standardized before analysis, the ``true'' coefficients refer to standardized values.}
\label{tab:mi-demo-result}
\begin{tabular}{lcc}
\hline
Metric & MICE & FM \\
\hline
Missingness rate $P(M_3=1)$ & \multicolumn{2}{c}{$0.359$} \\
RMSE (mean imputation) & $0.5387$ & $0.4955$ \\
$W_1$ (distribution distance) & $0.2628$ & $0.0553$ \\
Brier ($\mathbb{I}\{X_3>0\}$) & $0.0674$ & $0.0624$ \\
\hline
$\widehat{\beta}_0$ (SE) & $-0.0750\ (0.0200)$ & $-0.0685\ (0.0192)$ \\
$\widehat{\beta}_1$ (SE) & $0.4600\ (0.0242)$  & $0.4452\ (0.0237)$ \\
$\widehat{\beta}_2$ (SE) & $0.7640\ (0.0214)$  & $0.7820\ (0.0224)$ \\
$\widehat{\beta}_3$ (SE) & $1.4617\ (0.0237)$  & $1.4533\ (0.0231)$ \\
True (standardized) & $0,\ 0.4971,\ 0.7892,\ 1.4505$ & --- \\
\hline
\end{tabular}
\end{table}

\begin{figure}[t]
\centering
\includegraphics[width=0.55\textwidth]{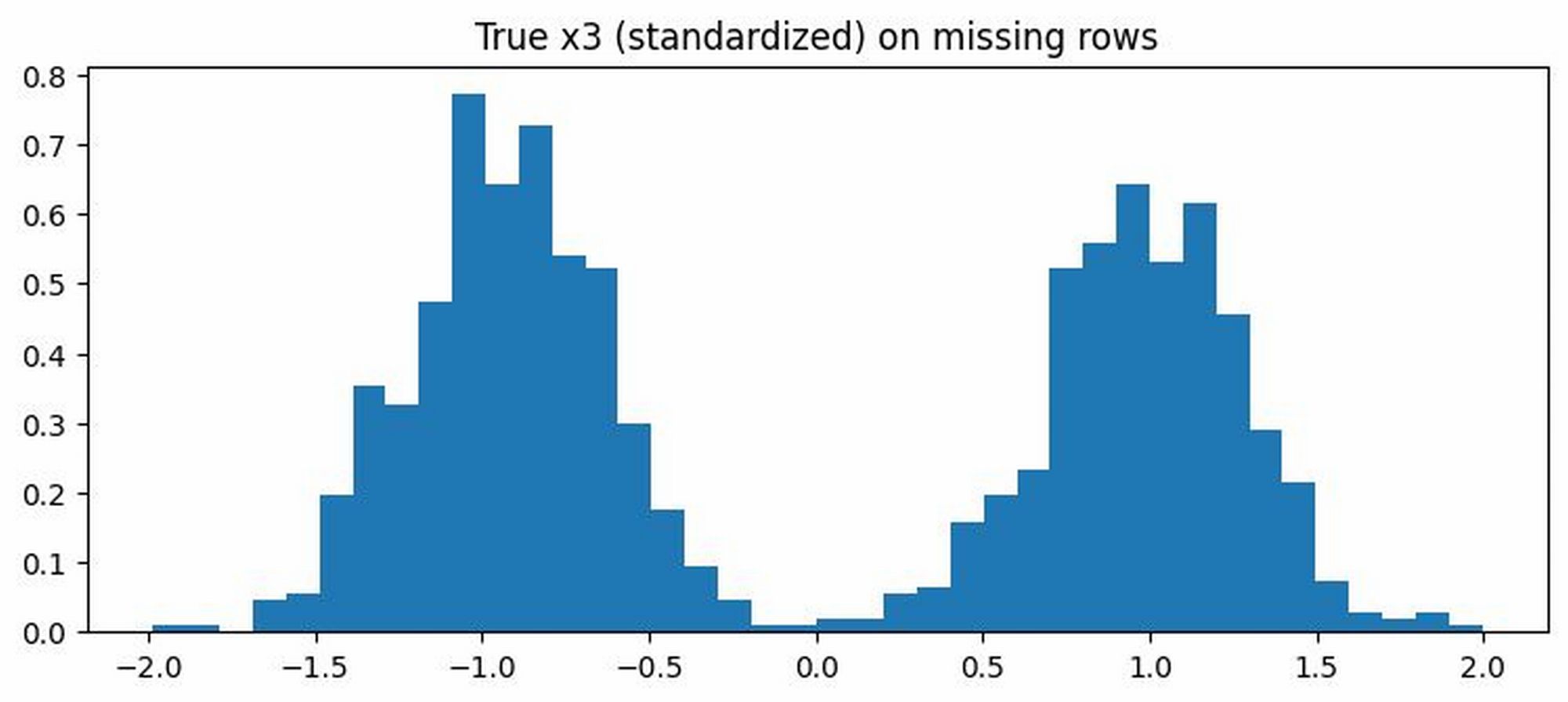}\\[3.5mm]
\includegraphics[width=0.55\textwidth]{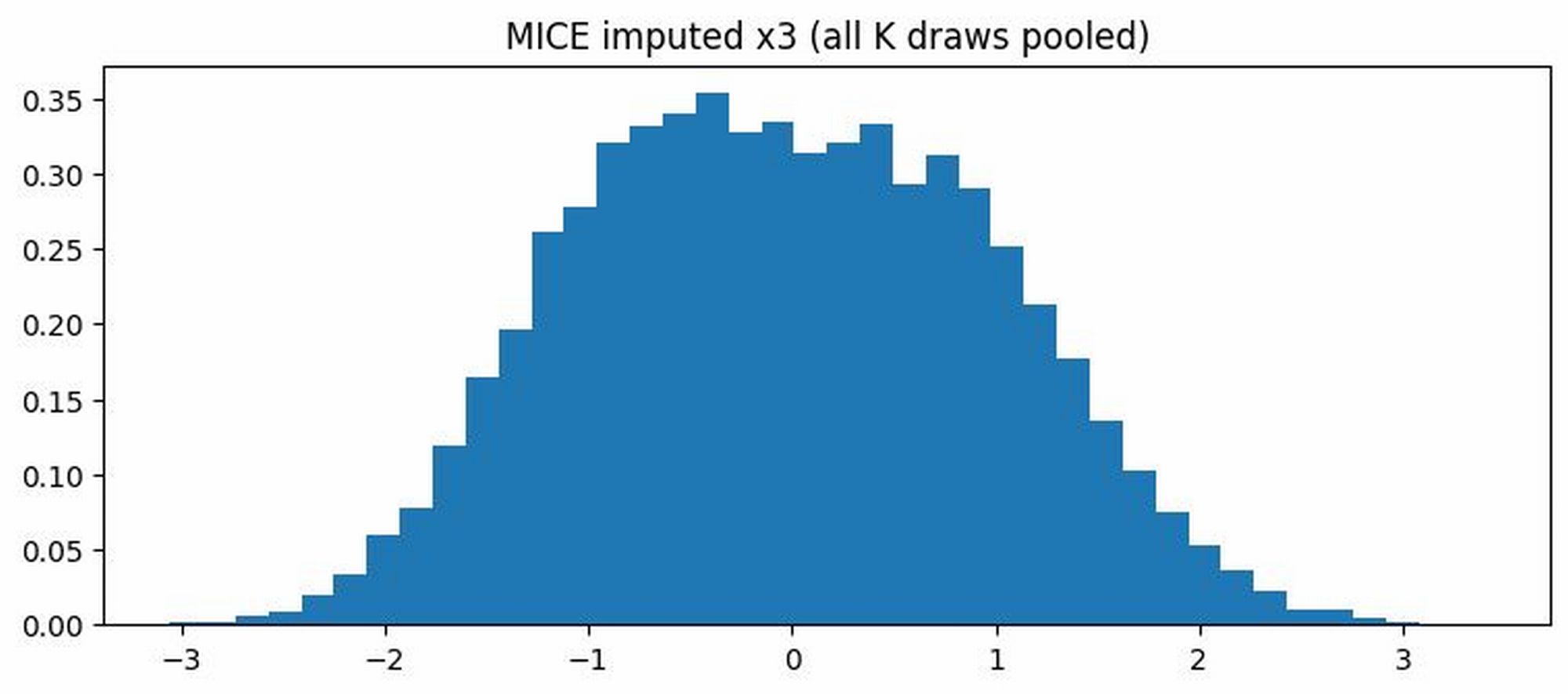}\\[3.5mm]
\includegraphics[width=0.55\textwidth]{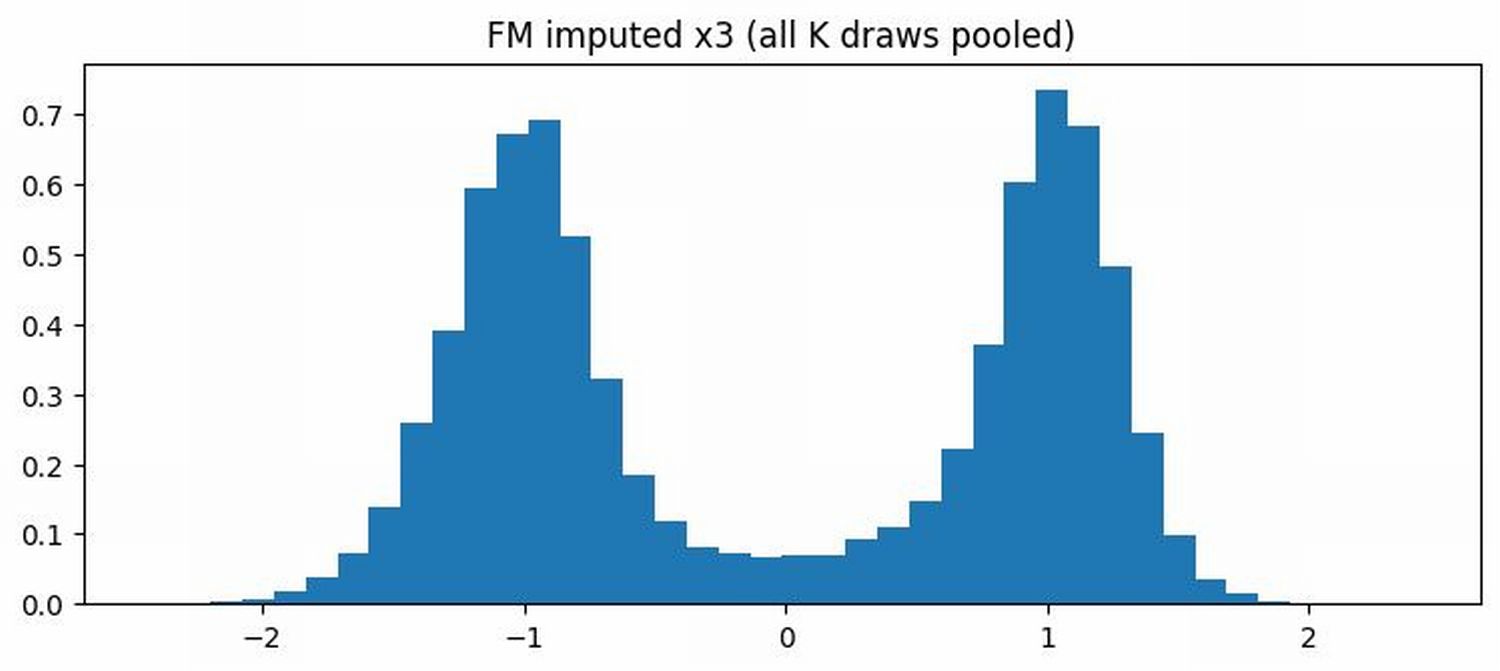}\\[3mm]
\caption{Distribution of $X_3$ among missing rows ($M_3=1$).
Top: true $X_3$ (bimodal). Middle: pooled MI distribution from MICE (collapses toward unimodal).
Bottom: pooled MI distribution from FM-based imputations (bimodality preserved).}
\label{fig:mi-demo-hist}
\end{figure}

\paragraph{Impact on inference}
Even when point estimates of regression coefficients are similar,
distorting the conditional distribution can affect uncertainty quantification and downstream decision-making.
Preserving distributional shape is particularly important when conditional distributions are multi-modal.

\paragraph{Conclusion}
This experiment illustrates that FM can function as a sampler for
\[
p(x_{\mathrm{mis}}\,|\,x_{\mathrm{obs}},M),
\]
and can outperform chained-equation regressions in regimes where the conditional distribution is complex (e.g., multimodal).
Because Rubin's combination remains unchanged, FM naturally integrates into standard MI pipelines
as a drop-in conditional-generation component.


\chapter{Causal Inference: Counterfactual Distributions and Double Robustness}
\index{causal inference}
\index{counterfactual}
\index{double machine learning}
\index{CATE}
\index{Neyman orthogonality}

\section{Potential Outcomes and Structural Causal Models}
\index{potential outcomes}
\index{structural causal model}
\index{SUTVA}
\label{subsec:po_scm}

This chapter follows a single three-step storyline:
(i) \emph{identification} (express interventional distributions in terms of the observational distribution),
(ii) \emph{distribution transformation} (implement the identified counterfactual distribution in a sampleable form),
and (iii) \emph{inference} (orthogonalize nuisance estimation error to enable $\sqrt{n}$-valid inference).
We first review identification using potential outcomes and SCMs,
then position \emph{causal transport} as a ``generator of counterfactual distributions,''
and finally connect to stable inference via DDML.

The goal is to move one step beyond associations in observational data and to formalize what would happen
under an intervention.
We introduce Rubin's potential-outcome framework and Pearl's structural causal model (SCM),
and make their correspondence explicit.
Subsequent sections connect this identification view to transport/generative models (including flow matching)
and to orthogonalized semiparametric inference (DDML).

\paragraph{Observational distribution and intervention}
Let $A\in\{0,1\}$ be a treatment (intervention), $Y\in\mathbb{R}$ an outcome, and $X\in\mathcal{X}$ covariates.
We observe
\[
(X_i,A_i,Y_i),\qquad i=1,\dots,n,
\]
generated from an observational distribution $p(x,a,y)$.
Causal inference is not only about describing observational conditionals such as $p(y\,|\,a,x)$ or marginals such as
$p(y\,|\,a)$, but about estimating the distribution induced by forcing $A$ to a fixed value $a$,
\[
p(y\,|\,\mathrm{do}(A=a)).
\]
Here $\mathrm{do}(A=a)$ is the intervention operator: it severs the assignment mechanism and sets $A$ externally to $a$.
One may also consider covariate-conditional interventional distributions $p(y\,|\,\mathrm{do}(A=a),x)$.

\paragraph{Potential outcomes}
In Rubin's framework, the outcome under intervention $A=a$ is defined as a potential outcome $Y(a)$.
With binary treatment, each unit has two conceptual values $Y(0)$ and $Y(1)$, but only one is observed.
We assume
\begin{equation}
Y = Y(A)
\label{eq:consistency}
\end{equation}
(\emph{consistency}).
This typically comes together with SUTVA (Stable Unit Treatment Value Assumption),
which rules out interference across units.
When interference exists (networks, infectious diseases, spatial spillovers), extensions are needed.

The average treatment effect (ATE) is
\index{ATE}
\begin{equation}
\tau = \mathbb{E}\bigl[Y(1)-Y(0)\bigr],
\label{eq:ate_def}
\end{equation}
and the conditional average treatment effect (CATE) is
\begin{equation}
\tau(x) = \mathbb{E}\bigl[Y(1)-Y(0)\,|\,X=x\bigr].
\label{eq:cate_def}
\end{equation}
Identification from observational data is nontrivial because $Y(0)$ and $Y(1)$ are never jointly observed.
Assumptions on the relationship between assignment and potential outcomes are required.

\paragraph{Identification assumptions: exchangeability and overlap}
A standard pair of assumptions is as follows.

\noindent{(A1). Conditional independence (unconfoundedness)}
\begin{equation}
\bigl(Y(0),Y(1)\bigr)\ \indep\ A \,|\,X.
\label{eq:unconf}
\end{equation}
This asserts that, conditional on $X$, treatment assignment is independent of potential outcomes;
informally, all confounding is captured in $X$.

\noindent{(A.2). Overlap (positivity)}
\begin{equation}
0 < P(A=1\,|\,X=x) < 1\qquad (\text{for almost every }x).
\label{eq:positivity}
\end{equation}
This requires that both treatment arms occur with positive probability in every covariate stratum.

These assumptions are generally not testable from the observational distribution alone;
their plausibility must be assessed using study design, subject-matter knowledge, and sensitivity analyses.

\paragraph{Consequence}
Under \eqref{eq:consistency}--\eqref{eq:positivity}, the mean of the potential outcome is identified by
\begin{align}
\mathbb{E}\bigl[Y(a)\bigr]
&=
\mathbb{E}\bigl[\mathbb{E}[Y(a)\,|\,X]\bigr]
=
\mathbb{E}\bigl[\mathbb{E}[Y(a)\,|\,A=a,X]\bigr] \notag\\[3mm]
&=
\mathbb{E}\bigl[\mathbb{E}[Y\,|\,A=a,X]\bigr],
\label{eq:gcomp_mean}
\end{align}
so that
\begin{equation}
\tau
=
\mathbb{E}\bigl[\mu_1(X)-\mu_0(X)\bigr],
\qquad
\mu_a(x)=\mathbb{E}[Y\,|\,A=a,X=x].
\nonumber
\end{equation}
This is the regression (outcome-regression) form, in which $\mu_a$ appears as a nuisance function.
\index{nuisance}

Let the propensity score be
\[
e(x)=P(A=1\,|\,X=x).
\]
Then the IPW (inverse-probability weighting) identity is
\begin{equation}
\mathbb{E}[Y(1)]
=
\mathbb{E}\biggl[\frac{A Y}{e(X)}\biggr],
\qquad
\mathbb{E}[Y(0)]
=
\mathbb{E}\biggl[\frac{(1-A) Y}{1-e(X)}\biggr].
\nonumber
\end{equation}
This representation depends on $e$ but not on $\mu_a$.
Double robustness arises by combining both $\mu_a$ and $e$, as we discuss later.

\paragraph{Structural causal models}
Pearl's framework describes the data-generating mechanism via structural equations.
For simplicity, introduce exogenous variables $U_X,U_A,U_Y$ and write
\begin{equation}
X = f_X(U_X),\qquad
A = f_A(X,U_A),\qquad
Y = f_Y(A,X,U_Y).
\label{eq:scm_basic}
\end{equation}
Typically the exogenous variables are assumed independent (or to have a known dependence structure),
and causal directions are encoded by the functional inputs.
Equation \eqref{eq:scm_basic} induces a directed acyclic graph (DAG),
with edges such as $X\to A\to Y$ and $X\to Y$.

\paragraph{Definition of intervention}
An intervention $\mathrm{do}(A=a)$ is defined as replacing the structural equation for $A$ by the constant assignment,
\[
A=f_A(X,U_A)
\quad \text{replaced by}\quad
A\equiv a,
\]
and defining $p(y\,|\,\mathrm{do}(A=a))$ as the distribution of $Y$ in the modified model.
This makes precise the idea of ``cutting'' the assignment mechanism.

\paragraph{Definition of counterfactual}
The counterfactual random variable $Y_a$ in an SCM (the value of $Y$ under $A\equiv a$)
is the $Y$ generated by the intervened model.
In this sense,
\begin{equation}
Y(a)\ \equiv\ Y_a,
\label{eq:po_scm_link}
\end{equation}
so potential outcomes can be identified with SCM counterfactual variables.
Rubin's and Pearl's languages are therefore complementary rather than contradictory:
they differ mainly in what they take as primitive objects.

\paragraph{Backdoor adjustment and the $g$-formula}
DAG-based identification yields formulas expressing interventional distributions in terms of observational ones.
In the simplest case, when a set of covariates $X$ blocks all backdoor paths from $A$ to $Y$,
\begin{equation}
p(y\,|\,\mathrm{do}(A=a))
=
\int p(y\,|\,A=a,X=x)\,p(x)\,dx
\label{eq:backdoor}
\end{equation}
holds.
This is the distributional $g$-formula; \eqref{eq:gcomp_mean} is its expectation form.
This viewpoint is crucial below:
causal inference is about estimating the distribution transformation induced by $\mathrm{do}$,
and transport/generative models can be used to implement that transformation.

\paragraph{Assumption violations and modeling implications}
In real applications, \eqref{eq:unconf} may fail.
With unmeasured confounding, $p(y\,|\,\mathrm{do}(A=a))$ is not uniquely determined from observational data alone;
additional information such as sensitivity models, structural assumptions on exogenous variables,
or instrumental variables is needed.
If \eqref{eq:positivity} fails, weighting estimators become unstable and extrapolation dominates.
If SUTVA fails, potential outcomes must be indexed by joint assignments (e.g., $Y_i(a_1,\dots,a_n)$),
and the meaning of intervention changes.
We focus on the basic setting, and later discuss extensions through the lens of generative modeling and orthogonal inference.

\paragraph{Nuisance functions and counterfactual distributions}
The identification formula \eqref{eq:backdoor} becomes a statistical estimation problem once we choose how to estimate
its components.
Typically, we learn nuisance functions such as the outcome regressions $\mu_a(x)$ and the propensity score $e(x)$,
and use them to estimate $\tau$ or $\tau(x)$.
In high-dimensional or complex settings, naive plug-in estimators can carry regularization bias
that invalidates asymptotic inference.
DDML addresses this via orthogonalization and cross-fitting.
\index{cross-fitting}
Moreover, when the goal is to estimate (and sample from) $p(y\,|\,\mathrm{do}(A=a),x)$ itself,
transport and generative models (including flow matching) appear naturally as \emph{counterfactual generators}.
\index{generative model}

\begin{figure}[t]
\centering
\includegraphics[width=0.65\textwidth]{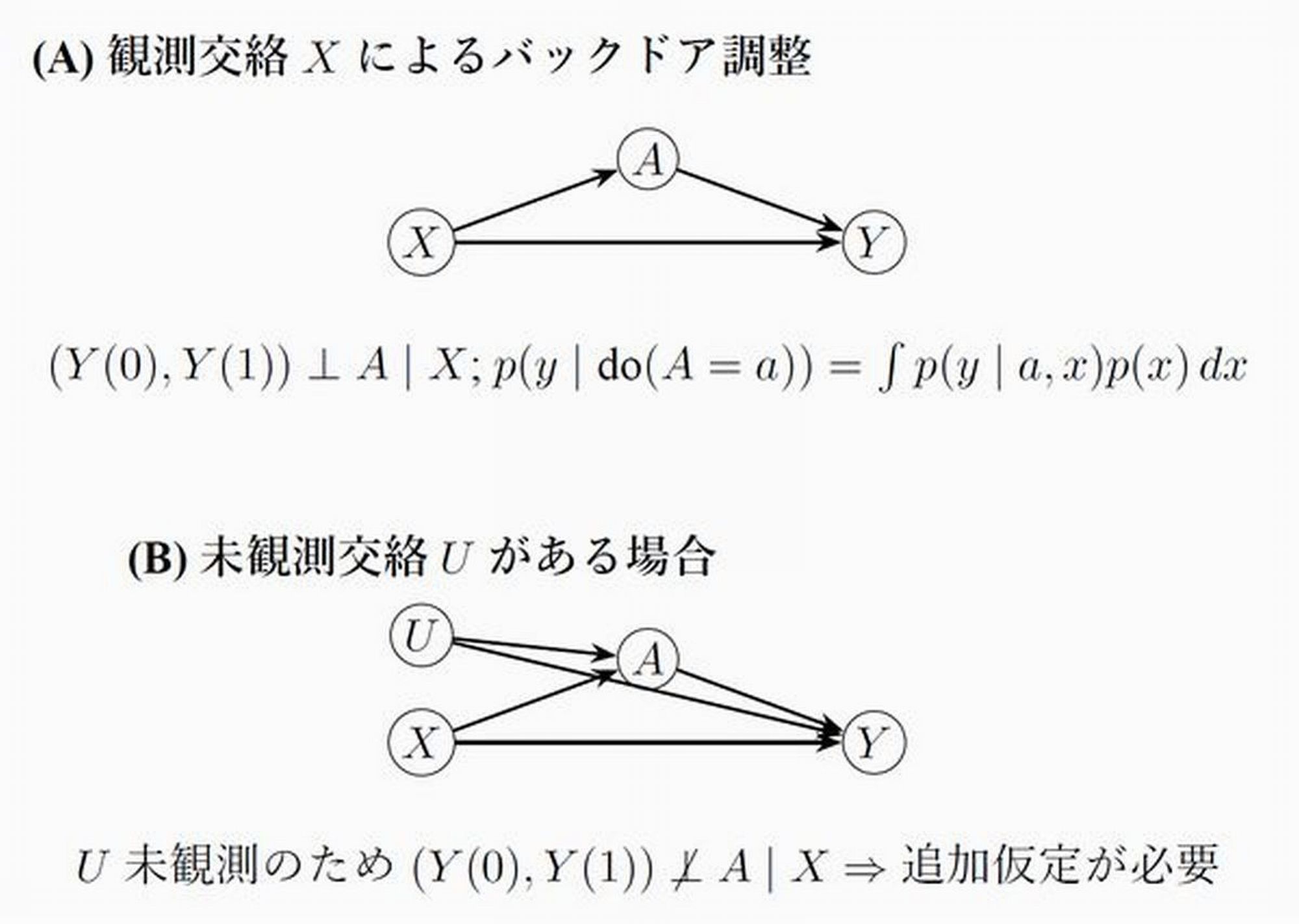}
\caption{A basic causal DAG example.
(A) Conditioning on observed confounders $X$ closes the backdoor path, and the interventional distribution is identified by the $g$-formula.
(B) If an unobserved confounder $U$ is a common cause of $A$ and $Y$, conditioning on $X$ does not close the backdoor path,
and $p(y\,|\,\mathrm{do}(A=a))$ is not uniquely identified from the observational distribution alone.}
\label{fig:dag_backdoor}
\end{figure}

\section{Causal Optimal Transport}
\index{optimal transport}
\label{subsec:causal_ot}

This section formulates covariate shift between treated and control groups as a \emph{transport} problem between probability distributions.
By learning a transport map, we can implement counterfactual generation in a sampleable way.
Optimal transport provides a geometric language for balancing: it turns ``make the covariate distributions comparable''
into a distribution transformation problem.
Modern generative models (flow matching, diffusion models, continuous normalizing flows, etc.)
can be viewed as scalable implementations of such transformations in high dimensions.

\paragraph{Covariate shift and balancing}
Let $A\in\{0,1\}$ and covariates $X$.
In observational data,
\[
p_1(x)=p(x\,|\,A=1),\qquad p_0(x)=p(x\,|\,A=0)
\]
typically differ.
This mismatch is one manifestation of confounding: the naive contrast
$\mathbb{E}[Y\,|\,A=1]-\mathbb{E}[Y\,|\,A=0]$
is generally biased.
Classically, balancing is achieved by reweighting using the propensity score $e(x)=P(A=1\,|\,X=x)$ (IPW),
which creates a ``pseudo-population.''
Here we interpret this operation as transporting $p_1$ to $p_0$ (or vice versa).

\paragraph{OT basics}
We reviewed Monge--Kantorovich OT in Section~\ref{subsec:ot-monge-kantorovich}.
If there exists a transport map $T$ such that $(T)_\# p_1=p_0$,
\index{pushforward}
then we can align the treated covariate distribution to the control one:
\[
(X\,|\,A=1)\sim p_1,\qquad Z=T(X)\sim p_0.
\]
This is a geometric expression of balancing.

\paragraph{Weighting and transport: two representations of the same distribution change}
Balancing transforms $p_1$ into $p_0$.
This transformation can be represented either by a density ratio (weights) or by a transport map.
If $w(x)=p_0(x)/p_1(x)$ exists, then for any integrable $f$,
\[
\mathbb{E}_{p_0}[f(X)]=\mathbb{E}_{p_1}\bigl[w(X)f(X)\bigr].
\]
If a map $T$ satisfies $(T)_\# p_1=p_0$, then
\[
\mathbb{E}_{p_0}[f(X)]=\mathbb{E}_{p_1}\bigl[f(T(X))\bigr].
\]
The former corresponds to IPW, the latter to OT-style balancing.
Both ``make distributions match,'' but in different coordinates.

\paragraph{Link to counterfactuals (joint handling of covariates and outcomes)}
The goal is not only to balance $X$ but to estimate the distribution of counterfactual outcomes $Y(0)$ and $Y(1)$.
A natural idea is to consider the joint variable $U=(X,Y)$ and compare
\[
p_1(u)=p(x,y\,|\,A=1),\qquad p_0(u)=p(x,y\,|\,A=0).
\]
However, since $Y$ changes under intervention, transporting $(X,Y)$ naively can lack causal meaning.
In causal inference, it is important to separate roles:
balancing $X$ corrects differences in assignment mechanisms,
whereas transforming $Y$ corresponds to counterfactual generation itself.

A simple two-stage design is:
\begin{enumerate}
\item Learn a covariate transport $T_{1\to 0}$ that maps treated covariates to the control covariate distribution.
\item Use an outcome regression $\mu_0(x)=\mathbb{E}[Y\,|\,A=0,X=x]$ and set
$\widehat Y(0)=\mu_0\!\bigl(T_{1\to 0}(X)\bigr)$ for treated units.
\end{enumerate}
This is intuitive, but estimation errors propagate, and individual-level counterfactuals remain unidentified.

\paragraph{Identification formula (distributional $g$-formula)}
To justify ``generating a counterfactual distribution,'' recall the distributional identification:
under unconfoundedness $Y(a)\indep A\,|\,X$ and overlap,
\begin{equation}
p\{y\,|\,\mathrm{do}(A=a)\}
=
\int p(y\,|\,A=a,X=x)\,p(x)\,dx
\label{eq:g_formula}
\end{equation}
holds.
Thus the target is not to ``guess'' each unit's $Y_i(a)$, but to estimate the components
$p(y\,|\,A=a,X=x)$ and $p(x)$ (or an equivalent distribution transformation) and thereby approximate
the interventional distribution $p(y\,|\,\mathrm{do}(A=a))$ in a \emph{sampleable} form.
From this angle, a generative model is not merely a device that outputs plausible samples,
but a computational principle that implements an identified distribution transformation.
More realistically, one aims to directly estimate a counterfactual generator for
$p(y\,|\,\mathrm{do}(A=a),x)$ under the required identification assumptions.

\paragraph{Causal transport: designing the objective}
In causal OT, transport is not only geometry but is constrained to preserve causal identification.
Two design principles are central.

\paragraph{Balancing constraints}
One may enforce exact pushforward constraints
\[
(T_{1\to 0})_\#\,p_1=p_0,
\]
or, more practically, moment matching of features:
\[
\mathbb{E}\bigl[\phi(T_{1\to 0}(X))\,|\,A=1\bigr]
=
\mathbb{E}\bigl[\phi(X)\,|\,A=0\bigr],
\]
where $\phi$ is a feature map.

\paragraph{Structure preservation (causal coherence)}
Transport is intended to correct the assignment mechanism, not to arbitrarily rewrite the outcome mechanism.
A natural division of labor is therefore:
transport acts on the covariate space $X$,
and counterfactual outcome distributions are induced through $p(y\,|\,a,x)$.
Choosing what to transport and what to learn is the core of causal OT.

\paragraph{Learning transport maps via flow matching}
Computing an optimal transport map explicitly is generally difficult in high dimensions.
Instead, represent transport as a continuous-time probability flow and learn its velocity field.
\index{velocity field}
Consider a path of densities $(\rho_t)_{t\in[0,1]}$ with
\(
\rho_0=p_0,\ \rho_1=p_1.
\)
If a velocity field $v_t$ satisfies the continuity equation,
\begin{equation}\notag
\partial_t \rho_t(x)+\nabla\cdot\bigl(\rho_t(x)\,v_t(x)\bigr)=0,
\end{equation}
then the ODE
\begin{equation}
\frac{d}{dt}X_t=v_t(X_t),\qquad X_0\sim p_0
\label{eq:ode_transport}
\end{equation}
induces (ideally) $X_1\sim p_1$.
Flow matching approximates $v_t$ by a neural network $v_\theta(t,x)$ and learns it via a squared-loss regression
based on a designed conditional path.
An implementation advantage is that learning can be sample-based and does not require explicit density evaluation
or Jacobian computations during training (in continuous time, the divergence integral would appear in likelihood-based flows).

\paragraph{Conditional transport}
In applications, transport may depend on additional conditions (subpopulations, regions, time, strata).
Flow matching extends naturally to conditional velocity fields $v_\theta(t,x\,|\,c)$,
allowing one to learn a family of transport maps indexed by $c$.
This is useful when treating stratification or CATE as a distribution-transformation problem.
\index{CATE}

\paragraph{Generating counterfactual distributions}
Once a transport model is available, counterfactual generation can be written as follows.
For treated units with covariates $X$, generate
\[
\widetilde X=T_{1\to 0}(X),
\]
and then use the control-side conditional outcome distribution (or its generator) to sample
\[
\widetilde Y(0)\sim p(y\,|\,A=0,\widetilde X).
\]
This yields samples from the counterfactual distribution ``what would have happened without treatment.''
Similarly, learning $T_{0\to 1}$ enables generation of $Y(1)$ from controls.
Individual-level counterfactuals are not identified in general; the generated quantities should be interpreted as samples
from an identified \emph{counterfactual distribution} under the chosen assumptions.
In this sense, causal OT and generative modeling provide tools not only for point estimation of ATE/CATE
but also for presenting distributional effects and uncertainty.
\index{uncertainty}

\paragraph{Connection to DDML}
Transport maps and counterfactual generators learned by flow matching act as nuisance components in causal-effect estimation.
Naive plug-in may retain regularization bias, especially with high-capacity models.
The next section therefore introduces orthogonalization and cross-fitting (DDML) as the theoretical device
that permits $\sqrt{n}$-valid inference even when complex generative models are used for nuisance estimation.
\index{double machine learning}

\paragraph{Note: Monte Carlo error from generative models}
If one estimates $\mu_a(x)=\mathbb{E}[Y\,|\,A=a,X=x]$ from a conditional generator by an average of $M$ samples,
the additional approximation error is $O_p(M^{-1/2})$.
To match DDML asymptotics ($n\to\infty$), one should take $M$ sufficiently large,
or combine the generator with an analytic conditional-mean head when possible.

\section{Neyman Orthogonality and Cross-Fitting in Causal Inference}
\index{influence function}
\label{subsec:orth_tutorial}

To estimate counterfactual distributions and interventional effects,
we often need to estimate high-capacity nuisance objects simultaneously:
the propensity score $e(x)=P(A=1\,|\,X=x)$, outcome regressions $\mu_a(x)=\mathbb{E}[Y\,|\,A=a,X=x)$,
and sometimes conditional samplers for $p(y\,|\,a,x)$.
If such nuisance estimates are plugged in directly, first-order regularization bias can remain,
and $\sqrt{n}$ inference can break down.

The general principles of orthogonalization (Neyman orthogonality) and cross-fitting were summarized in
Section~\ref{subsec:ddml}.
Here we focus on
(i) ATE/CATE estimation via orthogonal scores of AIPW type, and
(ii) policy evaluation and dynamic treatment regimes built on counterfactual generation.

As a canonical example, consider the ATE orthogonal moment in AIPW form.
Let $\mu_a(x)=\mathbb{E}[Y\,|\,A=a,X=x]$ and $e(x)=P(A=1\,|\,X=x)$.
Then
\begin{align}\notag
\Psi(O;\psi,\eta)
&=
\Bigl\{\mu_1(X)-\mu_0(X)\Bigr\}
+
\frac{A}{e(X)}\Bigl\{Y-\mu_1(X)\Bigr\}\\[3mm]
&-
\frac{1-A}{1-e(X)}\Bigl\{Y-\mu_0(X)\Bigr\}
-\psi
\label{eq:aipw_score}
\end{align}
is orthogonal in $\eta=(\mu_0,\mu_1,e)$ and is \emph{doubly robust}:
consistency holds if either $\mu_a$ is correctly estimated or $e$ is correctly estimated.
In our context, $\mu_a$ and $e$ may be learned not only by standard regression/classification learners
but also via a conditional generator for $p(y\,|\,a,x)$ (e.g., flow matching),
from which one can compute $\mu_a(x)=\mathbb{E}[Y\,|\,a,x]$ or other functionals (quantiles, distributional effects).
Such generators enter \eqref{eq:aipw_score} as nuisance estimators.

\paragraph{Implementation note: exploding weights and clipping}
When overlap is weak, $\widehat e(X)$ can be close to $0$ or $1$, and AIPW weights may explode.
A common numerical stabilization is clipping:
\[
\widehat e(X)\leftarrow \min(1-\varepsilon,\max(\varepsilon,\widehat e(X))),
\]
with $\varepsilon\asymp 10^{-2}$, for example.
This effectively changes the estimand to a trimmed population; if used, its interpretation should be stated explicitly.
\index{stability}

\paragraph{Cross-fitting}
Even with orthogonal scores, learning $\widehat\eta$ and evaluating $\widehat\psi$ on the same data can leave
dependence between learning error and evaluation noise, which complicates empirical-process arguments,
especially with flexible learners.
DDML addresses this by splitting the sample into $K$ folds.
For each fold $k$, learn nuisance estimators $\widehat\eta^{(-k)}$ using the data outside fold $k$,
evaluate the score on observations in fold $k$, and then average across folds:
\begin{align}
\widehat\psi
&=
\frac1n\sum_{i=1}^n
\Bigl[
\widehat\mu^{(-k(i))}_1(X_i)-\widehat\mu^{(-k(i))}_0(X_i)\notag\\[3mm]
&+
\frac{A_i}{\widehat e^{(-k(i))}(X_i)}\{Y_i-\widehat\mu^{(-k(i))}_1(X_i)\}\notag\\[3mm]
&-
\frac{1-A_i}{1-\widehat e^{(-k(i))}(X_i)}\{Y_i-\widehat\mu^{(-k(i))}_0(X_i)\}
\Bigr],
\label{eq:ddml_crossfit}
\end{align}
where $k(i)$ is the fold index of observation $i$.
This weakens the dependence between $O_i$ and $\widehat\eta^{(-k(i))}$ and makes asymptotic expansions tractable.

\subparagraph{Key rate condition for $\sqrt{n}$ inference}
Under orthogonality \eqref{eq:neyman_orth} and cross-fitting \eqref{eq:ddml_crossfit},
standard DDML theory shows that if nuisance estimation errors are sufficiently small, then
\[
\sqrt{n}\bigl(\widehat\psi-\psi_0\bigr)\ \Rightarrow\ \mathcal{N}(0,V).
\]
The key point is that orthogonality removes first-order nuisance error, so the remainder appears as products such as
\[
\|\widehat\mu_1-\mu_{1,0}\|\cdot \|\widehat e-e_0\|,
\qquad
\|\widehat\mu_0-\mu_{0,0}\|\cdot \|\widehat e-e_0\|,
\]
which only need to be $o_p(n^{-1/2})$.
Thus, nuisance estimators need not be $\sqrt{n}$-consistent individually:
even nonparametric learners or generative models can be used, as long as their learning errors satisfy suitable bounds.
In this division of labor, a high-capacity model such as flow matching is used for counterfactual distribution approximation
(nuisance estimation), while DDML debiases the target causal-effect estimation.
\citep{chernozhukov2018ddml,chernozhukov2022locally}

\subsection*{Numerical experiment}
\label{subsec:causal-interventional-distribution-demo}

\paragraph{Goal}
Causal inference is not limited to the mean effect (ATE); in many tasks the target is the \emph{interventional distribution}
$p(y\,|\,\mathrm{do}(A=a))$ itself.
We compare (i) random forests, which are strong for mean regression, with
(ii) flow matching, which learns a conditional sampler.
Even when ATE estimates are similar, differences can emerge in the distribution tails.

\paragraph{Data generation}
Let $X\in\mathbb{R}^d$ with $X\sim\mathcal{N}(0,I)$.
Treatment is assigned by a logistic propensity $e(X)=P(A=1\,|\,X)$ (with sufficient overlap).
Potential outcomes are generated as
\[
Y^{(a)} = f(X) + \tau(X)a + s(X,a)\,\varepsilon(X,a),
\]
where $\varepsilon$ is generated as a mixture of a Gaussian component and a skewed exponential component.
Thus treatment affects not only the mean but also the \emph{tail shape} (skewness and mixture proportion).
We observe $Y=Y^{(A)}$.

\paragraph{Concrete components (implementation)}
In the accompanying code we set $d=10$ and define the components as follows,
where $\sigma(u)=(1+e^{-u})^{-1}$ is the logistic function.

\begin{itemize}
\item \textbf{Baseline function $f(X)$.}
As a nonlinear mean structure (with interactions), we use
\begin{align*}
f(X)
&=
1.0\sin(1.2X_1)
+0.6(X_2^2-1.0)
+0.5X_3X_4\\[2mm]
&\;+0.3\cos(X_5X_6)
-0.2X_7
+0.15\sin(X_8+X_9).
\end{align*}
This creates a nontrivial regression function with sine/cosine terms, squares, and interactions,
inspired by settings in \citet{kang2007demystifying}.

\item \textbf{Scale function $s(X,a)$.}
To introduce heteroskedasticity and treatment-dependent variance, we set
\[
s(X,a)
=
\Bigl\{
0.7
+0.25\,\sigma(0.9X_2-0.6X_3)
+0.15|X_4|
\Bigr\}\,
\exp(0.25\kappa a),
\]
where $\kappa>0$ controls the strength of distributional effects; we use $\kappa=1.6$.

\item \textbf{Error $\varepsilon(X,a)$ (treatment-dependent tail shape).}
We control the tail via a mixture of a normal core and a right-skewed exponential component.
Define
\[
p(X,a)=\sigma\Bigl(0.9X_1-0.7X_2+0.3\sin(X_3)+1.4\kappa a\Bigr),
\]
generate $Z\,|\,(X,a)\sim\mathrm{Bernoulli}\bigl(p(X,a)\bigr)$,
and independently generate $\xi\sim\mathcal{N}(0,1)$ and centered exponential noise $U\sim \mathrm{Exp}(1)-1$.
Set
\[
\varepsilon(X,a)=0.75\,\xi+\bigl(0.15+0.9Z\bigr)U.
\]
Here $U$ is the heavy right-tail component; when $Z=1$ its coefficient increases from $0.15$ to $1.05$,
which amplifies the upper tail.
Because $p(X,a)$ depends on $(X,a)$, treatment affects tail shape through skewness and mixture weight.
\end{itemize}

\paragraph{Estimators}
\begin{enumerate}
\item \textbf{Random forest (regression + residual resampling).}
For each arm, learn the mean regression $\mu_a(x)\approx \mathbb{E}[Y\,|\,A=a,X=x]$ using a random forest.
Estimate a scale function $s_a(x)$ by regressing $\log\{(Y-\mu_a(X))^2\}$.
From pooled standardized residuals $\widehat\varepsilon=(Y-\mu_a(X))/s_a(X)$, resample
\[
\widehat Y^{(a)}=\mu_a(X)+s_a(X)\,\widehat\varepsilon
\]
to approximate the interventional distribution.

\item \textbf{Flow matching (conditional distribution sampler).}
Learn a conditional sampler for $p(y\,|\,a,x)$ and estimate $p(y\,|\,\mathrm{do}(A=a))$
by sampling $X\sim p(x)$ and then generating $Y^{(a)}\sim p_\theta(y\,|\,a,X)$ directly.
\index{flow matching}
\end{enumerate}

\paragraph{Evaluation metrics}
We evaluate ATE and quantile treatment effects (QTE),
\[
\mathrm{QTE}_\alpha = Q_\alpha\!\bigl(Y^{(1)}\bigr)-Q_\alpha\!\bigl(Y^{(0)}\bigr),
\]
and measure distributional discrepancy by the one-dimensional Wasserstein distance $W_1$
between the true and estimated interventional distributions for $a=0,1$.
Figure~\ref{fig:qq-do} visualizes tail reproduction via QQ plots.
\index{Wasserstein distance}

\paragraph{Results}
The true values were ATE$_{\mathrm{true}}=1.358$,
QTE$_{0.1}=0.307$, QTE$_{0.5}=1.329$, and QTE$_{0.9}=2.452$.
Random forest achieved a small ATE error ($1.350$), confirming its strength for mean effects.
However, distributional metrics showed a clear difference, especially under do$(A=1)$:
RF had $W_1=0.362$, whereas FM achieved $W_1=0.066$.
In the QQ plots (Figure~\ref{fig:qq-do}), RF systematically underestimates the upper tail,
while FM reproduces the tail shape much more accurately.

\paragraph{Takeaway}
Mean effects (ATE) are often well estimated by regression learners such as random forests.
But when the target involves the full interventional distribution
(quantile effects, tail risk, counterfactual generation, policy evaluation),
simplified residual models can miss distributional shifts.
Flow matching provides a conditional sampler, and differences invisible at the mean level become apparent
in distributional evaluations ($W_1$, QQ).

\begin{figure}[!t]
\centering
\includegraphics[width=0.95\textwidth]{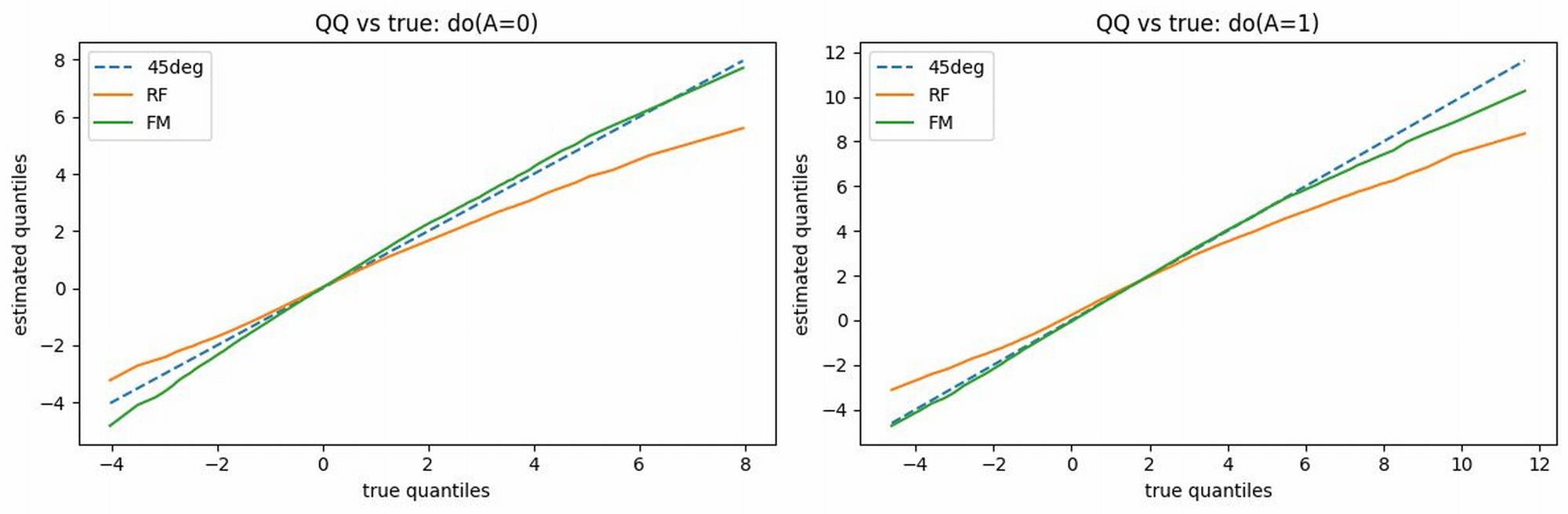}
\caption{QQ plots for interventional distributions (truth vs estimate).
Left: do$(A=0)$; right: do$(A=1)$.
RF (regression + pooled standardized residuals) underestimates the upper tail, with a larger discrepancy for do$(1)$.
FM (conditional sampler) reproduces the distributional shape including the tail more faithfully.}
\label{fig:qq-do}
\end{figure}

\begin{algorithm}[t]
\caption{Estimating interventional distributions (simple comparison: RF vs FM)}
\label{alg:causal-do-demo-simple}
\begin{algorithmic}[1]
\Require $n,d$ (sample size and dimension), number of generated samples $L$, quantile set $\mathcal{A}$, RF learner, FM learner
\Ensure ATE, QTE, $W_1$, QQ (truth vs estimate)
\State \textbf{Generate data:} simulate $(X_i,A_i,Y_i)_{i=1}^n$ and obtain dataset $\mathcal{D}$.
\State \textbf{Truth (benchmark):}
\For{$a\in\{0,1\}$}
  \State Generate $X^{(\ell)}\sim p(x)$ and produce $Y_{\mathrm{true}}^{(a,\ell)}$ for $\ell=1,\dots,L$ using the true mechanism.
\EndFor
\State \textbf{RF (regression + residual resampling):}
\For{$a\in\{0,1\}$}
  \State Fit mean regression $\widehat\mu_a(x)\approx \mathbb{E}[Y\,|\,A=a,X=x]$.
  \State Fit scale $\widehat s_a(x)$ from residuals and pool standardized residuals $\widehat\varepsilon$.
  \For{$\ell=1,\dots,L$}
    \State Draw $X^{(\ell)}\sim p(x)$ and resample $\widehat\varepsilon^{(\ell)}$, then set
    \[
    Y_{\mathrm{rf}}^{(a,\ell)}\gets \widehat\mu_a(X^{(\ell)})+\widehat s_a(X^{(\ell)})\,\widehat\varepsilon^{(\ell)}.
    \]
  \EndFor
\EndFor
\State \textbf{FM (direct conditional sampling):}
\State Train $\widehat p_\theta(y\,|\,a,x)$ on $\mathcal{D}$.
\For{$a\in\{0,1\}$}
  \For{$\ell=1,\dots,L$}
    \State Draw $X^{(\ell)}\sim p(x)$ and sample $Y_{\mathrm{fm}}^{(a,\ell)}\sim \widehat p_\theta(\cdot\,|\,a,X^{(\ell)})$.
  \EndFor
\EndFor
\State \textbf{Evaluation:}
\State Compute ATE and QTE$_\alpha$ ($\alpha\in\mathcal{A}$) for truth, RF, and FM.
\State Compute $W_1$ for $a=0,1$ and compare tails via QQ plots.
\end{algorithmic}
\end{algorithm}

\section{From ATE/CATE to Dynamic Treatment Regimes}
\index{dynamic treatment regime}
\label{subsec:cate_to_dtr}

This section generalizes from ATE to heterogeneous effects (CATE),
and further to dynamic treatment regimes (DTR), where interventions are chosen sequentially over time.
The key perspective is that the target is not only a point estimand but a \emph{counterfactual distribution induced by a policy}.
In this setting, orthogonalization and cross-fitting are the foundation that allows $\sqrt{n}$ inference
while using complex nuisance estimators (regressors and generative models).
\citep{chernozhukov2018ddml,hernanrobins2020whatif}
Finally, we note that for continuous-time interventions, differential-equation-based flow models provide a natural generator.

\paragraph{CATE: identification and estimation}
CATE is defined by
\[
\tau(x)=\mathbb{E}[Y(1)-Y(0)\,|\,X=x].
\]
Under unconfoundedness and overlap,
\[
\tau(x)=\mu_1(x)-\mu_0(x),\qquad \mu_a(x)=\mathbb{E}[Y\,|\,A=a,X=x],
\]
but a naive plug-in $\widehat\tau(x)=\widehat\mu_1(x)-\widehat\mu_0(x)$ can be hard to analyze due to regularization bias.
DDML instead constructs an orthogonal pseudo-outcome and regresses its conditional mean to estimate $\tau(x)$.
\citep{kunzel2019metalearners,nie2021quasioracle}

A representative pseudo-outcome is the AIPW form.
For the true nuisances $\eta=(\mu_0,\mu_1,e)$, define
\begin{equation}
\widetilde Y
=
\Bigl\{\mu_1(X)-\mu_0(X)\Bigr\}
+
\frac{A}{e(X)}\Bigl\{Y-\mu_1(X)\Bigr\}
-
\frac{1-A}{1-e(X)}\Bigl\{Y-\mu_0(X)\Bigr\}.
\label{eq:pseudo_outcome_cate}
\end{equation}
Then
\[
\mathbb{E}[\widetilde Y\,|\,X=x]=\tau(x).
\]
With cross-fitting, learn $\widehat\eta^{(-k)}$, compute pseudo-outcomes
$\widetilde Y_i=\widetilde Y(O_i;\widehat\eta^{(-k(i))})$, and finally regress
\[
\widehat\tau(\cdot)\approx \mathbb{E}[\widetilde Y\,|\,X=\cdot]
\]
using any learner.
This decomposes CATE estimation into two stages:
a flexible first stage for nuisance estimation (including conditional generative models),
and a second stage stabilized by orthogonality.
In practice, meta-learners (S/T/X-learners) \citep{kunzel2019metalearners},
R-learners \citep{nie2021quasioracle}, and generalized random forests \citep{athey2019grf} are widely used.

\paragraph{From CATE to policy learning}
CATE connects directly to policy design: which units should be treated?
For a binary policy $\pi:\mathcal{X}\to\{0,1\}$, define its value
\[
V(\pi)=\mathbb{E}\bigl[Y(\pi(X))\bigr].
\]
Under identification assumptions,
\[
V(\pi)=\mathbb{E}\bigl[\mu_{\pi(X)}(X)\bigr]
=
\mathbb{E}\bigl[\mu_0(X)+\pi(X)\tau(X)\bigr].
\]
Heuristically, the optimal policy treats where $\tau(x)$ is positive.
In practice, estimation error and weak overlap (extreme propensity scores) can make the naive rule
``treat if $\widehat\tau(x)>0$'' unstable.
Orthogonal pseudo-outcomes such as \eqref{eq:pseudo_outcome_cate} are also useful for policy evaluation and optimization,
and policy learning can be cast as cost-sensitive classification.
Again, cross-fitting is important for debiasing.

\paragraph{Dynamic treatment regimes (DTR) as multi-stage interventions}
Consider repeated interventions at times $t=1,\dots,T$.
Let the history up to time $t$ be
\[
H_t=(X_1,A_1,\dots,X_{t-1},A_{t-1},X_t),
\]
and define a dynamic regime (treatment plan)
\[
g=(g_1,\dots,g_T),\qquad A_t=g_t(H_t).
\]
\citep{murphy2003dtr,chakraborty2013dtr}
Let $Y(g)$ denote the potential outcome under regime $g$.
The goal is to estimate an optimal regime
\[
g^\ast \in \argmax_g \mathbb{E}[Y(g)].
\]
This is a sequential decision problem at the intersection of causal inference and dynamic programming.
\citep{murphy2003dtr,suttonbarto2018rl}
\index{causal inference}

Identification typically requires sequential ignorability and positivity:
for each $t$,
\[
\{Y(g):g\}\ \indep\ A_t\,|\,H_t,
\qquad
0<P(A_t=a\,|\,H_t=h)<1.
\]
Under these, interventional distributions under $g$ can be expressed via the longitudinal $g$-formula.
\citep{hernanrobins2020whatif,robins1986hws}
When time-varying confounders are themselves affected by past treatment,
naive regression adjustment can be biased; marginal structural models (MSM) and IPTW address this.
\citep{robins2000msm}
There are many approaches (MSM, structural nested models, $g$-estimation, sequential regression, etc.),
but the essence is always: estimate the counterfactual distribution induced by a policy.

\paragraph{Connection to reinforcement learning: Q-learning and off-policy evaluation}
In reinforcement-learning (RL) language, DTR corresponds to policy optimization.
Under a Markov structure, value functions and action-value functions ($Q$-functions) characterize optimal policies.
\citep{suttonbarto2018rl}
Classical Q-learning learns the fixed point of the Bellman equation,
\[
Q^\ast(s,a)
=
\mathbb{E}\bigl[R_{t+1}+\gamma\max_{a'}Q^\ast(S_{t+1},a')\,|\,S_t=s,A_t=a\bigr],
\]
\citep{watkins1992qlearning}
and in medical applications it can be interpreted as a DTR estimator based on sequential regression.
Observational data, however, are not generated by exploratory policies, so \emph{off-policy} evaluation is required.
Here too, the same stability principles as IPTW/AIPW apply: orthogonalization and cross-fitting are key.

\paragraph{Continuous-time interventions and a flow-model outlook}
Finally, consider continuous time.
Let $X_t$ be a state process and $a_t$ a control (dose, stimulation, etc.).
A natural model is a (stochastic) differential equation such as
\begin{equation}
\frac{d}{dt}X_t = b(t,X_t,a_t) + \varepsilon_t,
\label{eq:ct_controlled_dynamics}
\end{equation}
where $\varepsilon_t$ represents noise (or one may use an SDE formulation).
A policy $\pi$ specifies $a_t=\pi(t,\mathcal{H}_t)$, where $\mathcal{H}_t$ is the continuous-time history,
and induces a counterfactual trajectory $X_t^{(\pi)}$ and outcome $Y^{(\pi)}$.
In this view, counterfactuals are trajectories under alternative controls, and the problem is fundamentally a control problem.

Differential-equation-based generative models enter naturally.
Neural ODEs represent the vector field $b_\theta$ by a neural network and learn continuous dynamics via numerical solvers.
\citep{chen2018neuralode}
\index{ODE}
From the flow viewpoint, distributions deform continuously along $t$ via a velocity field,
so conditional distribution transformations can generate counterfactual distributions under interventions.
Conceptually, if one can learn transitions such as
\[
p(x_{t+\Delta}\,|\,x_t,a_{\,[\,t,\;t+\Delta\,]}),
\]
then fixing a policy $\pi$ and generating sample paths yields an approximation of the counterfactual distribution under $\pi$.
From the DDML viewpoint, such generators are nuisance components, and targets such as $V(\pi)$ or policy-improvement functionals
should be estimated and debiased via orthogonal moments.

In summary:
(i) ATE, CATE, and DTR are unified as targets defined by counterfactual distributions induced by policies;
(ii) generative models (flows, ODEs, transport) can serve as flexible counterfactual generators on the nuisance side; and
(iii) inference is stabilized by orthogonalization and cross-fitting, which connects these flexible models to $\sqrt{n}$ inference.
Open problems include making identification assumptions explicit in continuous-time settings
(overlap, unmeasured confounding, partial observability) and developing theory that propagates
counterfactual uncertainty into policy evaluation (local robustness, sensitivity analysis).

\paragraph{Chapter summary}
We organized interventional-effect estimation into three steps:
``identification $\to$ transport $\to$ inference.''
Key points are:
\begin{itemize}
\item \textbf{Identification.}
Under unconfoundedness, overlap, and SUTVA, the interventional distribution $p(y\,|\,\mathrm{do}(A=a))$ is identified from the observational distribution.
Individual counterfactuals $Y_i(a)$ are not observed, but distribution-level counterfactuals (interventional distributions) are well-defined estimands.
\item \textbf{Distribution transformation.}
Balancing aligns covariate distributions between treated and control groups.
Weighting by density ratios (IPW) and transforming by transport maps (OT) implement the same goal in different coordinates.
Weighting acts as a change of measure for expectation evaluation,
whereas transport pushes distributions forward via maps on the sample space.
This distinction determines whether counterfactual \emph{sample generation} can be implemented directly.
\item \textbf{Inference.}
DDML uses Neyman orthogonality and cross-fitting so that high-dimensional nuisance estimation error does not enter first-order asymptotics,
enabling estimation and uncertainty quantification for ATE/CATE.
Flows can be incorporated as flexible nuisance estimators (or conditional samplers),
but weak overlap (extreme weights) remains a practical instability risk.
\item \textbf{Dynamic regimes.}
The same blueprint extends to dynamic treatment regimes, where interventions are chosen sequentially given a history $H_t$
and the goal is to learn an optimal policy.
\end{itemize}

\chapter{Inference-Aware Generation: Uncertainty, Diagnostics, and Score Learning}
\index{diagnostics}
\index{uncertainty}
\index{KSD (kernel Stein discrepancy)}
\index{DSM (denoising score matching)}

In Chapter~2 we introduced the Stein identity and score-based ideas.
In this chapter we reposition them not as \emph{training objectives} but as practical tools for
\emph{model diagnostics} and \emph{uncertainty assessment}.
Our goal is to narrow the gap between ``we can generate samples'' and ``we can conduct statistical inference''.
From a statistical viewpoint, a generative model is not trustworthy unless we can answer:
\emph{Is it accurate? Where does it miss? How do such mismatches affect downstream inference?}

We proceed in four steps:
(i) we organize diagnostics into unconditional two-sample checks and conditional checks;
(ii) we introduce the \emph{kernel Stein discrepancy} (KSD) as a density-free goodness-of-fit measure;
(iii) we explain \emph{denoising score matching} (DSM) as a computational device to avoid expensive divergence terms;
and (iv) we clarify the layers of uncertainty that are specific to inference with generators.

\section{The diagnostic landscape: two-sample tests and conditional checks}
Evaluating a generative model often degenerates into subjective judgments such as ``it looks plausible''.
For statistical use, however, it is essential to distinguish at least the following two types of diagnostics.
Let $Z$ denote the observed vector.
For unconditional generation (e.g.\ copula-type sampling) we take $Z=X$,
while for regression, missing-data imputation, and causal inference it is natural to consider
$Z=(X,Y)$ or $Z=(X,A,Y)$.

\begin{enumerate}
\item \textbf{Unconditional (two-sample) diagnostics:} whether the generated distribution is close to the true distribution \emph{as a whole}.
\item \textbf{Conditional diagnostics:} whether conditional structure such as $Y\,|\,X$ is reproduced correctly (regression as a conditional generator).
\end{enumerate}

These are complementary: relying on only one of them can miss important failures.
For instance, in regression problems the marginal distributions of $X$ and $Y$ may look well matched,
while the conditional shape $Y\mid X$ (nonlinearity, multimodality, heteroskedasticity) is distorted,
which can severely mislead prediction intervals or threshold-based decisions.

\paragraph{Unconditional diagnostics: what do they verify?}
Two-sample diagnostics compare an observed sample $\{Z_i\}_{i=1}^n\sim P$ with a generated sample
$\{\widetilde Z_j\}_{j=1}^m\sim Q_\theta$ and check whether $P\approx Q_\theta$.
Typically one estimates a discrepancy $D(P,Q_\theta)$ from samples and verifies that
$D$ is sufficiently small (or that a test does not reject).
A main advantage is that (i) no conditional structure needs to be specified, and
(ii) many tools remain applicable in high dimensions as long as sampling is possible.

\paragraph{A common blind spot (unconditional): tail mismatches}
Even if a generator reproduces the high-density region well,
small discrepancies in tail mass (frequency of rare events) or tail dependence can cause large differences
in risk assessment and counterfactual analyses.
Many two-sample statistics respond primarily to \emph{average} discrepancies and may have low power
when tail differences occur with small probability.
Thus, in practice it is important to supplement standard distances/tests by
\emph{tail-focused summaries}, such as rare-event frequencies and comparisons of maxima or upper quantiles.

\paragraph{Conditional diagnostics: what do they verify?}
Conditional diagnostics treat the generator as a conditional regressor and assess the shape of $Y\,|\,X=x$.
Concretely, if the model supports
\[
Y = G_\theta(X,\varepsilon),\qquad \varepsilon\sim\pi,
\]
then for a fixed $x$ we generate $\{Y^{(b)}(x)\}_{b=1}^B$ and examine summaries of the predictive distribution
(means, quantiles, intervals) and their calibration.
The key advantage is that one can evaluate \emph{distribution-level} accuracy,
not only point prediction.

\paragraph{A common blind spot (conditional): collapsing multimodality and asymmetry}
A typical failure mode is that a genuinely multimodal $Y\,|\,X=x$ is learned as unimodal due to averaging.
In such cases unconditional two-sample checks may still suggest an acceptable fit, while
the distribution of intervention effects, the distribution produced by imputation,
or the coverage of predictive intervals may break down.
Therefore conditional diagnostics should use shape-sensitive tools beyond mean/variance,
such as quantile curves, PIT (probability integral transform) diagnostics, and coverage checks.

\paragraph{Two-sample and conditional diagnostics are complementary}
In short, unconditional checks test ``global distributional agreement'',
whereas conditional checks test ``regression-level agreement''.
When a generator is used as a component of inference, either alone is insufficient.
One should combine diagnostics according to what matters for the task
(tails, multimodality, conditional structure).
In the next section we introduce KSD as a powerful density-free tool for two-sample diagnostics.

\section{Kernel Stein discrepancy (KSD)}\label{U-stat}
\index{KSD (kernel Stein discrepancy)}
\index{Stein identity}
KSD measures goodness of fit using the \emph{score function} and does not require the normalization constant of a density.
Although many generative models do not provide an explicit density,
there are many situations where the score is evaluable (or can be approximated),
which makes KSD practically attractive.

\paragraph{Density-free (no normalization constant)}
Goodness-of-fit measures based on density comparison are often obstructed in high dimensions by
unknown normalizing constants and intractable likelihood evaluations.
KSD uses the Stein identity to construct a quantity whose expectation is zero under the \emph{correct} distribution,
and estimates its deviation from zero by a sample average.
Hence computation relies on the score $s_q(x)$ and a kernel $K(x,x')$, rather than on the density value itself.
In particular, when a normalizing constant is unknown but the score can be computed (or estimated),
KSD provides a likelihood-free diagnostic option.

As discussed in Chapter~2, the Stein identity can be understood as:
if $X\sim q$, then for a Stein operator ${\cal T}_q$,
\[
\mathbb{E}_q[{\cal T}_q f(X)]=0
\quad\text{for all sufficiently smooth } f.
\]
This suggests evaluating the same quantity under another distribution $p$ and measuring how far it departs from zero.
KSD formalizes and optimizes this idea in the framework of reproducing kernel Hilbert spaces (RKHS),
leading to a computable discrepancy.
For details, see \citet{gorham2015measuring,liu2016kernelized,chwialkowski2016kernel,liu2016stein}.

\paragraph{Definition}
Let $K:\mathbb{R}^d\times\mathbb{R}^d\to\mathbb{R}$ be a symmetric positive definite kernel and
let $\mathcal{H}_K$ be the RKHS with reproducing kernel $K$ and inner product
$\langle\cdot,\cdot\rangle_{\mathcal{H}_K}$.
For each $x$, the evaluation functional $g\mapsto g(x)$ is continuous and the reproducing property holds:
\[
g(x)=\langle g, K(x,\cdot)\rangle_{\mathcal{H}_K}.
\]
A vector-valued RKHS is defined by the direct sum
\(
\mathcal{H}_K^d=\mathcal{H}_K\oplus\cdots\oplus\mathcal{H}_K.
\)
For $f=(f_1,\dots,f_d)$ we set
\[
\langle f,g\rangle_{\mathcal{H}_K^d}=\sum_{j=1}^d \langle f_j,g_j\rangle_{\mathcal{H}_K},
\qquad
\|f\|_{\mathcal{H}_K^d}^2=\sum_{j=1}^d\|f_j\|_{\mathcal{H}_K}^2.
\]
Thus $\|f\|_{\mathcal{H}_K^d}\le 1$ enforces a joint smoothness constraint on all components.

Let $s_q(x)=\nabla \log q(x)$ be the score of $q$ and define the Stein operator
\[
(\mathcal{T}_q f)(x)=s_q(x)^\top f(x)+\nabla\cdot f(x).
\]
For the unit ball
\[
\mathcal{B}_K=\{f\in\mathcal{H}_K^d:\|f\|_{\mathcal{H}_K^d}\le 1\},
\]
the kernel Stein discrepancy is
\[
\mathcal{S}_K(p\|q)
=
\sup_{f\in\mathcal{B}_K}
\Bigl(\mathbb{E}_{X\sim p}[(\mathcal{T}_q f)(X)]\Bigr)^2.
\]

\paragraph{Closed form and estimation}
By the reproducing property and the Riesz representation theorem,
\[
\mathcal{S}_K(p\|q)
=
\mathbb{E}_{(X,X')\sim p^{\otimes 2}}\!\bigl[u_{q,K}(X,X')\bigr],
\]
where the Stein kernel $u_{q,K}$ is
\begin{align*}
u_{q,K}(x,x')
&=
s_q(x)^\top K(x,x')\,s_q(x')
+s_q(x)^\top\nabla_{x'}K(x,x')\\
&\quad
+s_q(x')^\top\nabla_{x}K(x,x')
+\mathrm{tr}\!\bigl(\nabla^2_{x,x'}K(x,x')\bigr).
\end{align*}
If $K$ is sufficiently characteristic and regularity conditions hold, then
$\mathcal{S}_K(p\|q)=0\Rightarrow p=q$.

Given $X_1,\dots,X_n\sim p$, an unbiased estimator is the U-statistic
\[
\widehat{\mathcal{S}}_K(p\|q)
=
\frac{1}{n(n-1)}\sum_{i\neq j} u_{q,K}(X_i,X_j).
\]
In goodness-of-fit testing one often approximates the null distribution by (wild) bootstrap and computes a $p$-value.

\paragraph{Practical notes: kernel choice, bandwidth, and high dimensions}
KSD is useful, but the following points matter in practice.
\begin{itemize}
\item \textbf{Kernel and bandwidth.}
With Gaussian kernels, sensitivity can change substantially with the bandwidth.
Too large a bandwidth averages away differences; too small a bandwidth can overreact to noise.
In practice, one may use multiple bandwidths, or initialize by the median heuristic and then check sensitivity.
\item \textbf{Loss of sensitivity in high dimensions.}
In high dimensions distances concentrate and kernel methods may lose power.
Possible remedies include computing KSD on informative low-dimensional representations (feature maps)
or combining KSD with conditional diagnostics to focus on task-relevant directions.
\item \textbf{Tail robustness.}
As noted above, any statistic that mainly responds to average discrepancies can miss tail differences.
KSD is not a universal detector, so it is safer to combine it with tail summaries and extreme-value diagnostics.
\end{itemize}
A minimal practical protocol is:
(i) start from the median heuristic but always check sensitivity to bandwidth;
(ii) approximate the null distribution by bootstrap (e.g.\ wild bootstrap);
and (iii) in high dimensions, complement KSD by feature design and conditional diagnostics.

In this book we position KSD not as a universal test but as a core density-free two-sample diagnostic,
to be used together with conditional checks.
The next section discusses DSM, which avoids a computational bottleneck in KSD and score learning:
divergence terms (second derivatives).

\section{Denoising score matching (DSM)}\label{app:trace-dsm}
\index{DSM (denoising score matching)}
\index{score matching}
In score learning and score matching, divergence terms such as $\nabla\cdot s_\theta(x)$ (second derivatives)
often appear and can be computationally heavy in high dimensions.
Denoising score matching (DSM) smooths the objective by adding noise and enables score learning without
explicit second-derivative calculations \citep{vincent2011connection,song2021scorebased}.
Here we emphasize DSM not as ``a way to train a generator'' but as a computational technique to obtain
scores stably for diagnostics and inference.

\paragraph{Why second derivatives are costly: divergence as a bottleneck}
Standard score matching objectives often combine a squared error term with a divergence term.
Although automatic differentiation can compute such divergences, the cost typically grows with dimension $d$,
making the divergence a practical bottleneck.
Thus, approximations and reformulations that avoid explicit second derivatives are important in practice.

\paragraph{Hutchinson-type trace estimation}
Since $\nabla_x\cdot s_\theta(x)=\mathrm{tr}\{\nabla_x s_\theta(x)\}$,
let $\epsilon\in\mathbb{R}^d$ satisfy $\mathbb{E}[\epsilon\epsilon^\top]=I$, e.g.\ a Rademacher vector
(each component takes $\pm1$ with probability $1/2$, independently).
Then
\[
\mathrm{tr}(A)=\mathbb{E}_\epsilon[\epsilon^\top A\,\epsilon],
\]
and hence
\[
\nabla_x\cdot s_\theta(x)
=
\mathbb{E}_\epsilon\bigl[\epsilon^\top (\nabla_x s_\theta(x))\,\epsilon\bigr]
\]
can be approximated stochastically.
In automatic differentiation, $\epsilon^\top (\nabla_x s)\epsilon$ can be computed without explicitly forming the Jacobian,
which is advantageous in high dimensions.
In implementations one typically uses
\[
\nabla_x\cdot s_\theta(x)
\approx
\frac{1}{M}\sum_{m=1}^M \epsilon_m^\top (\nabla_x s_\theta(x))\,\epsilon_m,
\qquad \epsilon_m\sim\text{Rademacher}.
\]

Another viewpoint is to add Gaussian noise $\xi\sim N(0,\sigma^2 I)$ and consider $Y=X+\xi$,
and then learn the score of the smoothed density $p_\sigma(y)$.
This yields training rules that avoid explicit second derivatives and
places diffusion-model score learning on the same conceptual line.
For our purposes, it suffices to view DSM as a representative way to reduce the cost of divergence computations.

\paragraph{Implementation tips}
Key practical points for DSM include:
\begin{itemize}
\item \textbf{Choosing the noise level $\sigma$.}
If $\sigma$ is too small, smoothing is weak and the regression remains difficult;
if $\sigma$ is too large, information is washed out.
Multi-scale training that mixes multiple noise levels is often used in practice.
\item \textbf{Be explicit about the goal.}
DSM is a means to make score learning computationally feasible.
Design choices differ depending on whether the learned score is used for generation, for diagnostics, or as a regularizer.
\end{itemize}
DSM is a computational backbone for the ``score-without-density'' theme of this book.
Combined with KSD, it supports practical model checking and calibration of generative distributions.

\section{Validating uncertainty}
When a generative model is used as a component of inference, uncertainty has multiple layers.
If these layers are conflated, one cannot tell whether instability is due to
a misspecified model, insufficient data, or insufficient Monte Carlo draws.
Here we separate uncertainty into:
(i) approximation error of the generator (modeling error),
(ii) estimation error due to finite training data,
and (iii) Monte Carlo (MC) error due to finitely many generated samples.
Approximation error depends on the model class and training objective,
estimation error is governed by data size and regularization/cross-fitting,
and MC error is controlled by the number of generated draws.
Thus, when results ``do not match'', one should diagnose which layer dominates before deciding what to increase.

\paragraph{Approximation error of the generator}
First, any generator has a limited representational class.
Depending on network capacity, regularization, and the learning objective,
the learned distribution $Q_{\widehat\theta}$ may fail to approximate the true distribution $P_0$:
\[
Q_{\widehat\theta}\not\approx P_0.
\]
This is a modeling error and does not vanish by increasing $n$.
The basic remedy is diagnostics: unconditional and conditional checks,
with particular focus on task-relevant structures such as tails and multimodality.

\paragraph{Estimation error (finite training data)}
Second, because the training sample size $n$ is finite, $\widehat\theta$ fluctuates across datasets,
and so does the induced distribution $Q_{\widehat\theta}$.
This corresponds to the sampling variability of an estimator.
When the estimand is an expectation or a causal effect, DDML orthogonalization
suppresses the first-order impact of nuisance estimation error.
If one wants to visualize the uncertainty of the generator itself,
bootstrap (resample the training data and re-train) is a natural approach.

\paragraph{MC error (finite number of generated draws)}
Third, Monte Carlo approximation induces additional variability.
For example, when estimating a functional $E\{h(X)\}$ by
\[
\frac{1}{B}\sum_{b=1}^B h(\widetilde X^{(b)}),\qquad \widetilde X^{(b)}\sim Q_{\widehat\theta},
\]
MC error remains as long as $B<\infty$.
Typically it decreases on the order $B^{-1/2}$, so $B$ should be chosen according to the desired precision.
In practice, the required $B$ differs substantially between stable quantities (means/medians) and tail quantities
(upper quantiles or rare-event probabilities), which demand larger $B$.

\paragraph{Implementation notes}
A minimal practical recommendation is:
\begin{itemize}
\item \textbf{How many draws?}
For stable targets such as means or medians, moderate $B$ may suffice.
For upper quantiles and rare-event probabilities, $B$ must be much larger.
A basic check is to increase $B$ and verify numerical stabilization of the estimated quantity (MC convergence).
\index{stability}
\item \textbf{A bootstrap entry point.}
To reflect estimation error of the generator (finite training data),
one may resample the training data, re-train the generator, and compare the distribution of the downstream estimand.
Although computationally costly, this helps disentangle modeling error, estimation error, and MC error.
\end{itemize}

The diagnostics and uncertainty bookkeeping in this chapter are minimal safety devices
for using generative models as components of statistical inference.
In concrete applications, one should tailor the diagnostic menu to the problem at hand
and use it to decide which uncertainty source to prioritize.

\chapter*{Conclusion}
\addcontentsline{toc}{chapter}{Conclusion}

Inferring statistical parameters is a more difficult endeavor than we often imagine.
If the model is correct, statistical inference proceeds in an orderly fashion:
estimators have clear meaning, asymptotic theory provides the backbone,
and standard errors quantify uncertainty.
In this sense, mathematical statistics builds a self-contained world \citep{inagaki2003}.
Yet, in reality, \emph{the model is wrong}.
What is more troublesome is that it is itself difficult to grasp what this ``wrongness'' looks like.
Model misspecification does not appear as a finite-dimensional deviation;
rather, it manifests as a distortion of a distribution with infinitely many degrees of freedom.
What we overlook is not a small discrepancy in coefficients,
but asymmetry of the distribution, heavy tails,
or multimodality hidden in conditional distributions.
Such departures can slip through the mesh of classical tests
and quietly erode our confidence in inference.

This is why this book has focused on Flow Matching: to confront this ``infinite-dimensional misspecification.''
Flow Matching is not merely a black-box generator.
Behind it lies the \textbf{continuity equation}, the most basic law of mass conservation in physics.
Moreover, the brilliant idea of Stein's identity
provides a route to calibrating distributions while bypassing normalization constants.
Supported by these two pillars, Flow Matching can encompass a broad class of
plausible misspecification neighborhoods and offers the possibility of strongly calibrating
the discrepancy between model and data.
Our goal has been to translate this possibility into the language of statistics
(estimands, influence functions, orthogonalization, efficiency),
and to treat inference and generation not as opponents but as complementary components.

As a concrete illustration, we first examined the Cox model,
often regarded as a jewel of statistics.
Its appeal is that, even in the presence of an infinite-dimensional nuisance component
(the baseline hazard), one can estimate the main effect $\beta$ in an interpretable form.
However, the proportional hazards assumption is often slightly violated in practice.
If we ignore this ``small violation,'' interpretation may remain beautiful,
but inference becomes quietly distorted.
We therefore proposed to fix the Cox model as a baseline
and to separate only the violation component as a residual that is absorbed by a flow (a vector field).
When the skeleton is correct, the procedure collapses back to the baseline model;
corrections appear only when violations exist.
This modesty is the key to gaining flexibility while preserving inferential quality.

Next, we addressed missing data, an issue that statistics cannot avoid.
Whether the missingness mechanism (MAR) holds is often an untestable, perpetual mystery. \index{mar@MAR}
The practical difficulty is not to impute a single value,
but to faithfully reproduce the conditional distribution
$p(x_{\mathrm{mis}}\,|\,x_{\mathrm{obs}})$ of the missing part given the observed part.
Under complex missingness mechanisms, this conditional distribution can be multimodal and highly nonlinear,
and classical regression-style chains may fail to capture its shape.
Because a flow can learn a sampler for this conditional distribution,
imputation can be implemented as ``estimating a distribution'' rather than ``estimating a mean.''
When non-identifiability is unavoidable, one can further present identification intervals as a sensitivity analysis,
thereby visualizing which assumptions inference depends on.

Finally, we turned to causal inference. \index{causal inference}
We wish to estimate intervention effects without bias,
and at the same time to generate counterfactual data correctly.
In some settings the average treatment effect is sufficient,
but in real decision-making we need quantile effects, tail risks,
and comparisons of entire outcome distributions.
Here again, a flow acts as a sampler of counterfactual distributions
and directly visualizes the interventional distribution $p(y\,|\,\mathrm{do}(a))$.
On the other hand, estimation accuracy must be protected by the principle of orthogonalization,
because the greater the freedom of generation, the more fragile inference can become.
What we have emphasized throughout this book is the stance that
``generate flexibly'' and ``estimate without bias'' should not be separated,
but should be designed simultaneously within the framework of influence functions and orthogonalization.

The model is wrong.
Yet, precisely because it is wrong, statistics is beautiful.
We sharpen a baseline model, separate the violation, make uncertainty explicit,
and nevertheless push inference forward.
Flow Matching can become a new toolkit for this purpose.
Supported by the continuity equation and guided by Stein's identity,
it equips us with the power to handle distributions ``as shapes.''
What this book has presented is only an entrance to a methodology that still needs to be completed.
We hope that readers will confront their own data and problem settings,
choose what to keep as a baseline model and what to absorb into a flow,
and take control of both inference and generation with their own hands,
breathing new life into open problems.
Beyond that lies a future in which \textbf{generative AI and statistics can be discussed on the same horizon}.

\appendix
\chapter{Appendix}
The material below supplements parts that could not be fully covered in the main text.

\section{GGM Numerical Experiment}
\index{gaussian graphical model@Gaussian graphical model}
\label{app:ggm}

In this appendix, for the GGM in Example~\ref{ex:ggm-sm}, we summarize an experimental setup that compares
the regularized maximum likelihood estimator (Graphical Lasso) and a regularized score matching estimator.
\index{score matching@score matching}
The aim is to illustrate, through a concrete example, that optimization involving $\log\det$ can become computationally heavy in high dimensions.

\paragraph{Setup}
We set the dimension $d=200$ and the sample size $n=120$.
The true precision matrix $K^\ast$ is generated by randomly placing sparse off-diagonal entries and then applying a diagonal adjustment to ensure $K^\ast\succ 0$.
The data are generated from
\[
X_1,\dots,X_n \stackrel{\mathrm{i.i.d.}}{\sim} \mathcal{N}\!\bigl(0,(K^\ast)^{-1}\bigr),
\]
and we use the centered sample covariance
\[
S=\frac{1}{n}\sum_{i=1}^n (X_i-\bar X)(X_i-\bar X)^\top.
\]

\paragraph{Regularized MLE (Graphical Lasso)}
We numerically solve \citep{friedman2008sparse}
\[
\widehat K_{\mathrm{MLE}}
=\argmin_{K\succ 0}
\Bigl\{
\mathrm{tr}(SK)-\log\det K+\alpha \sum_{i\neq j}|K_{ij}|
\Bigr\}.
\]

\paragraph{Regularized score matching (ridge + $\ell_1$)}
Since $S$ can be singular when $n<d$, we add a ridge term for stabilization:
\begin{equation}
\widehat J_{\lambda,\rho}(K)
=
\tfrac12\,\mathrm{tr}(KSK)-\mathrm{tr}(K)
+\frac{\rho}{2}\|K\|_F^2
+\lambda\sum_{i\neq j}|K_{ij}|,
\qquad K=K^\top.
\label{eq:sm_ridge_l1}
\end{equation}
The gradient of the smooth part is
\[
\nabla f(K)=\tfrac12(SK+KS)-I+\rho K,
\]
and the nonsmooth part can be handled by a proximal update using soft-thresholding on the off-diagonal entries.

\begin{algorithm}[t]
\caption{Ridge-$\ell_1$ score matching for a GGM (proximal gradient method)}
\label{alg:sm_prox}
\begin{algorithmic}[1]
\Require Data $x_1,\dots,x_n\in\mathbb{R}^d$, regularization parameters $\lambda>0,\rho>0$, maximum iterations $T$
\Ensure Estimated precision matrix $\widehat K_{\mathrm{SM}}$
\State Center the data and compute $S\leftarrow \frac1n\sum_{i=1}^n (x_i-\bar x)(x_i-\bar x)^\top$
\State $\eta \leftarrow 0.9/(\|S\|_2+\rho)$, \ $K^{(0)}\leftarrow I_d$
\For{$t=0,1,\dots,T-1$}
  \State $G^{(t)} \leftarrow \tfrac12(SK^{(t)}+K^{(t)}S)-I_d+\rho K^{(t)}$
  \State $\widetilde K \leftarrow K^{(t)}-\eta G^{(t)}$
  \State $K^{(t+1)} \leftarrow \mathrm{Soft}_{\eta\lambda}(\widetilde K)$ \Comment{threshold off-diagonal entries only}
  \State $K^{(t+1)} \leftarrow \tfrac12\bigl(K^{(t+1)}+(K^{(t+1)})^\top\bigr)$ \Comment{symmetrize}
\EndFor
\State To guard against numerical error, apply a small shift so that the minimum eigenvalue becomes positive, and output $\widehat K_{\mathrm{SM}}$
\end{algorithmic}
\end{algorithm}

\paragraph{Evaluation metrics}
We measure the elementwise RMSE by
\[
\mathrm{RMSE}(\widehat K)
=
\Bigl(\frac1{d^2}\sum_{i=1}^d\sum_{j=1}^d(\widehat K_{ij}-K^\ast_{ij})^2\Bigr)^{1/2}.
\]
Computation time is measured by wall-clock time (with synchronization when using a GPU).

\begin{table}[t]
\centering
\caption{(Example) Monte Carlo mean and standard deviation of estimation accuracy (RMSE) and computation time (CT). $d=200,n=120$.}
\label{tab:ggm_sum}
\begin{tabular}{lcccc}
\toprule
 & RMSE\_MLE & CT\_MLE (sec) & RMSE\_SM & CT\_SM (sec)\\
\midrule
mean & 0.182256 & 1.595136 & 0.051682 & 0.127432\\
std  & 0.001820 & 1.511921 & 0.001156 & 0.016656\\
\bottomrule
\end{tabular}
\end{table}

\paragraph{Comment}
The relative relationship between RMSE and CT in Table~\ref{tab:ggm_sum} can depend on the regularization parameters and stopping criteria.
Nevertheless, in optimization problems involving $\log\det$, each iteration typically requires computing $K^{-1}$ (or a Cholesky factorization),
and this cost can become dominant in high dimensions, which is the point emphasized here.

\chapter*{List of Symbols}
\label{symbol}
\addcontentsline{toc}{chapter}{List of Symbols}

The principal symbols and definitions used in this book are summarized below.

\subsection*{Probability, statistics, and sets}
\begin{longtable}{p{3.2cm} p{11cm}}
    $\mathbb{R}^d$ & The $d$-dimensional Euclidean space. \\
    $P, Q$ & Probability measures. \\
    $p(x), q(x),\rho(x)$ & Probability density functions. \\
    $X \sim P$ & A random variable $X$ follows distribution $P$. \\
    $\mathbb{E}[f(X)]$ & Expectation. When the underlying distribution $P$ is explicit, we write $\mathbb{E}_{X \sim P}[f(X)]$ or $\mathbb{E}_P[f]$. \\
    $\widehat{P}_n$ & Empirical measure: $\frac{1}{n}\sum_{i=1}^n \delta_{X_i}$. \\
    $\mathcal{N}(\mu, \Sigma)$ & Gaussian distribution with mean $\mu$ and covariance matrix $\Sigma$. \\
    $\mathrm{Unif}(0, 1)$ & Uniform distribution on $(0,1)$. \\
    $W_p(\mu, \nu)$ & $p$-Wasserstein distance. \\
    $D_{\mathrm{KL}}(P \| Q)$ & Kullback--Leibler divergence. \\
    $\psi(P)$ & Target estimand (e.g., ATE, QTE, policy value). \\
    $\eta$ & Generic notation for nuisance functions (e.g., propensity score, conditional mean). \\
\end{longtable}

\subsection*{Differentiation, geometry, and vector fields}
\begin{longtable}{p{3.2cm} p{11cm}}
    $\nabla_x f(x)$ & Gradient of $f$ with respect to $x$. \\
    $\nabla \cdot v(x)$ & Divergence of a vector field $v$: $\sum_{j=1}^d \partial_{x_j} v_j(x)$. \\
    $\Delta f(x)$ & Laplacian: $\nabla \cdot \nabla f(x)$. \\
    $v_t(x)$ & Vector field (velocity field) at time $t$; also written as $v(t,x)$. \\
    $s_p(x)$ & Score function of density $p$: $\nabla_x \log p(x)$. \\
    $T_{\#} P$ & Pushforward of measure $P$ by a map $T$. If $X \sim P$, then $T(X) \sim T_{\#} P$. \\
    $\Phi_t(x)$ & Flow map of an ODE: the position at time $t$ of a particle starting at $x$ at time $0$. \\
\end{longtable}

\subsection*{Flow Matching and generative models}
\begin{longtable}{p{3.2cm} p{11cm}}
    $t$ & Pseudo-time or interpolation parameter, typically $t \in [0,1]$. \\
    $\pi(x)$ & Reference distribution, typically a known distribution such as the standard Gaussian. \\
    $\rho_t(x)$ & Density of the intermediate distribution (probability path) at time $t$. \\
    $u_t(x \mid x_1)$ & Conditional vector field (target velocity) with endpoint $x_1$ fixed. \\
    $X_0, X_1$ & Random variables representing the start point (noise) and the end point (data), respectively. \\
\end{longtable}

\subsection*{Causal inference}
\begin{longtable}{p{3.2cm} p{11cm}}
    $A$ & Treatment (intervention) variable, typically $A \in \{0,1\}$. \\
    $Y$ & Observed outcome variable. \\
    $Y(a)$ & Potential outcome under treatment $A=a$. \index{potential outcome@Potential outcome}\\
    $\mathrm{do}(A=a)$ & Intervention operator that sets $A$ to $a$ in a structural model. \\
    $\tau(x)$ & Conditional average treatment effect (CATE): $\mathbb{E}[Y(1)-Y(0)\mid X=x]$. \\
    $e(x)$ & Propensity score: $P(A=1 \mid X=x)$. \\
    $\mu_a(x)$ & Outcome regression: $\mathbb{E}[Y \mid A=a, X=x]$. \\
\end{longtable}

\bibliographystyle{plainnat}
\bibliography{refs-eng}

@book{amari2000methods,
  title={Methods of information geometry},
  author={Amari, Shun-ichi and Nagaoka, Hiroshi},
  volume={191},
  year={2000},
  publisher={American Mathematical Soc.}
}

@article{belloni2014pds,
  author  = {Belloni, Alexandre and Chernozhukov, Victor and Hansen, Christian},
  title   = {Inference on Treatment Effects after Selection among High-Dimensional Controls},
  journal = {The Review of Economic Studies},
  year    = {2014},
  volume  = {81},
  number  = {2},
  pages   = {608--650},
  doi     = {10.1093/restud/rdt044}
}

@article{zhang2014jssb,
  author  = {Zhang, Cun-Hui and Zhang, Stephanie S.},
  title   = {Confidence Intervals for Low Dimensional Parameters in High Dimensional Linear Models},
  journal = {Journal of the Royal Statistical Society: Series B (Statistical Methodology)},
  year    = {2014},
  volume  = {76},
  number  = {1},
  pages   = {217--242},
  doi     = {10.1111/rssb.12026}
}

@article{vandegeer2014aos,
  author  = {van de Geer, Sara and B\"{u}hlmann, Peter and Ritov, Ya'acov and Dezeure, Ruben},
  title   = {On Asymptotically Optimal Confidence Regions and Tests for High-Dimensional Models},
  journal = {The Annals of Statistics},
  year    = {2014},
  volume  = {42},
  number  = {3},
  pages   = {1166--1202},
  doi     = {10.1214/14-AOS1221}
}

@article{panaretos2019statistical,
  title   = {Statistical Aspects of Wasserstein Distances},
  author  = {Panaretos, Victor M. and Zemel, Yoav},
  journal = {Annual Review of Statistics and Its Application},
  year    = {2019},
  volume  = {6},
  pages   = {405--431},
  doi     = {10.1146/annurev-statistics-030718-104938}
}

@article{peyre2019computational,
  title   = {Computational Optimal Transport},
  author  = {Peyr{\'e}, Gabriel and Cuturi, Marco},
  journal = {Foundations and Trends in Machine Learning},
  year    = {2019},
  volume  = {11},
  pages   = {355--607},
  note    = {no.~5--6},
  doi     = {10.1561/2200000073}
}

@inproceedings{arjovsky2017wgan,
  title     = {Wasserstein Generative Adversarial Networks},
  author    = {Arjovsky, Martin and Chintala, Soumith and Bottou, L{\'e}on},
  booktitle = {Proceedings of the 34th International Conference on Machine Learning (ICML)},
  series    = {Proceedings of Machine Learning Research},
  volume    = {70},
  pages     = {214--223},
  year      = {2017},
  publisher = {PMLR}
}

@inproceedings{cuturi2013sinkhorn,
  title     = {Sinkhorn Distances: Lightspeed Computation of Optimal Transport},
  author    = {Cuturi, Marco},
  booktitle = {Advances in Neural Information Processing Systems},
  year      = {2013},
  volume    = {26},
  pages     = {2292--2300}
}

@article{kang2007demystifying,
  title={Demystifying double robustness: A comparison of alternative strategies for estimating a population mean from incomplete data},
  author={Kang, Joseph DY and Schafer, Joseph L},
  journal = {Statistical Science},
  year={2007},
  volume={22},
  page={523-539}
}

@article{lipman2022flowmatching,
  title   = {Flow Matching for Generative Modeling},
  author  = {Lipman, Yaron and Chen, Ricky T. Q. and Ben-Hamu, Heli and Nickel, Maximilian and Le, Matt},
  journal = {arXiv preprint arXiv:2210.02747},
  year    = {2022},
  url     = {https://arxiv.org/abs/2210.02747}
}

@article{tong2023cfm,
  title   = {Improving and Generalizing Flow-Based Generative Models with Minibatch Optimal Transport},
  author  = {Tong, Alexander and Malkin, Nikolay and Huguet, Guillaume and Zhang, Yanlei and Rector-Brooks, Jarrid and Fatras, Kilian and Wolf, Guy and Bengio, Yoshua},
  journal = {arXiv preprint arXiv:2302.00482},
  year    = {2023},
  url     = {https://arxiv.org/abs/2302.00482}
}

@article{liu2022rectified,
  title   = {Flow Straight and Fast: Learning to Generate and Transfer Data with Rectified Flow},
  author  = {Liu, Xingchao and Gong, Chengyue and Liu, Qiang},
  journal = {arXiv preprint arXiv:2209.03003},
  year    = {2022},
  url     = {https://arxiv.org/abs/2209.03003}
}

@article{song2020scoresde,
  title   = {Score-Based Generative Modeling through Stochastic Differential Equations},
  author  = {Song, Yang and Sohl-Dickstein, Jascha and Kingma, Diederik P. and Kumar, Abhishek and Ermon, Stefano and Poole, Ben},
  journal = {arXiv preprint arXiv:2011.13456},
  year    = {2020},
  url     = {https://arxiv.org/abs/2011.13456}
}

@inproceedings{grathwohl2019ffjord,
  title     = {FFJORD: Free-form Continuous Dynamics for Scalable Reversible Generative Models},
  author    = {Grathwohl, Will and Chen, Ricky T. Q. and Bettencourt, Jesse and Sutskever, Ilya and Duvenaud, David},
  booktitle = {International Conference on Learning Representations},
  year      = {2019},
  url       = {https://arxiv.org/abs/1810.01367}
}

@inproceedings{goodfellow2014gan,
  title     = {Generative Adversarial Nets},
  author    = {Goodfellow, Ian and Pouget-Abadie, Jean and Mirza, Mehdi and Xu, Bing and Warde-Farley, David and Ozair, Sherjil and Courville, Aaron and Bengio, Yoshua},
  booktitle = {Advances in Neural Information Processing Systems},
  volume    = {27},
  year      = {2014},
  pages     = {2672--2680}
}

@inproceedings{kingma2014auto,
  title     = {Auto-Encoding Variational Bayes},
  author    = {Kingma, Diederik P. and Welling, Max},
  booktitle = {International Conference on Learning Representations},
  year      = {2014},
  note      = {arXiv:1312.6114}
}

@inproceedings{rezende2015variational,
  title     = {Variational Inference with Normalizing Flows},
  author    = {Rezende, Danilo Jimenez and Mohamed, Shakir},
  booktitle = {International Conference on Machine Learning},
  year      = {2015},
  pages     = {1530--1538},
  publisher = {PMLR},
  note      = {arXiv:1505.05770}
}

@inproceedings{dinh2017realnvp,
  title     = {Density Estimation using Real {NVP}},
  author    = {Dinh, Laurent and Sohl-Dickstein, Jascha and Bengio, Samy},
  booktitle = {International Conference on Learning Representations},
  year      = {2017},
  note      = {arXiv:1605.08803}
}

@inproceedings{sohl2015deep,
  title     = {Deep Unsupervised Learning using Nonequilibrium Thermodynamics},
  author    = {Sohl-Dickstein, Jascha and Weiss, Eric and Maheswaranathan, Niru and Ganguli, Surya},
  booktitle = {International Conference on Machine Learning},
  year      = {2015},
  pages     = {2256--2265},
  publisher = {PMLR},
  note      = {arXiv:1503.03585}
}

@inproceedings{ho2020ddpm,
  title     = {Denoising Diffusion Probabilistic Models},
  author    = {Ho, Jonathan and Jain, Ajay and Abbeel, Pieter},
  booktitle = {Advances in Neural Information Processing Systems},
  volume    = {33},
  year      = {2020},
  pages     = {6840--6851},
  note      = {arXiv:2006.11239}
}

@inproceedings{song2021scorebased,
  title     = {Score-Based Generative Modeling through Stochastic Differential Equations},
  author    = {Song, Yang and Sohl-Dickstein, Jascha and Kingma, Diederik P. and Kumar, Abhishek and Ermon, Stefano and Poole, Ben},
  booktitle = {International Conference on Learning Representations},
  year      = {2021},
  note      = {arXiv:2011.13456}
}

@inproceedings{chen2018neuralode,
  title     = {Neural Ordinary Differential Equations},
  author    = {Chen, Ricky T. Q. and Rubanova, Yulia and Bettencourt, Jesse and Duvenaud, David},
  booktitle = {Advances in Neural Information Processing Systems},
  volume    = {31},
  year      = {2018},
  note      = {arXiv:1806.07366}
}

@article{hyvarinen2005score,
  title   = {Estimation of Non-Normalized Statistical Models by Score Matching},
  author  = {Hyv{\"a}rinen, Aapo},
  journal = {Journal of Machine Learning Research},
  volume  = {6},
  pages   = {695--709},
  year    = {2005}
}

@article{vincent2011connection,
  title   = {A Connection Between Score Matching and Denoising Autoencoders},
  author  = {Vincent, Pascal},
  journal = {Neural Computation},
  volume  = {23},
  number  = {7},
  pages   = {1661--1674},
  year    = {2011},
  doi     = {10.1162/NECO_a_00142}
}

@inproceedings{james1961estimation,
  title     = {Estimation with Quadratic Loss},
  author    = {James, William and Stein, Charles},
  booktitle = {Proceedings of the Fourth Berkeley Symposium on Mathematical Statistics and Probability},
  volume    = {1},
  pages     = {361--379},
  year      = {1961}
}

@article{stein1981estimation,
  title   = {Estimation of the Mean of a Multivariate Normal Distribution},
  author  = {Stein, Charles},
  journal = {The Annals of Statistics},
  volume  = {9},
  number  = {6},
  pages   = {1135--1151},
  year    = {1981}
}

@inproceedings{lipman2023flowmatching,
  title     = {Flow Matching for Generative Modeling},
  author    = {Lipman, Yaron and Chen, Ricky T. Q. and Ben-Hamu, Heli and Nickel, Maximilian and Le, Matt},
  booktitle = {International Conference on Learning Representations},
  year      = {2023},
  note      = {arXiv:2210.02747}
}

@article{lipman2024flowmatchingguide,
  title   = {Flow Matching Guide and Code},
  author  = {Lipman, Yaron and Havasi, M{\'a}rton and Holderrieth, Peter and Shaul, Neta and Le, Matt and Karrer, Brian and Chen, Ricky T. Q. and L{\'o}pez-Paz, David and Ben-Hamu, Heli and Gat, Itai},
  journal = {arXiv preprint},
  year    = {2024},
  note    = {arXiv:2412.06264}
}

@article{tong2023improving,
  title   = {Improving and Generalizing Flow-Based Generative Models with Minibatch Optimal Transport},
  author  = {Tong, Alexander and Malkin, Nikolay and Fatras, Kilian and Ahuja, Sarthak and Zhang, Tong and Wolf, Guy and Bengio, Yoshua},
  journal = {Transactions on Machine Learning Research},
  year    = {2023},
  note    = {arXiv:2302.00482}
}

@article{chen2023geom,
  title   = {Riemannian Flow Matching on General Geometries},
  author  = {Chen, Ricky T. Q. and others},
  journal = {arXiv preprint},
  year    = {2023},
  note    = {arXiv:2302.03660}
}

@article{albergo2025stochastic,
  title   = {Stochastic Interpolants: A Unifying Framework for Flows and Diffusions},
  author  = {Albergo, Michael S. and Vanden-Eijnden, Eric},
  journal = {Journal of Machine Learning Research},
  year    = {2025}
}

@article{benamou2000computational,
  title   = {A Computational Fluid Mechanics Solution to the Monge-Kantorovich Mass Transfer Problem},
  author  = {Benamou, Jean-David and Brenier, Yann},
  journal = {SIAM Journal on Scientific Computing},
  volume  = {22},
  number  = {1},
  pages   = {1--23},
  year    = {2000},
  doi     = {10.1137/S1064827597327209}
}

@book{villani2009optimal,
  title     = {Optimal Transport: Old and New},
  author    = {Villani, C{\'e}dric},
  publisher = {Springer},
  year      = {2009},
  series    = {Grundlehren der mathematischen Wissenschaften},
  volume    = {338},
  doi       = {10.1007/978-3-540-71050-9}
}

@article{eguchi2025robust,
  title={Robust inference using density-powered Stein operators},
  author={Eguchi, Shinto},
  journal={arXiv preprint arXiv:2511.03963},
  year={2025}
}

@article{huk2025diffusion,
  title={Diffusion and Flow-based Copulas: Forgetting and Remembering Dependencies},
  author={Huk, David and Damoulas, Theodoros},
  journal={arXiv preprint arXiv:2509.19707},
  year={2025}
}

@incollection{oksendal2003stochastic,
  title={Stochastic differential equations},
  author={{\O}ksendal, Bernt},
  booktitle={Stochastic differential equations: an introduction with applications},
  pages={38--50},
  year={2003},
  publisher={Springer}
}

@article{Rubin1976,
  title={Inference and missing data},
  author={Rubin, Donald B},
  journal={Biometrika},
  volume={63},
  number={3},
  pages={581--592},
  year={1976}
}

@book{LittleRubin2019,
  title={Statistical Analysis with Missing Data},
  author={Little, Roderick J A and Rubin, Donald B},
  edition={3rd},
  year={2019},
  publisher={John Wiley \& Sons}
}

@article{Blanchet2019,
  title={Robust Wasserstein profile inference and applications to machine learning},
  author={Blanchet, Jose and Kang, Yang and Murthy, Karthyek},
  journal={Journal of Applied Probability},
  volume={56},
  number={3},
  pages={830--857},
  year={2019}
}

@inproceedings{Muzellec2020,
  title={Missing data imputation using optimal transport},
  author={Muzellec, Boris and Josse, Julie and Boyer, Claire and Cuturi, Marco},
  booktitle={International Conference on Machine Learning},
  pages={7130--7140},
  year={2020},
  organization={PMLR}
}

@book{Rubin1987,
  title={Multiple Imputation for Nonresponse in Surveys},
  author={Rubin, Donald B},
  year={1987},
  publisher={John Wiley \& Sons}
}

@article{Duchi2021,
  title={Statistics of robust optimization: A generalized empirical likelihood approach},
  author={Duchi, John C and Namkoong, Hongseok},
  journal={Mathematics of Operations Research},
  volume={46},
  number={3},
  pages={946--969},
  year={2021},
  publisher={INFORMS}
}

@inproceedings{liu2016kernelized,
  title={A Kernelized {S}tein Discrepancy for Goodness-of-fit Tests},
  author={Liu, Qiang and Lee, Jason and Jordan, Michael},
  booktitle={International Conference on Machine Learning},
  pages={276--284},
  year={2016},
  organization={PMLR}
}

@inproceedings{chwialkowski2016kernel,
  title={A Kernel Test of Goodness of Fit},
  author={Chwialkowski, Kacper and Strathmann, Heiko and Gretton, Arthur},
  booktitle={International Conference on Machine Learning},
  pages={2606--2615},
  year={2016},
  organization={PMLR}
}

@inproceedings{gorham2015measuring,
  title={Measuring Sample Quality with {S}tein's Method},
  author={Gorham, Jackson and Mackey, Lester},
  booktitle={Advances in Neural Information Processing Systems},
  volume={28},
  year={2015}
}

@inproceedings{liu2016stein,
  title={{S}tein Variational Gradient Descent: A General Purpose {B}ayesian Inference Algorithm},
  author={Liu, Qiang and Wang, Dilin},
  booktitle={Advances in Neural Information Processing Systems},
  volume={29},
  year={2016}
}

@article{goldfeld2024statistical,
  author  = {Goldfeld, Ziv and Balakrishnan, Sivaraman and Munk, Axel and Niles-Weed, Jonathan},
  title   = {Statistical Inference for Optimal Transport Maps: Recent Advances and Perspectives},
  journal = {Statistical Science},
  year    = {2024},
  note    = {In press / Published online},
  doi     = {10.1214/23-STS912}
}

@article{Cox1972,
  author  = {Cox, D. R.},
  title   = {Regression Models and Life-Tables},
  journal = {Journal of the Royal Statistical Society: Series B (Methodological)},
  year    = {1972},
  volume  = {34},
  number  = {2},
  pages   = {187--220},
  doi     = {10.1111/j.2517-6161.1972.tb00899.x}
}

@article{Cox1975,
  author  = {Cox, D. R.},
  title   = {Partial likelihood},
  journal = {Biometrika},
  year    = {1975},
  volume  = {62},
  number  = {2},
  pages   = {269--276},
  doi     = {10.1093/biomet/62.2.269}
}

@article{AndersenGill1982,
  author  = {Andersen, P. K. and Gill, R. D.},
  title   = {Cox's Regression Model for Counting Processes: A Large Sample Study},
  journal = {The Annals of Statistics},
  year    = {1982},
  volume  = {10},
  number  = {4},
  pages   = {1100--1120},
  doi     = {10.1214/aos/1176345976}
}

@book{FlemingHarrington1991,
  author    = {Fleming, Thomas R. and Harrington, David P.},
  title     = {Counting Processes and Survival Analysis},
  publisher = {John Wiley \& Sons},
  address   = {New York},
  year      = {1991},
  isbn      = {047152218X}
}

@book{TherneauGrambsch2000,
  author    = {Therneau, Terry M. and Grambsch, Patricia M.},
  title     = {Modeling Survival Data: Extending the Cox Model},
  publisher = {Springer},
  address   = {New York},
  year      = {2000},
  doi       = {10.1007/978-1-4757-3294-8}
}

@book{vanDerVaart1998,
  author    = {van der Vaart, A. W.},
  title     = {Asymptotic Statistics},
  publisher = {Cambridge University Press},
  address   = {Cambridge},
  year      = {1998}
}

@book{Tsiatis2006,
  author    = {Tsiatis, Anastasios A.},
  title     = {Semiparametric Theory and Missing Data},
  publisher = {Springer},
  address   = {New York},
  year      = {2006},
  doi       = {10.1007/0-387-37345-4},
  isbn      = {978-0-387-32448-7}
}

@article{chernozhukov2018ddml,
  author  = {Chernozhukov, Victor and Chetverikov, Denis and Demirer, Mert and Duflo, Esther and Hansen, Christian and Newey, Whitney and Robins, James},
  title   = {Double/debiased machine learning for treatment and structural parameters},
  journal = {The Econometrics Journal},
  year    = {2018},
  volume  = {21},
  number  = {1},
  pages   = {C1--C68},
  doi     = {10.1111/ectj.12097}
}

@book{arnold_1989_mmc,
  title     = {Mathematical Methods of Classical Mechanics},
  author    = {Arnold, Vladimir I.},
  publisher = {Springer},
  year      = {1989},
  edition   = {2},
  series    = {Graduate Texts in Mathematics},
  volume    = {60}
}

@book{marsden_ratiu_1999_mech_sym,
  title     = {Introduction to Mechanics and Symmetry: A Basic Exposition of Classical Mechanical Systems},
  author    = {Marsden, Jerrold E. and Ratiu, Tudor S.},
  publisher = {Springer},
  year      = {1999},
  edition   = {2},
  series    = {Texts in Applied Mathematics},
  volume    = {17}
}

@article{murphy2003dtr,
  title   = {Optimal dynamic treatment regimes},
  author  = {Murphy, Susan A.},
  journal = {Journal of the Royal Statistical Society: Series B (Statistical Methodology)},
  volume  = {65},
  number  = {2},
  pages   = {331--355},
  year    = {2003},
  doi     = {10.1111/1467-9868.00389}
}

@article{robins2000msm,
  title   = {Marginal structural models and causal inference in epidemiology},
  author  = {Robins, James M. and Hern{\'a}n, Miguel A. and Brumback, Babette},
  journal = {Epidemiology},
  volume  = {11},
  number  = {5},
  pages   = {550--560},
  year    = {2000},
  doi     = {10.1097/00001648-200009000-00011}
}

@book{hernanrobins2020whatif,
  title     = {Causal Inference: What If},
  author    = {Hern{\'a}n, Miguel A. and Robins, James M.},
  publisher = {Chapman \& Hall/CRC},
  address   = {Boca Raton},
  year      = {2020}
}

@book{chakraborty2013dtr,
  title     = {Statistical Methods for Dynamic Treatment Regimes: Reinforcement Learning, Causal Inference, and Personalized Medicine},
  author    = {Chakraborty, Bibhas and Moodie, Erica E. M.},
  publisher = {Springer},
  address   = {New York},
  year      = {2013},
  doi       = {10.1007/978-1-4614-7428-9}
}

@book{suttonbarto2018rl,
  title     = {Reinforcement Learning: An Introduction},
  author    = {Sutton, Richard S. and Barto, Andrew G.},
  publisher = {The MIT Press},
  address   = {Cambridge, MA},
  year      = {2018}
}

@article{watkins1992qlearning,
  title   = {Q-learning},
  author  = {Watkins, Christopher J. C. H. and Dayan, Peter},
  journal = {Machine Learning},
  volume  = {8},
  number  = {3--4},
  pages   = {279--292},
  year    = {1992},
  doi     = {10.1007/BF00992698}
}

@article{robins1986hws,
  title   = {A new approach to causal inference in mortality studies with a sustained exposure period---application to control of the healthy worker survivor effect},
  author  = {Robins, James M.},
  journal = {Mathematical Modelling},
  volume  = {7},
  number  = {9--12},
  pages   = {1393--1512},
  year    = {1986},
  doi     = {10.1016/0270-0255(86)90088-6}
}

@article{kunzel2019metalearners,
  title   = {Metalearners for estimating heterogeneous treatment effects using machine learning},
  author  = {K{\"u}nzel, S{\"o}ren R. and Sekhon, Jasjeet S. and Bickel, Peter J. and Yu, Bin},
  journal = {Proceedings of the National Academy of Sciences},
  volume  = {116},
  number  = {10},
  pages   = {4156--4165},
  year    = {2019},
  doi     = {10.1073/pnas.1804597116}
}

@article{nie2021quasioracle,
  title   = {Quasi-oracle estimation of heterogeneous treatment effects},
  author  = {Nie, Xinkun and Wager, Stefan},
  journal = {Biometrika},
  volume  = {108},
  number  = {2},
  pages   = {299--319},
  year    = {2021},
  doi     = {10.1093/biomet/asaa076}
}

@article{athey2019grf,
  title   = {Generalized random forests},
  author  = {Athey, Susan and Tibshirani, Julie and Wager, Stefan},
  journal = {The Annals of Statistics},
  volume  = {47},
  number  = {2},
  pages   = {1148--1178},
  year    = {2019},
  doi     = {10.1214/18-AOS1709}
}

@article{friedman2008sparse,
  title   = {Sparse inverse covariance estimation with the graphical lasso},
  author  = {Friedman, Jerome and Hastie, Trevor and Tibshirani, Robert},
  journal = {Biostatistics},
  volume  = {9},
  number  = {3},
  pages   = {432--441},
  year    = {2008},
  doi     = {10.1093/biostatistics/kxm045}
}

@article{chernozhukov2022locally,
  title   = {Locally Robust Semiparametric Estimation},
  author  = {Chernozhukov, Victor and Escanciano, Juan Carlos and Ichimura, Hidehiko and Newey, Whitney K. and Robins, James M.},
  journal = {Econometrica},
  volume  = {90},
  number  = {4},
  pages   = {1501--1535},
  year    = {2022},
  doi     = {10.3982/ECTA16294}
}

@article{white1982maximum,
  title={Maximum likelihood estimation of misspecified models},
  author={White, Halbert},
  journal={Econometrica: Journal of the econometric society},
  pages={1--25},
  year={1982},
  publisher={JSTOR}
}

@book{Joe2014,
  author    = {Harry Joe},
  title     = {Dependence Modeling with Copulas},
  publisher = {Chapman \& Hall/CRC},
  year      = {2014},
  series    = {Chapman \& Hall/CRC Monographs on Statistics and Applied Probability},
  isbn      = {9781466583221},
  doi       = {10.1201/b17116}
}

@article{KamtheAssefaDeisenroth2021,
  author       = {Sanket Kamthe and Samuel Assefa and Marc Peter Deisenroth},
  title        = {Copula Flows for Synthetic Data Generation},
  journal      = {CoRR},
  volume       = {abs/2101.00598},
  year         = {2021},
  eprint       = {2101.00598},
  archivePrefix= {arXiv},
  primaryClass = {stat.ML}
}

@article{MitskopoulosAmvrosiadisOnken2022,
  author  = {Lazaros Mitskopoulos and Theoklitos Amvrosiadis and Arno Onken},
  title   = {Mixed vine copula flows for flexible modeling of neural dependencies},
  journal = {Frontiers in Neuroscience},
  volume  = {16},
  pages   = {910122},
  year    = {2022},
  doi     = {10.3389/fnins.2022.910122}
}

@book{Villani2009,
  author    = {C{\'e}dric Villani},
  title     = {Optimal Transport: Old and New},
  publisher = {Springer},
  year      = {2009},
  series    = {Grundlehren der mathematischen Wissenschaften},
  volume    = {338},
  doi       = {10.1007/978-3-540-71050-9}
}

@book{AmbrosioGigliSavare2008,
  author    = {Luigi Ambrosio and Nicola Gigli and Giuseppe Savar{\'e}},
  title     = {Gradient Flows: In Metric Spaces and in the Space of Probability Measures},
  edition   = {2},
  publisher = {Birkh{\"a}user Basel},
  year      = {2008},
  series    = {Lectures in Mathematics ETH Z{\"u}rich},
  doi       = {10.1007/978-3-7643-8722-8}
}

@book{CoddingtonLevinson1955,
  author    = {Earl A. Coddington and Norman Levinson},
  title     = {Theory of Ordinary Differential Equations},
  publisher = {McGraw--Hill},
  year      = {1955}
}

@book{vdVWellner1996,
  author    = {Aad W. van der Vaart and Jon A. Wellner},
  title     = {Weak Convergence and Empirical Processes: With Applications to Statistics},
  publisher = {Springer},
  year      = {1996},
  series    = {Springer Series in Statistics},
  doi       = {10.1007/978-1-4757-2545-2}
}

@article{BartlettMendelson2002,
  author  = {Peter L. Bartlett and Shahar Mendelson},
  title   = {Rademacher and Gaussian Complexities: Risk Bounds and Structural Results},
  journal = {Journal of Machine Learning Research},
  year    = {2002},
  volume  = {3},
  pages   = {463--482}
}

@article{ShenWong1994,
  author  = {Xiaotong Shen and Wing Hung Wong},
  title   = {Convergence Rate of Sieve Estimates},
  journal = {The Annals of Statistics},
  year    = {1994},
  volume  = {22},
  number  = {2},
  pages   = {580--615}
}

@incollection{Chen2007,
  author    = {Xiaohong Chen},
  title     = {Large Sample Sieve Estimation of Semi-Nonparametric Models},
  booktitle = {Handbook of Econometrics},
  editor    = {James J. Heckman and Edward E. Leamer},
  volume    = {6B},
  chapter   = {76},
  pages     = {5549--5632},
  publisher = {Elsevier},
  year      = {2007},
  doi       = {10.1016/S1573-4412(07)06076-X}
}

@article{ChenShen1998,
  author  = {Xiaohong Chen and Xiaotong Shen},
  title   = {Sieve Extremum Estimates for Weakly Dependent Data},
  journal = {Econometrica},
  year    = {1998},
  volume  = {66},
  number  = {2},
  pages   = {289--314}
}

@article{eguchi2025implicit,
  title={Implicit geometric regularization in flow matching via density weighted Stein operators},
  author={Eguchi, Shinto},
  journal={arXiv preprint arXiv:2512.23956},
  year={2025}
}

@book{inagaki2003,
  title     = {数理統計学},
  author    = {稲垣, 宣生},
  publisher = {裳華房},
  year      = {2003},
  isbn       = {978-4-7853-1406-4}
}

@book{Sato2023,
  author    = {佐藤, 竜馬},
  title     = {最適輸送の理論とアルゴリズム},
  publisher = {講談社},
  year      = {2023},
  series    = {機械学習プロフェッショナルシリーズ},
  isbn      = {978-4-06-530514-0}
}

@book{yanase2015probability,
title={確率と確率過程: 具体例で学ぶ確率論の考え方},
author={柳瀬, 眞一郎},
publisher={森北出版},
year={2015},
isbn={9784627061811}
}

\clearpage
\printindex
\end{document}